\documentclass{article}

\usepackage{silence}
\usepackage{microtype}
\usepackage{graphicx}
\usepackage{subcaption}
\usepackage{booktabs} 

\usepackage{hyperref}

\usepackage[preprint]{icml2026}

\usepackage{amsmath,amssymb,amsfonts}
\usepackage{amsthm}
\usepackage{mathtools}
\usepackage{mathrsfs}
\usepackage{xcolor}
\usepackage{textcomp}
\usepackage{algorithm}
\usepackage{algorithmic}
\usepackage{float}
\usepackage{siunitx}
\usepackage{multirow}
\usepackage{makecell}
\usepackage{enumitem}  
\usepackage{bm}
\usepackage{threeparttable}

\usepackage[utf8]{inputenc}
\usepackage[T1]{fontenc}
\usepackage{nicefrac}

\usepackage[capitalize,noabbrev]{cleveref}

\allowdisplaybreaks

\theoremstyle{plain}
\newtheorem{theorem}{Theorem}[section]
\newtheorem{proposition}[theorem]{Proposition}
\newtheorem{lemma}[theorem]{Lemma}
\newtheorem{corollary}[theorem]{Corollary}
\theoremstyle{definition}
\newtheorem{definition}[theorem]{Definition}

\theoremstyle{remark}

\newcommand{\vect}[1]{\bm{#1}} 

\newcommand{\MI}{\text{MI}} 

\newcommand{\RQ}{R_Q}
\newcommand{\RI}{R_I}
\newcommand{\RC}{R_C}
\newcommand{\RS}{R_S}
\newcommand{\RN}{R_N}
\newcommand{\RV}{R_V}
\newcommand{\RP}{R_P} 

\newcommand{\qorth}{q_{\text{orth}}}
\newcommand{\qsem}{q_{\text{sem}}}

\newcommand{\qsnr}{q_{\text{snr}}}
\newcommand{\qliq}{q_{\text{liq}}}

\icmltitlerunning{Unlocking Noisy Real-World Corpora for Foundation Model Pre-Training}

\hypersetup{pdftitle={Unlocking Noisy Real-World Corpora for Foundation Model Pre-Training via Quality-Aware Tokenization}}

\begin{document}

\twocolumn[
  \icmltitle{Unlocking Noisy Real-World Corpora for Foundation Model Pre-Training \\
    via Quality-Aware Tokenization}


  \icmlsetsymbol{equal}{*}

  \begin{icmlauthorlist}
    \icmlauthor{Arvid E. Gollwitzer}{broad,mit,anto}
    \icmlauthor{Paridhi Latawa}{mit}
    \icmlauthor{David de Gruijl}{anto}
    \icmlauthor{Deepak A. Subramanian}{broad,koch}
    \icmlauthor{Adri\'an Noriega de la Colina}{broad,mit,mcgill,mni}
  \end{icmlauthorlist}

  \icmlaffiliation{broad}{Broad Institute of MIT and Harvard, Cambridge, MA, USA}
  \icmlaffiliation{mit}{Massachusetts Institute of Technology, Cambridge, MA, USA}
  \icmlaffiliation{anto}{Anto Biosciences (YC F25)}
  \icmlaffiliation{koch}{Koch Institute for Integrative Cancer Research, MIT, Cambridge, MA, USA}
  \icmlaffiliation{mcgill}{Department of Neurology and Neurosurgery, McGill University, Montreal, Canada}
  \icmlaffiliation{mni}{The Montreal Neurological Hospital-Institute, Montreal, Canada}

  \icmlcorrespondingauthor{Arvid E. Gollwitzer}{arvidg@mit.edu}

  \icmlkeywords{Tokenization, Foundation Models, Noisy Data, Reinforcement Learning, Genomics, Finance}

  \vskip 0.3in
]

\printAffiliationsAndNotice{}  

\begin{abstract}
Current tokenization methods process sequential data without accounting for signal quality, limiting their effectiveness on noisy real-world corpora. We present \textit{QA-Token (Quality-Aware Tokenization)}, which incorporates data reliability directly into vocabulary construction. We make three key contributions: (i) a bilevel optimization formulation that jointly optimizes vocabulary construction and downstream performance, (ii) a reinforcement learning approach that learns merge policies through quality-aware rewards with convergence guarantees, and (iii) an adaptive parameter learning mechanism via Gumbel-Softmax relaxation for end-to-end optimization. Our experimental evaluation demonstrates consistent improvements: \textit{genomics} (6.7 percentage point F1 gain in variant calling over BPE), \textit{finance} (30\% Sharpe ratio improvement). At foundation scale, re-tokenizing METAGENE-1's 1.7 trillion base-pair corpus achieves state-of-the-art pathogen detection (94.53 MCC) while reducing token count by 15\%. We unlock noisy real-world corpora, spanning petabases of genomic sequences and terabytes of financial time series, for foundation model training with zero inference overhead.
\end{abstract}

\section{Introduction}

Tokenization serves as the interface between raw data and neural computation. Current methods such as Byte-Pair Encoding (BPE) \cite{sennrich2016neural} rely exclusively on frequency statistics, assuming that occurrence frequency correlates with semantic importance. This assumption fails when data quality varies significantly---from sequencing errors in genomics \cite{ewing1998base} to microstructure noise in financial markets \cite{andersen2001distribution}. Models trained on noisy corpora using frequency-based tokenization inherit these errors, resulting in degraded performance---an effect now formalized through quality-aware scaling laws~\cite{subramanyam2025scaling}.

The problem is substantial: error rates in third-generation sequencing exceed 10\% \cite{wenger2019accurate}, yet current tokenizers treat high-confidence and error-prone regions identically. In finance, over 40\% of high-frequency data contains microstructure noise \cite{hansen2006realized}, but tokenization methods do not distinguish signal quality. This limitation constrains foundation model training on real-world data.

The scale of available biological data amplifies this challenge. Public sequence repositories now contain over 67 petabase pairs (Pbp) of raw sequencing data, with the European Nucleotide Archive doubling approximately every 45 months \cite{karasikov2025metagraph}. Recent advances in efficient indexing have made these petabase-scale archives full-text searchable at costs as low as \$0.74 per queried megabase pair, demonstrating that the infrastructure for large-scale sequence analysis is maturing rapidly. However, a substantial fraction of this data remains underutilized for foundation model training due to quality heterogeneity---standard frequency-based tokenization methods either discard low-quality reads entirely or propagate sequencing errors into learned representations. This gap between data availability and usability motivates a fundamental rethinking of how tokenization handles quality variation.

We present \textbf{Quality-Aware Tokenization (QA-Token)}, a framework that incorporates data quality into vocabulary construction. We make three key contributions:

\textbf{1. Bilevel Optimization with Complexity Analysis:} We formalize tokenization as a bilevel optimization problem (\cref{def:bilevel_problem}) that jointly optimizes vocabulary construction and downstream performance. We show this problem is NP-hard (\cref{thm:complexity}) and develop a principled approximation scheme with theoretical guarantees.

\textbf{2. Reinforcement Learning with Convergence Guarantees:} We cast vocabulary construction as a Markov Decision Process (\cref{def:tokenization_mdp}) and employ reinforcement learning to discover optimal merge policies. We provide formal convergence analysis (\cref{prop:mdp_wellformed}) and achieve $(1-1/e)$-approximation to the optimal adaptive policy.

\textbf{3. Differentiable Parameter Learning:} Through Gumbel-Softmax relaxation (\cref{thm:gumbel_softmax}), we enable end-to-end learning of quality sensitivity parameters, with proven consistency and bounded gradients (\cref{prop:gumbel_gradients}).

We show that QA-Token achieves information-theoretic optimality under noisy conditions (\cref{thm:qa_bottleneck}), providing formal justification for quality-aware tokenization. Our evaluation shows 30\% higher Sharpe ratios in algorithmic trading, 6.7 percentage point improvement in genomic variant calling F1 (0.891 vs.\ 0.824 for BPE), and state-of-the-art performance when integrated into 7B-parameter foundation models.

\textbf{Core Contributions:} (i) We derive a quality-aware merge score (\cref{thm:merge_score}) balancing frequency, quality, and domain constraints with learnable sensitivity $\alpha$ (\cref{app:merge_score_proof}). (ii) We formulate vocabulary construction as an MDP (\cref{def:tokenization_mdp}, \cref{app:mdp_details}) achieving $(1-1/e)$-approximation through adaptive submodularity. (iii) Gumbel-Softmax relaxation enables end-to-end parameter learning with $O(1/\sqrt{T})$ convergence rate (\cref{prop:adaptive_convergence}, \cref{app:convergence_proof}). (iv) Domain-specific instantiations achieve state-of-the-art performance across 15+ benchmarks.

Our analysis shows that incorporating quality signals into tokenization enables training on noisy corpora where frequency-based methods fail, expanding the range of usable training data for foundation models with broader scientific and economic implications (\cref{sec:scientific_impact}).

\section{Quality Metrics for Noisy Domains}
\label{sec:quality_metrics}

Quality metrics must satisfy three formal properties to enable principled integration into the merge score: (i)~\emph{boundedness} ($q \in [0,1]$) ensuring numerical stability, (ii)~\emph{Lipschitz continuity} enabling stable gradient computation during adaptive learning, and (iii)~\emph{monotonicity under noise injection} (higher noise yields lower quality) ensuring semantic consistency. We prove these properties hold for our domain-specific instantiations (\cref{prop:quality_bounded}, \cref{app:quality_proofs}).

\textbf{Genomics:} We leverage Phred scores with position-adjusted decay: $q'_{s_j} = q_{s_j} \cdot \exp(-\beta_{\text{pos}} \cdot j/L)$, where $\beta_{\text{pos}}$ is learned and $L$ is read length. Token quality is aggregated via geometric mean $q_t = (\prod_{j=1}^{|t|} q'_{s_j})^{1/|t|}$ to ensure sensitivity to low-quality regions---a single unreliable base compromises the entire token (Eq.~\ref{eq:genomic_token_quality_app}, \cref{app:quality_metrics_full}).

\textbf{Finance:} We combine four market microstructure dimensions: (i)~liquidity $q_{\text{liq}}$ (bid-ask spread, depth), (ii)~signal quality $q_{\text{sig}}$ (SNR of price changes), (iii)~stability $q_{\text{stb}}$ (volatility regime), and (iv)~information content $q_{\text{info}}$ (order flow informativeness). The composite score $q_t^{\text{finance}} = \sum_k w_k q_{k,t}$ uses learned weights; arithmetic mean aggregation reflects additive noise characteristics of financial data (Eq.~\ref{eq:finance_composite_quality_app}, \cref{app:quality_metrics_full}).

With quality metrics defined, we now formalize how they integrate into the tokenization objective.

\section{Mathematical Formulation of QA-Token}
\subsection{Notation and Setup}
Let $\mathcal{S} = \{S_1, S_2, \dots, S_N\}$ represent a corpus comprising $N$ sequences, where each sequence $S_k = (s_{k,1}, \dots, s_{k,n_k})$ consists of elements drawn from a base alphabet $\Sigma$. Each atomic element $s_{k,i}$ is associated with a normalized quality score $q_{k,i} \in [0, 1]$ as defined in \cref{sec:quality_metrics}. The initial vocabulary is defined as $V_0 = \Sigma$. At any step $k$ of the tokenization process, $V_k$ denotes the current vocabulary. For any token $a \in V_k$, we denote its frequency in the corpus as $f(a)$, and for an adjacent pair $(a, b)$, their co-occurrence frequency is $f(a, b)$. The length of a token $t$ in atomic units is $|t|$. Let $q_t$ be the aggregated scalar quality of token $t$, computed using domain-specific aggregation functions (see \cref{app:quality_metrics_full}).

\subsection{Formal Problem Definition and Objective}
\label{subsec:formal_problem}
We formalize tokenization as finding a tokenizer $\mathcal{T}$ that maximizes objective $\mathcal{J}$, balancing downstream task performance, vocabulary complexity, and data reliability. Let $\mathcal{S} = \{S_1, S_2, \ldots, S_N\}$ denote a corpus of $N$ sequences sampled from an underlying data distribution $\mathcal{P}_{\text{data}}$, where each $S_k = (s_{k,1}, \ldots, s_{k,n_k})$ consists of elements from base alphabet $\Sigma$. A tokenizer $\mathcal{T}: \mathcal{S} \rightarrow \mathcal{Z}$ maps the corpus to segmentations $\mathcal{Z} = \{Z_1, \ldots, Z_N\}$ using vocabulary $V$.

\begin{definition}[Bilevel Tokenization Problem]
\label{def:bilevel_problem}
The optimal quality-aware tokenization problem is formulated as the following bilevel optimization:
\begin{equation}
\label{eq:bilevel_objective_main}
\begin{split}
\max_{\mathcal{T}\in\mathcal{G}(K)}\; \mathcal{J}(\mathcal{T}) \;:=\;&\; \lambda_{\text{LM}}\, \mathcal{L}_{\text{LM}}(\mathcal{T}) - \lambda_{\text{comp}}\,\Phi(V)\\
&\; +\; \lambda_{\text{qual}}\, Q(V,\mathcal{Z}),
\end{split}
\end{equation}
where the language model performance is:
\begin{equation}
\label{eq:lm_performance}
\mathcal{L}_{\text{LM}}(\mathcal{T}) = \max_{\theta \in \Theta} \mathbb{E}_{\mathcal{D} \sim \mathcal{P}_{\text{data}}}[\log p_{\theta}(\mathcal{D} | \mathcal{T})],
\end{equation}
and $\mathcal{G}(K) = \{\mathcal{T} : |V_{\mathcal{T}}| - |\Sigma| \leq K\}$ denotes the set of tokenizers reachable by at most $K$ merge operations from base alphabet $\Sigma$, with $\Theta$ being the parameter space of the language model.
\end{definition}

The objective $\mathcal{J}$ balances three components: (i) downstream performance $\mathcal{L}_{\text{LM}}(\mathcal{T})$ maximizing expected log-likelihood, (ii) complexity penalty $\Phi(V) = |V| \log |V| + \sum_{t \in V} |t| \cdot H(t)$ following MDL principles \cite{rissanen1978modeling}---the first term penalizes vocabulary size (description length of token indices), while the second penalizes internal token complexity via the empirical entropy $H(t) = -\sum_{\sigma \in \Sigma} \frac{n_\sigma(t)}{|t|} \log \frac{n_\sigma(t)}{|t|}$ of atomic elements within token $t$ (with $n_\sigma(t)$ the count of element $\sigma$; $H(t)=0$ for single-element tokens), and (iii) reliability reward $Q(V,\mathcal{Z}) = \frac{1}{\sum_{k=1}^N |Z_k|}\sum_{k=1}^N\sum_{t\in Z_k} g(q_t)$ aggregating token qualities through concave function $g$.

The aggregator function $g$ exhibits concavity to capture diminishing returns for merging high-quality constituents. Throughout this work, we employ $g(x) = (x + \epsilon_Q)^\alpha$ with $0 < \alpha < 1$ (strictly concave) and $\epsilon_Q = 10^{-8}$ for numerical stability. The boundary case $\alpha = 1$ yields linear aggregation, which is appropriate when quality contributions are additive rather than subject to diminishing returns.

\begin{theorem}[Computational Complexity]
\label{thm:complexity}
The bilevel optimization problem in Eq.~\ref{eq:bilevel_objective_main} is NP-hard in general \cite{dempe2020bilevel}; indeed, polynomial bilevel programming is $\Sigma_2^p$-hard \cite{cen2023global}, placing it one level above NP in the polynomial hierarchy. The worst case requires $O(|\Sigma|^K \cdot K! \cdot N \cdot n \cdot |\Theta|)$ evaluations (proof in \cref{app:complexity_proof}).
\end{theorem}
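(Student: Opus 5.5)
The plan has two parts: a hardness claim, proved by reduction, and a worst-case evaluation count, proved by counting. For hardness, the cleanest route avoids the learned inner maximization and shows that even the outer problem is already NP-hard. I would choose $\lambda_{\text{LM}}=0$, so only $-\lambda_{\text{comp}}\Phi(V)+\lambda_{\text{qual}}Q(V,\mathcal{Z})$ remains. I would then reduce from a known NP-hard combinatorial problem in which we choose $K$ merges to optimize a segmentation-dependent quantity. The most natural source is the known NP-hardness of optimal merge-sequence BPE-style tokenization (compression-optimal vocabulary selection), which in turn is reducible from Max-2-SAT or vertex cover.

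\textbf{Reduction.} Given a vertex cover instance $(G,k)$, I would build a corpus over an alphabet containing one symbol per vertex plus separators. Each edge $\{u,v\}$ becomes a short sequence whose elements carry quality scores. Those scores are set so that $g(q_t)$ is large only when the edge's sequence is covered by a merged token involving $u$ or $v$. I would take $K=k$, and fix $\Phi$ by making all candidate tokens have equal length and entropy, so that $\Phi$ is constant over the feasible set. Then $Q$ attains a threshold value iff $G$ has a vertex cover of size $k$. The strict concavity of $g(x)=(x+\epsilon_Q)^\alpha$ is used to separate the two cases by a gap that is polynomial in the encoding size.

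Since the problem with $\lambda_{\text{LM}}=0$ is a special case of Eq.~\ref{eq:bilevel_objective_main}, hardness transfers. The $\Sigma_2^p$ remark is then a citation: for general $\Theta$, the inner maximization $\mathcal{L}_{\text{LM}}$ is itself an arbitrary polynomial program, so the bilevel program contains the class shown $\Sigma_2^p$-hard in \cite{cen2023global}. I would state this as inherited from \cite{dempe2020bilevel,cen2023global} rather than re-prove it.

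\textbf{Evaluation count.} This is direct enumeration. At step $i$, a merge picks a pair from the current vocabulary, which gives at most $(|\Sigma|+i)^2$ choices. I would bound this coarsely, with the convention that merge candidates are indexed by base-alphabet extensions, to get $|\Sigma|^K$ vocabularies. Multiplying by the $K!$ orderings of merges (orders matter, because BPE segmentation is order-dependent) gives the number of distinct tokenizers. Each tokenizer is then charged $N\cdot n$ operations to segment the corpus, where $n=\max_k n_k$. Finally, the inner problem is charged $|\Theta|$ evaluations, treating $\Theta$ as a finite (discretized) parameter grid, to compute $\mathcal{L}_{\text{LM}}$ by exhaustive search. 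Multiplying these factors yields the stated bound.

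\textbf{Main obstacle.} The hardest step is making the reduction's quality gadget work when $Q$ is normalized by the total token count $\sum_k|Z_k|$. Merges change this denominator, so the objective is a ratio rather than a sum. I would first try to pad every edge sequence so that any feasible choice of $k$ merges yields the same total token count; this makes the denominator constant and turns $Q$ into a sum over edges. If padding cannot equalize it, I would instead bound the denominator's variation and choose quality values extreme enough (near $0$ versus $1$) that the threshold gap survives the normalization.
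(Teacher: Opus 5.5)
Your route differs from the paper's. The paper sets $\lambda_{\text{qual}}=0$ with uniform qualities, keeps $\mathcal{L}_{\text{LM}}$ and $\Phi$, and sketches a reduction from Weighted Set Cover with set costs carried by $\Phi$; the $\Sigma_2^p$ sentence and the evaluation count are only cited and asserted there. Zeroing $\lambda_{\text{LM}}$ instead is a sensible choice, since it removes an inner problem that is never specified precisely enough to encode anything. The genuine gap is that the vertex-cover gadget does not work as described.

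\textbf{Why the gadget fails.} First, atomic qualities are fixed and $q_t$ is a mean of them. So a merged token's value $g(q_{ab})$ is at most the larger constituent value, while both constituent values are at least $g(0)=\epsilon_Q^\alpha>0$. Every merge therefore strictly lowers the numerator $\sum g(q_t)$, and can raise $Q$ only through the denominator $\sum_k|Z_k|$. Whether it does depends on the current global average, so ``$g(q_t)$ large only when the edge is covered'' is not a local property you can set by choosing scores. Second, $Q$ is additive over tokens, so an edge with both endpoints merged scores more than one with a single endpoint. The at-least-one condition of vertex cover is then not a threshold on such a sum unless you add a saturation device, which you have not given. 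Third, $\Phi$ is not constant on $\mathcal{G}(K)$. At most $K$ merges are allowed, so $|V|$ ranges over $|\Sigma|,\dots,|\Sigma|+K$, and merged tokens can be merged again into longer tokens with different $|t|H(t)$; you cannot restrict the candidate set.

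\textbf{Repair.} Set $\lambda_{\text{comp}}=0$ as well and embed the compression-optimal merge-selection problem you already cite, under the segmentation semantics that result uses. Give every atom of the instance strings a common quality $c_0$ (no positional decay). Add $M_1$ single-atom sequences of a fresh symbol with quality $c_1>c_0$. Every instance token then has the same value $\delta$ whatever its length. The single-atom sequences can never be merged, and each contributes $\gamma>\delta$. Hence $Q=\delta+(\gamma-\delta)\,M_1/(M_1+M_0)$ is strictly decreasing in the number $M_0$ of instance tokens, so an optimal tokenizer in $\mathcal{G}(K)$ is exactly an optimal merge sequence of length at most $K$. This needs no padding, removes your normalization obstacle, and uses only strict monotonicity of $g$, not concavity.

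\textbf{The $\Sigma_2^p$ transfer.} This part does not go through. Write $F(\theta,\mathcal{T})=\mathbb{E}[\log p_\theta(\mathcal{D}\mid\mathcal{T})]$ and $h$ for the remaining terms. The inner optimal value enters Eq.~\ref{eq:bilevel_objective_main} with nonnegative weight and the same sense as the outer problem, so $\max_{\mathcal{T}}\{\lambda_{\text{LM}}\max_{\theta}F(\theta,\mathcal{T})+h(\mathcal{T})\}=\max_{\mathcal{T},\theta}\{\lambda_{\text{LM}}F(\theta,\mathcal{T})+h(\mathcal{T})\}$. That is a purely existential problem, even when the inner maximization is hard. $\Sigma_2^p$-hardness of bilevel programming comes from scoring the leader at the follower's optimal response to a \emph{different} objective. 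This is an implicit universal quantifier, which the max--max structure here lacks. Moreover, $\mathcal{L}_{\text{LM}}$ is a likelihood fit, not an arbitrary polynomial program. The paper only cites $\Sigma_2^p$-hardness as a fact about general bilevel programs, and you should claim no more.

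\textbf{The evaluation count.} Replacing $\prod_{i<K}(|\Sigma|+i)^2$ by $|\Sigma|^K$ goes the wrong way, because that product is at least $|\Sigma|^{2K}$. For example, $|\Sigma|=4$ and $K=2$ already give $16\cdot 25=400$ ordered merge sequences, against $|\Sigma|^K K!=32$. Multiplying an already ordered count by $K!$ also double-counts orderings. The paper merely asserts the expression as a tally of factors. Either present it that way, or state an honest bound such as $(|\Sigma|+K)^{2K}\cdot KNn\cdot|\Theta|$.
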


Given this computational intractability, we develop a principled approximation scheme combining greedy merge selection with reinforcement learning, as detailed in subsequent sections.

\subsection{Quality-Aware Merge Score}
\label{subsec:merge_score}
We extend PMI-based tokenization by incorporating quality signals through the following result:

\begin{theorem}[Quality-Aware Merge Score]
\label{thm:merge_score_main}
The optimal greedy merge score maximizing the first-order approximation of the objective increment $\Delta\mathcal{J}$ is:
\begin{equation}
\label{eq:qa_merge_score}
w_{ab} = \frac{f(a,b)}{f(a)f(b)+\epsilon_f} \cdot (\bar{q}_{ab}+\epsilon_Q)^\alpha \cdot \psi(a,b)
\end{equation}
where $\bar{q}_{ab} = (q_a+q_b)/2$ averages constituent qualities, $\alpha \in (0,1]$ controls quality sensitivity, and $\psi(a,b)$ encodes domain constraints. (Proof via first-order approximation in \cref{app:merge_score_proof}.)
\end{theorem}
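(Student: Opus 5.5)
We will derive the score by computing, to first order, how each term of $\mathcal{J}$ in Eq.~\ref{eq:bilevel_objective_main} changes when we apply a single candidate merge $(a,b)\mapsto ab$ to the current vocabulary $V_k$. The scheme is greedy, so the score only needs to rank candidate pairs. We may therefore discard additive terms and positive multiplicative factors that are common to all pairs, and we may monotonically transform the product of pair-dependent factors. The statement "optimal greedy merge score" is then read as "a monotone function of the first-order increment $\Delta\mathcal{J}(a,b)$."

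\emph{Likelihood term.} We will follow the standard argument behind PMI/WordPiece-style merging. Assume the inner model $p_\theta$ is approximated locally by a unigram model over tokens, which is the first-order approximation. Merging $a,b$ into $ab$ then changes the corpus log-likelihood by a quantity that depends on the pair only through $f(a,b)\log\frac{f(a,b)}{f(a)f(b)}$, plus normalization terms common to all pairs. After exponentiating and keeping the dominant ratio, this contributes the factor $\frac{f(a,b)}{f(a)f(b)+\epsilon_f}$. The constant $\epsilon_f$ regularizes pairs whose constituents are rare.

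\emph{Complexity term.} Every merge increases $|V|\log|V|$ by the same amount, so that part does not affect the ranking. The entropy part $|ab|\,H(ab)$ depends on the pair. We will absorb it, together with any domain admissibility constraints (for example, forbidding merges across codon or session boundaries), into $\psi(a,b)$. This is legitimate because $\psi$ is left general in the statement, and we will justify it by defining $\psi(a,b)=\exp(-\lambda_{\text{comp}}\Delta\Phi_{ab}/\lambda_{\text{LM}})\cdot\mathbb{1}[\text{admissible}]$.

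\emph{Quality term.} We linearize $Q(V,\mathcal{Z})$. The merge replaces $f(a,b)$ occurrences of the two tokens $a,b$ with $f(a,b)$ occurrences of $ab$. To first order, i.e.\ ignoring the change in the normalizer $\sum|Z_k|$ or treating it as common, the change in the numerator is proportional to $f(a,b)\,g(q_{ab})$ minus the removed contributions. The main obstacle is to justify that the pair-dependent part reduces multiplicatively to $g(\bar q_{ab})=(\bar q_{ab}+\epsilon_Q)^\alpha$. This requires three approximations, which we will state as explicit assumptions:
\begin{enumerate}
\item the merged quality is approximated by the constituent average $\bar q_{ab}$, a first-order expansion of either aggregator around equal qualities;
\item the subtracted terms $g(q_a)+g(q_b)$ are treated as lower order relative to the gain;
\item the frequency and quality effects are combined multiplicatively rather than additively, by taking the log-score as $\log(\text{PMI ratio})+\log g(\bar q_{ab})+\log\psi$.
\end{enumerate}
Step 3 is the delicate one. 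We will justify it by working with $\log$ of each factor and choosing the trade-off weights so that the quality contribution enters as $\alpha\log(\bar q_{ab}+\epsilon_Q)$, which is exactly the log of $g$ evaluated at $\bar q_{ab}$. Exponentiating the sum of the three log-contributions then yields Eq.~\ref{eq:qa_merge_score}.

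Finally, we will check the claimed parameter range. For $\alpha\in(0,1)$, $g$ is strictly concave, consistent with the stated diminishing returns. For the boundary case $\alpha=1$, the factor is linear in $\bar q_{ab}$ and the ordering is still well defined, so $\alpha\in(0,1]$ is the appropriate range.
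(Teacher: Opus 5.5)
Your overall route is the paper's: linearize each term of $\mathcal{J}$, read off a PMI factor from the likelihood term and a $g(\bar q_{ab})$ factor from $Q$, and exponentiate. The gap is the framing you commit to at the outset. The score $w_{ab}$ is \emph{not} a monotone function of the first-order increment. The two steps where you assert that it is are exactly where the paper's proof instead makes explicit approximations that do not preserve the ranking.

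(i) ``Exponentiating and keeping the dominant ratio'': the exponential of $f(a,b)\,\text{PMI}(a,b)$ is $\bigl(e^{\text{PMI}(a,b)}\bigr)^{f(a,b)}$, not $e^{\text{PMI}(a,b)}\propto\frac{f(a,b)}{f(a)f(b)}$. Getting the bare ratio requires dividing the increment by the pair-dependent $f(a,b)$, and that reorders candidates. A pair occurring $100$ times with $\text{PMI}=\log 20$ has a far larger increment than a pair occurring once with $\text{PMI}=\log 100$, yet a five-times smaller $w_{ab}$. The paper makes this step openly, as a BPE/WordPiece-style ``per-occurrence normalization,'' not as a consequence of the expansion.

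(ii) You do flag Step~3 as an assumption, but the justification you sketch cannot work. The linearized quality term enters $\Delta\mathcal{J}$ additively, so after that normalization the score has the form $\text{PMI}+c\,g(\bar q_{ab})$ with $c=\lambda_{\text{qual}}/\lambda_{\text{LM}}$. Its exponential carries the factor $\exp\bigl(c\,g(\bar q_{ab})\bigr)$, not $g(\bar q_{ab})$. No choice of trade-off weights repairs this. Two scores $\text{PMI}+u(\bar q_{ab})$ and $\text{PMI}+v(\bar q_{ab})$ rank all pairs alike only if $u-v$ is constant, and $c\,(x+\epsilon_Q)^\alpha-\alpha\log(x+\epsilon_Q)$ is not constant for any $c$. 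Concretely, take $\alpha=\tfrac12$ and $c=1$. A pair with $\text{PMI}=3.5$, $\bar q_{ab}=10^{-4}$ beats one with $\text{PMI}=2$, $\bar q_{ab}=1$ under the additive form ($3.51$ vs.\ $3$), but loses under the multiplicative one ($\approx -1.1$ vs.\ $2$). The paper calls this the ``power-law approximation'': it replaces $\exp(\lambda g)$ by $(\bar q_{ab}+\epsilon_Q)^{\tilde\alpha}$ with $\tilde\alpha$ learned, and claims only that monotonicity in $\bar q_{ab}$ is preserved.

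There are also smaller issues in the quality linearization. Neither the subtracted terms nor the normalizer change is lower order. With $M=\sum_k|Z_k|$, to first order in $f(a,b)/M$ one gets $\Delta Q\approx\frac{f(a,b)}{M}\bigl[g(q_{ab})-g(q_a)-g(q_b)+Q\bigr]$. The last term comes from the shrinking normalizer and is pair-dependent through $f(a,b)$. The paper likewise keeps only the gain term $\Delta Q_+$, so this weakness is shared. Your approximation~1 actually yields the length-weighted mean $\frac{|a|q_a+|b|q_b}{|a|+|b|}$ for either aggregator, which equals $\bar q_{ab}$ only when $|a|=|b|$.

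Your handling of $\Phi$ is more careful than the paper's, which just absorbs $\Delta\Phi$ into the PMI term. The $|V|\log|V|$ change is indeed pair-independent, and $|ab|\,H(ab)\ge 0$ keeps your $\psi$ in $[0,1]$. But it is inconsistent with the normalization behind your PMI factor: if the likelihood term is divided by $f(a,b)$, then so must $\Delta\Phi$ be, and $\psi$ then depends on $f(a,b)$.

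The upshot is that the literal reading you adopt, that $w_{ab}$ ranks candidates as the first-order $\Delta\mathcal{J}$ does, is false; the examples above are counterexamples. No tightening of your steps will establish it. What can be proved is what the paper's appendix actually proves, after relabeling the result a ``principled heuristic'':
\begin{itemize}
\item $w_{ab}$ is assembled from the first-order terms through per-occurrence normalization and a power-law surrogate;
\item it is increasing in the PMI ratio and in $\bar q_{ab}$;
\item $\alpha$ is calibrated end-to-end.
\end{itemize}
Restate your goal in that form, and present (i) and (ii) as declared modeling choices rather than derivation steps.
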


This score balances statistical association (PMI term), data reliability (quality term), and domain-specific requirements. Boundedness and Lipschitz continuity are proven in \cref{prop:merge_score_bounded} (\cref{app:theory_proofs}).

\section{Learning Framework: RL and Adaptive Parameters}
We cast vocabulary construction as a learning problem with two sequential stages. \textbf{Stage 1} (RL Policy Optimization) learns policy $\pi_{\theta_\pi}$ for merge selection using PPO with quality-aware rewards, keeping initial parameters $\theta_{\text{adapt}}^{(0)}$ fixed. \textbf{Stage 2} (Adaptive Parameter Learning) optimizes $\theta_{\text{adapt}}$ via Gumbel-Softmax relaxation for downstream performance, using \emph{greedy simulation} with composite logits $\ell_{ab}(\theta_{\text{adapt}})$ rather than invoking the RL policy directly---the Stage 1 policy serves to initialize candidate merges and provide variance reduction baselines. Gradients $\nabla_{\theta_{\text{adapt}}} L_{\text{task}}$ flow through Gumbel-Softmax merge selection, enabling end-to-end learning (detailed in \cref{app:sequential_learning}, Algorithms~\ref{alg:stage1_rl}--\ref{alg:final_vocab}).

\subsection{Reinforcement Learning Formulation}
\label{subsec:rl_formulation}

\begin{definition}[Tokenization MDP]
\label{def:tokenization_mdp_main}
The vocabulary construction MDP is $\mathcal{M} = (\mathcal{S}, \mathcal{A}, \mathcal{P}, \mathcal{R}, \gamma, T)$ where: states $s_t \in \mathcal{S}$ encode current vocabulary, merge candidates, and corpus statistics; actions $a_t \in \mathcal{A}_t$ select merge pairs; transitions $\mathcal{P}$ are deterministic vocabulary updates; rewards $\mathcal{R}$ are quality-aware (Section~\ref{sec:reward_function}); $\gamma \in (0,1]$ is the discount factor; $T$ is the horizon (target vocabulary size minus base alphabet size). Complete specification in \cref{app:mdp_details}.
\end{definition}

The RL objective finds policy $\pi_{\theta_\pi}: \mathcal{S} \rightarrow \Delta(\mathcal{A})$ maximizing expected cumulative reward over $T$ operations using PPO \cite{schulman2017proximal}, with global convergence guarantees following \cite{bhandari2021global,cen2023global}. \Cref{prop:mdp_wellformed} (\cref{app:mdp_details}) proves MDP well-formedness.

\subsection{Reward Function Design}
\label{sec:reward_function}
The multi-objective reward $R(a,b; \theta_{\text{adapt}}^{(0)}) = \sum_j \lambda_j \hat{R}_j(a,b)$ combines quality, information, complexity, and domain-specific components. Each raw reward $R^{\text{raw}}_j$ is normalized using adaptive running statistics with exponential moving averages: $\mu_{j,t}^{\text{run}} = (1-\beta_{\text{norm}}) \mu_{j,t-1}^{\text{run}} + \beta_{\text{norm}} R^{\text{raw}}_j$, yielding $\hat{R}_j = (R^{\text{raw}}_j - \mu_{j,t-1}^{\text{run}})/(\sigma_{j,t-1}^{\text{run}} + \epsilon_R)$. This ensures bounded, scale-invariant rewards during non-stationary policy optimization (\cref{prop:ema_stability}, \cref{app:reward_normalization}).

\subsection{Adaptive Learning of Tokenization Parameters}
\label{sec:adaptive_learning_concise}
After RL optimization, we learn $\theta_{\text{adapt}}$ (quality sensitivity $\alpha$, domain factors $\beta_{\text{pos}}$/$\beta_{\text{vol}}$, weights) minimizing $L_{\text{total}}(\theta_{\text{adapt}}) = L_{\text{task}}(\theta_{\text{adapt}}) + \lambda_{\text{reg}}\|\theta_{\text{adapt}}\|_2^2$ via Gumbel-Softmax \cite{jang2017categorical,maddison2017concrete}. Temperature annealing $\tau(t) = \tau_{\text{init}}\exp(-\beta_{\text{anneal}}t/T_{\text{anneal}})$ ensures convergence (\cref{prop:gumbel_gradients}, \cref{prop:adaptive_convergence}; \cref{app:gumbel_gradient}, \cref{app:sequential_learning}). The two-stage framework---RL with fixed $\theta_{\text{adapt}}^{(0)}$ then adaptive learning---culminates in greedy vocabulary construction using $w_{ab}(a,b; \theta_{\text{adapt}}^*)$ (\cref{app:sequential_learning}, Algorithms~\ref{alg:stage1_rl}--\ref{alg:final_vocab}).

\subsection{Two-Timescale Convergence}
\label{sec:twotimescale}
The sequential optimization follows a two-timescale stochastic approximation: policy updates on fast timescale (learning rate $\eta_\pi^{(t)}$), adaptive parameters on slow timescale ($\eta_{\text{adapt}}^{(t)}$), with $\eta_\pi^{(t)}/\eta_{\text{adapt}}^{(t)} \to \infty$ as $t \to \infty$. Under assumptions (A1)--(A4), this converges to a local Nash equilibrium where $\theta_\pi^*$ maximizes $J(\pi; \theta_{\text{adapt}}^*)$ and $\theta_{\text{adapt}}^*$ minimizes $L_{\text{total}}(\theta_{\text{adapt}}; \pi^*)$. Quality bounds and initialization strategies for approaching global optima are detailed in \cref{app:sequential_learning}.

\subsection{Theoretical Guarantees}
Our framework provides the following guarantees under assumptions (A1)--(A4) detailed in \cref{app:assumptions}: (i) bounded/Lipschitz merge scores $w_{ab}$ (\cref{prop:merge_score_bounded}), (ii) stable EMA normalization with strictly positive running standard deviations (\cref{prop:ema_stability}), (iii) PPO convergence to stationary points (\cref{prop:ppo_convergence}), (iv) consistent and bounded Gumbel-Softmax gradients (\cref{prop:gumbel_gradients}), and (v) $(1-1/e)$-approximation to optimal adaptive policy via adaptive submodularity.

\textbf{Information-Theoretic Optimality:} Building on information bottleneck theory \cite{tishby1999information,alemi2017deep}, our analysis (\cref{thm:qa_bottleneck}, \cref{app:information_theory}) shows QA-Token minimizes the quality-aware information bottleneck:
$\mathcal{L}_{\text{QA}}(V) = -I(T;Y|Q) + \beta \cdot I(T;X|Q)$,
achieving optimal compression-relevance tradeoffs under noisy conditions by maximizing task-relevant information $I(T;Y|Q)$ while minimizing redundant representation complexity $I(T;X|Q)$, conditioned on quality $Q$. Complete proofs in \cref{app:theory_proofs} and \cref{app:sequential_learning}.

Having established the theoretical framework and convergence guarantees, we now validate QA-Token empirically across two domains with distinct noise characteristics.

\section{Empirical Validation}\label{sec:experiments}
\textbf{Setup:} Results represent means over 10 trials with 95\% CIs and Welch's t-test with Holm-Bonferroni correction (significance level $p_{\text{sig}}=0.05$). Evaluation spans domain benchmarks, 7B-parameter foundation models, and ablation studies (complete details in \cref{app:experimental_observations}--\cref{app:computational_costs}).

\subsection{Genomics (QA-BPE-seq)}
\textbf{Data:} 150bp paired-end reads (ART simulator \cite{huang2012art}, 30x coverage, doubled error rates), GRCh38 reference, GIAB HG002 truth set \cite{zook2016extensive}, CAMI II metagenome \cite{sczyrba2017critical}. Details in \cref{app:experimental_observations}.

\textbf{Baselines:} We compare against (i) general-purpose tokenizers (BPE, SentencePiece \cite{kudo2018sentencepiece}, WordPiece), (ii) robustness-enhanced methods (BPE-dropout \cite{provilkov2020bpe}), (iii) byte-level models (ByT5 \cite{xue2022byt5}, CANINE \cite{clark2021canine}), (iv) domain-standard k-mers (6-mer DNABERT \cite{ji2021dnabert}), and (v) neural approaches (CharFormer \cite{tay2022charformer}). 

\textbf{Quality Design:} Phred scores with position decay, geometric mean aggregation, learned $\alpha = 0.72 \pm 0.03$, $\beta_{\text{pos}} = 0.014 \pm 0.002$.

\textbf{Evaluation:} (i) Variant calling via a Transformer model that takes token embeddings as features and predicts variant calls, evaluated against GIAB truth sets using \texttt{hap.py}; (ii) taxonomic classification (6-layer Transformer); (iii) sequence reconstruction (autoencoder), following established benchmarking protocols \cite{rumpf2023sequencelab}. \Cref{tab:genomics_results} shows QA-BPE-seq outperforms all baselines ($p < 0.001$).

\textbf{Key Insights:} (i) QA-BPE-seq achieves 6.7 percentage point F1 improvement in variant calling (0.891 vs.\ 0.824 for BPE). (ii) Byte-level models fail catastrophically (2.5$\times$ slower, 7--9\% lower accuracy). (iii) Emergent vocabulary aligns with biological units (codons, motifs) at high-quality regions without explicit supervision (vocabulary analysis in \cref{app:experimental_observations}).

\begin{table}[t]
  \caption{Downstream task performance for genomic tokenization. Values are means with 95\% CI over $n=10$ runs. Time: relative wall-clock (BPE=10.0$\times$).}
  \label{tab:genomics_results}
  \centering
  \footnotesize
  \setlength{\tabcolsep}{3pt}
  \begin{tabular}{lcccc}
    \toprule
    Method & Var.\ F1 & Taxa F1 & Recon. & Time \\
    \midrule
    Standard BPE   & .824$\pm$.004 & .856$\pm$.005 & .317$\pm$.010 & 10.0 \\
    SentencePiece  & .837$\pm$.004 & .872$\pm$.005 & .301$\pm$.009 & 10.1 \\
    WordPiece      & .829$\pm$.005 & .863$\pm$.006 & .308$\pm$.011 & 10.0 \\
    BPE-dropout    & .841$\pm$.004 & .878$\pm$.005 & .295$\pm$.009 & 10.2 \\
    ByT5           & .812$\pm$.006 & .845$\pm$.007 & .338$\pm$.012 & 25.3 \\
    CANINE         & .818$\pm$.005 & .852$\pm$.006 & .325$\pm$.011 & 22.7 \\
    DNABERT-k      & .851$\pm$.003 & .889$\pm$.004 & .287$\pm$.008 & 9.8 \\
    CharFormer     & .856$\pm$.003 & .893$\pm$.004 & .279$\pm$.008 & 10.4 \\
    \midrule
    \textbf{QA-BPE-seq} & \textbf{.891$\pm$.004} & \textbf{.917$\pm$.003} & \textbf{.241$\pm$.007} & \textbf{10.2} \\
    \bottomrule
  \end{tabular}
\end{table}

\begin{table}[t]
  \caption{Ablation Study for QA-BPE-seq (Variant F1). Values are means with 95\% CI over $n=10$ runs.$^*$}
  \label{tab:genomics_ablation_appendix}
  \centering
  \footnotesize
  \setlength{\tabcolsep}{3pt}
  \begin{threeparttable}
  \begin{tabular}{lcc}
    \toprule
    Configuration & Var.\ F1 & $\Delta$(\%) \\
    \midrule
    \textbf{QA-BPE-seq (Full)} & \textbf{.891$\pm$.004} & --- \\
    w/o RL (Greedy $w_{ab}$) & .862$\pm$.005 & $-$3.3 \\
    w/o Quality ($\RQ=0$) & .825$\pm$.004 & $-$7.4 \\
    w/o Info.\ Reward ($\RI=0$) & .872$\pm$.005 & $-$2.1 \\
    w/o Adapt.\ Params & .857$\pm$.006 & $-$3.8 \\
    w/o $R_{bio}$ & .885$\pm$.004 & $-$0.7 \\
    QualTok (Baseline) & .840$\pm$.005 & $-$5.7 \\
    \bottomrule
  \end{tabular}
  \begin{tablenotes}
  \scriptsize
  \item[*] ``w/o RL (Greedy $w_{ab}$)'' uses full QA-Token merge score with learned $\alpha$ but selects merges greedily without RL policy optimization. ``QualTok (Baseline)'' additionally fixes adaptive parameters ($\alpha{=}0.5$, uniform weights).
  \end{tablenotes}
  \end{threeparttable}
\end{table}

\begin{table}[t]
  \caption{Ablation Study for QAT-QF (Return Pred.\ Acc.\ \% and Sharpe Ratio). Means with 95\% CI over $n=10$ runs.$^*$}
  \label{tab:finance_ablation_appendix}
  \centering
  \footnotesize
  \setlength{\tabcolsep}{3pt}
  \begin{threeparttable}
  \begin{tabular}{lcc}
    \toprule
    Variant & Ret.\ Pred.\ (\%) & Sharpe \\
    \midrule
    \textbf{Full Model} & \textbf{68.3$\pm$0.5} & \textbf{1.72$\pm$0.07} \\
    w/o Quality ($\RQ=0$) & 64.2$\pm$0.6 & 1.56$\pm$0.08 \\
    w/o Info.\ ($\RI=0$) & 65.1$\pm$0.5 & 1.61$\pm$0.07 \\
    w/o Pred.\ Power ($\RP=0$) & 63.9$\pm$0.6 & 1.49$\pm$0.09 \\
    w/o Complexity ($\RC=0$) & 66.8$\pm$0.4 & 1.73$\pm$0.06 \\
    Fixed $\alpha$ & 65.4$\pm$0.5 & 1.65$\pm$0.07 \\
    Fixed $\gamma$ & 64.9$\pm$0.5 & 1.59$\pm$0.08 \\
    QualTok-QF (Baseline) & 64.8$\pm$0.6 & 1.58$\pm$0.08 \\
    \bottomrule
  \end{tabular}
  \begin{tablenotes}
  \scriptsize
  \item[*] ``QualTok-QF (Baseline)'' uses a simplified quality-aware merge score with fixed $\alpha{=}0.5$ and uniform weights, without RL policy optimization or adaptive parameter learning.
  \end{tablenotes}
  \end{threeparttable}
\end{table}

\subsection{Quantitative Finance (QAT-QF)}
\label{sec:finance_experiment_results}

\textbf{Dataset:} We use high-frequency limit order book (LOB) data for the BTC/USD trading pair from LOBSTER \cite{huang2011lobster}, specifically reconstructed snapshots at 10 levels for the first quarter of 2023. The data is split chronologically into 70\% for training, 15\% for validation, and 15\% for testing. Atomic elements are defined as sequences of 5 consecutive LOB events, encoded as tuples $(\Delta\text{mid}, \Delta\text{spread}, \text{vol\_imbalance}, \text{event\_type}, \Delta t)$ with discretization: price changes into 10 bins ($\pm$5 ticks), spread into 10 bins, volume imbalance into 5 signed bins, event types categorical (trade/cancel/limit order), time intervals into 5 log-spaced bins, yielding $|\Sigma| = 7{,}500$ atomic symbols (see \cref{app:quality_metrics_full}).

\textbf{Baselines:} QAT-QF is benchmarked against a diverse slate of tokenization and discretization methods relevant to financial time series.
\begin{itemize}
    \item \textbf{General-Purpose:} Standard BPE, SentencePiece (Unigram LM mode), and BPE-dropout \cite{provilkov2020bpe} to assess robustness.
    \item \textbf{Time-Series Specific:} Symbolic Aggregate approXimation (SAX) \cite{lin2003symbolic} (PAA=16, alphabet size=8) and Bag-of-SFA-Symbols (BOSS) \cite{schafer2015boss}, both widely used for symbolic time series representation.
\end{itemize}
The target vocabulary size for subword models is 16,000.

\textbf{Evaluation:} We assess (i) return prediction accuracy (5-minute mid-price return sign), (ii) volatility forecasting RMSE (5-minute realized volatility), (iii) market regime identification (2-state GARCH-HMM classification), and (iv) trading performance (Sharpe ratio \cite{sharpe1994sharpe} with 5bp transaction cost). Models use 2-layer LSTMs (128 hidden units) and PPO agents \cite{deng2016deep}. See \cref{app:quality_metrics_full} and \cref{app:finance_methodology} for implementation details.

\textbf{Results:} \Cref{tab:finance_results} presents results averaged over $n=10$ runs. QAT-QF improves performance across all financial tasks ($p<0.01$, Holm-Bonferroni corrected). The trading agent achieves Sharpe ratio of $1.72 \pm 0.07$ compared to $1.32 \pm 0.05$ for standard BPE (30\% improvement). See ablation analysis in \cref{tab:finance_ablation_appendix}.

\begin{table}[t]
  \caption{Downstream task performance for financial tokenization. Values are means with 95\% CI over $n=10$ runs. Time: minutes per epoch.}
  \label{tab:finance_results}
  \centering
  \footnotesize
  \setlength{\tabcolsep}{2pt}
  \begin{tabular}{lccccc}
    \toprule
    Method & Ret.\ (\%) & Vol. & Regime & Sharpe & Time \\
    \midrule
    BPE & 61.2$\pm$0.5 & .014$\pm$.001 & 73.5$\pm$0.6 & 1.32$\pm$.05 & 15.0 \\
    SAX & 58.9$\pm$0.6 & .014$\pm$.001 & 75.2$\pm$0.5 & 1.29$\pm$.06 & 14.5 \\
    BOSS & 62.3$\pm$0.4 & .013$\pm$.001 & 78.4$\pm$0.4 & 1.45$\pm$.05 & 14.8 \\
    \midrule
    \textbf{QAT-QF} & \textbf{68.3$\pm$0.5} & \textbf{.010$\pm$.001} & \textbf{86.4$\pm$0.3} & \textbf{1.72$\pm$.07} & \textbf{15.2} \\
    \bottomrule
  \end{tabular}
\end{table}

\section{Foundation Model Validation}
\label{sec:foundation_models}

We validate QA-Token on domain benchmarks (\cref{sec:experiments}) and now evaluate at foundation scale. We retrain state-of-the-art foundation models in genomics and finance to demonstrate that quality-aware tokenization improves how large models learn from noisy corpora, departing from traditional frequency-based approaches.

\subsection{Metagenomics Foundation Model: METAGENE-1 7B}
\label{sec:metagene_foundation}

\textbf{Setup:} Re-tokenized METAGENE-1 \cite{liu2025metagene1} (7B parameters, 1.7T base pairs) with identical architecture/hyperparameters, comparing BPE vs QA-BPE-seq.

\textbf{Quality-Aware Design:} The tokenizer is trained on 2B base pairs (0.12\% of corpus) using genomic quality metrics (Eq.~\ref{eq:genomic_token_quality_app}, \cref{app:quality_metrics_full}) combining (i) Phred-based quality scores, (ii) conservation scores from k-mer analysis, (iii) GC-content deviation metrics, and (iv) secondary structure prediction confidence. The learned $\beta_{\text{pos}} = 0.014$ captures position-specific quality decay (see \cref{sec:genomics} for implementation).

\textbf{Training Budget:} Both models process identical raw data volume (1.7T base pairs). The 15\% token reduction means QA-BPE-seq completes epochs in fewer optimization steps while maintaining equal raw data exposure. Step-matched experiments (same optimization steps, where QA-BPE-seq processes 17.6\% more raw data per step) show consistent improvements (\cref{app:experimental_observations}).

\begin{table}[t]
  \caption{Pathogen Detection benchmark (MCC). QA-Token achieves state-of-the-art.}
  \label{tab:pathogen_detection_comprehensive}
  \centering
  \footnotesize
  \setlength{\tabcolsep}{2pt}
  \begin{tabular}{lcccccc}
    \toprule
    Model & T-1 & T-2 & T-3 & T-4 & T-5 & Avg \\
    \midrule
    DNABERT & 82.2 & 81.4 & 83.3 & 84.6 & 82.9 & 82.9 \\
    DNABERT-2 & 86.7 & 86.9 & 88.3 & 89.8 & 87.9 & 87.9 \\
    DNABERT-S & 85.4 & 85.2 & 89.0 & 88.4 & 86.0 & 87.0 \\
    NT-2.5B-M & 83.8 & 83.5 & 82.5 & 79.9 & 81.4 & 82.4 \\
    NT-2.5B-1k & 77.5 & 80.4 & 79.8 & 78.4 & 79.0 & 79.0 \\
    HyenaDNA & 78.7 & 79.1 & 80.4 & 81.2 & 79.9 & 79.9 \\
    \midrule
    METAGENE-1 & 92.1 & 90.9 & 93.7 & 95.1 & 94.0 & 93.0 \\
    \textbf{+QA-Token} & \textbf{93.8} & \textbf{93.0} & \textbf{95.1} & \textbf{96.2} & \textbf{94.5} & \textbf{94.5} \\
    \textit{$\Delta$} & +1.7 & +2.0 & +1.4 & +1.1 & +0.6 & +1.6 \\
    \bottomrule
  \end{tabular}
\end{table}

\textbf{Pathogen Detection:} QA-Token achieves state-of-the-art 94.53 MCC, surpassing the original METAGENE-1 by 1.57 points ($p < 0.001$). Consistent improvements across all five subtasks demonstrate robustness. Task-2 shows the largest gain (+2.04~MCC) on highly degraded metagenomic samples where quality awareness is most critical, validating our theoretical framework.

\begin{table}[t]
  \caption{Genome Understanding Evaluation (GUE): Multi-species benchmark.}
  \label{tab:gue_comprehensive}
  \centering
  \footnotesize
  \setlength{\tabcolsep}{1.5pt}
  \begin{tabular}{lcccc}
    \toprule
    Task & META-1 & QA-Token & $\Delta$ & p \\
    \midrule
    \multicolumn{5}{l}{\textit{Regulatory Elements}} \\
    TF-Mouse (MCC) & 71.4$\pm$0.8 & \textbf{72.8$\pm$0.7} & +1.4 & .002 \\
    TF-Human (MCC) & 68.3$\pm$0.9 & \textbf{69.9$\pm$0.8} & +1.6 & .001 \\
    Promoter (MCC) & 82.3$\pm$0.5 & \textbf{85.5$\pm$0.4} & +3.2 & $<$.001 \\
    Enhancer (AUC) & .876$\pm$.012 & \textbf{.892$\pm$.010} & +.016 & .003 \\
    \midrule
    \multicolumn{5}{l}{\textit{Epigenetics}} \\
    H3K4me3 (MCC) & 65.2$\pm$0.6 & \textbf{66.8$\pm$0.5} & +1.6 & .002 \\
    H3K27ac (MCC) & 66.8$\pm$0.7 & \textbf{68.2$\pm$0.6} & +1.4 & .003 \\
    Methylation (AUC) & .823$\pm$.015 & \textbf{.841$\pm$.013} & +.018 & .004 \\
    \midrule
    \multicolumn{5}{l}{\textit{Structure}} \\
    Splice Site (F1) & 87.8$\pm$0.4 & \textbf{89.5$\pm$0.3} & +1.7 & $<$.001 \\
    RNA Structure & 72.1$\pm$0.8 & \textbf{73.9$\pm$0.7} & +1.8 & .002 \\
    \midrule
    \multicolumn{5}{l}{\textit{Variants}} \\
    COVID (F1) & 72.5$\pm$0.6 & \textbf{73.3$\pm$0.5} & +0.8 & .018 \\
    SNP Effect & .684$\pm$.021 & \textbf{.712$\pm$.018} & +.028 & .001 \\
    \midrule
    \textbf{Win Rate} & 46.4\% & \textbf{57.1\%} & \textbf{+10.7\%} & --- \\
    \textbf{Efficiency} & 370B & \textbf{315B} & \textbf{$-$15\%} & --- \\
    \bottomrule
  \end{tabular}
\end{table}

\textbf{GUE Results:} QA-Token improves performance across all categories (largest: +3.2 MCC promoter detection). 15\% token reduction with performance gains indicates semantic coherence of quality-aware merging.

\subsection{Financial Time-Series Foundation Model}
\label{sec:finance_foundation}

\textbf{Setup:} 1.2B parameter model (24 layers, 2048 dim) inspired by TimesFM \cite{das2024timesfm} and Chronos \cite{ansari2024chronos}, using QAT-QF for noise handling.

\textbf{Training Corpus:} We train on 500 billion time-series observations spanning (i) high-frequency order book data (40\%, 5 years millisecond-resolution across 50 liquid assets), (ii) daily OHLCV data (30\%, 20 years for major indices), (iii) macroeconomic indicators (20\%, 30 years G20 data), and (iv) alternative data (10\%, sentiment scores, option flows, ETF compositions).

\textbf{Quality-Aware Design:} QAT-QF employs comprehensive market quality metrics (Eq.~\ref{eq:finance_composite_quality_app}, \cref{app:quality_metrics_full}), combining liquidity, signal, stability, and information quality dimensions. The learned weights $w_k$ adapt to different market regimes, with $\beta_{\text{vol}} = 0.50 \pm 0.05$ for volatility scaling (see \cref{sec:finance_appendix} for complete parameter settings).

\textit{Metrics:} Dir.\ = directional accuracy (\%); Ret.\ MSE = return prediction MSE (normalized to BPE=1.0); Vol RMSE = volatility forecast RMSE; Order Flow = order imbalance prediction $R^2$; Regime F1 = market regime classification F1; Tail Risk = VaR exceedance prediction F1; Rotation = sector rotation strategy Sharpe ratio.

\begin{table}[t]
  \caption{Financial foundation model evaluation (100 test episodes).}
  \label{tab:finance_foundation_comprehensive}
  \centering
  \scriptsize
  \setlength{\tabcolsep}{1.5pt}
  \begin{tabular}{lcccccc}
    \toprule
    \multirow{2}{*}{Task} & \multicolumn{3}{c}{Zero-shot} & \multicolumn{3}{c}{Few-shot} \\
    \cmidrule(lr){2-4} \cmidrule(lr){5-7}
     & BPE & QAT & $\Delta$ & BPE & QAT & $\Delta$ \\
    \midrule
    \multicolumn{7}{l}{\textit{Price Prediction}} \\
    Dir.\ 5m & 52.3 & \textbf{58.7} & +12 & 61.2 & \textbf{68.3} & +12 \\
    Dir.\ 1h & 51.8 & \textbf{57.2} & +10 & 59.4 & \textbf{65.8} & +11 \\
    Dir.\ 1d & 50.9 & \textbf{54.6} & +7 & 56.7 & \textbf{61.2} & +8 \\
    Ret.\ MSE & 1.00 & \textbf{0.81} & $-$19 & 0.72 & \textbf{0.60} & $-$18 \\
    \midrule
    \multicolumn{7}{l}{\textit{Volatility}} \\
    Vol RMSE & .018 & \textbf{.014} & $-$23 & .013 & \textbf{.010} & $-$27 \\
    GARCH Est. & .156 & \textbf{.118} & $-$24 & .098 & \textbf{.071} & $-$28 \\
    Vol Regime & 71.2 & \textbf{79.8} & +12 & 82.3 & \textbf{88.4} & +7 \\
    \midrule
    \multicolumn{7}{l}{\textit{Microstructure}} \\
    Spread & .023 & \textbf{.019} & $-$20 & .018 & \textbf{.013} & $-$25 \\
    Volume & 31.2 & \textbf{24.8} & $-$21 & 22.6 & \textbf{17.3} & $-$24 \\
    Order Flow & .412 & \textbf{.523} & +27 & .567 & \textbf{.681} & +20 \\
    \midrule
    \multicolumn{7}{l}{\textit{Risk}} \\
    Regime F1 & .673 & \textbf{.751} & +12 & .798 & \textbf{.856} & +7 \\
    Drawdown & .682 & \textbf{.743} & +9 & .761 & \textbf{.812} & +7 \\
    Tail Risk & .412 & \textbf{.486} & +18 & .523 & \textbf{.598} & +14 \\
    \midrule
    \multicolumn{7}{l}{\textit{Cross-Asset}} \\
    Corr.\ Pred. & .623 & \textbf{.694} & +11 & .712 & \textbf{.768} & +8 \\
    Lead-Lag & 58.3 & \textbf{64.7} & +11 & 67.2 & \textbf{73.1} & +9 \\
    Rotation & 1.23 & \textbf{1.41} & +15 & 1.52 & \textbf{1.72} & +13 \\
    \midrule
    \textbf{Avg.\ $\Delta$} & --- & --- & \textbf{+16\%} & --- & --- & \textbf{+13\%} \\
    \bottomrule
  \end{tabular}
\end{table}

\textbf{Financial Results:} QAT-QF achieves 7.3--27.0\% zero-shot improvements, largest in volatility/microstructure tasks. Order flow imbalance (+27.0\%) and regime detection (+11.6\% F1) demonstrate QA-Token's noise-filtering capability, consistent with our information-theoretic optimality result (\cref{thm:qa_bottleneck}). Implementation details in \cref{app:hyperparameter_sensitivity}--\cref{app:computational_costs}.

\textbf{Computational Costs:}
\label{sec:computational_costs}
QA-Token requires 50--60 GPU-hours for vocabulary construction compared to minutes for standard BPE. However, this one-time cost is amortized across billions of inference operations: once constructed, the vocabulary imposes no additional inference overhead---tokenization speed is identical to BPE (${\sim}10$ms/sequence) as quality metrics are only used during construction. This efficiency is compatible with high-performance computing systems and in-storage processing architectures \cite{mansouri2022genstore,ghiasi2022genstore_vlsi,ghiasi2023metastore,mansouri2023metastore,ghiasi2024megis}. For foundation models where tokenization is performed once but affects billions of inference operations, the additional upfront cost is justified by substantial long-term gains; for small-scale applications or clean datasets, standard BPE may remain more practical.

\section{Conclusion}
\label{sec:conclusion}
QA-Token extends tokenization from frequency counting to quality-driven vocabulary construction, addressing limitations in processing noisy real-world data. We presented: (i) bilevel optimization with NP-hardness proof (\cref{thm:complexity}, \cref{app:complexity_proof}), (ii) MDP formulation achieving $(1-1/e)$-approximation (\cref{def:tokenization_mdp}, \cref{prop:mdp_wellformed}, \cref{app:mdp_details}), (iii) Gumbel-Softmax enabling end-to-end learning (\cref{thm:gumbel_softmax}, \cref{app:gumbel_proof}). Our evaluation demonstrates consistent improvements: (1) genomics---6.7 pp F1 improvement, 94.53 MCC pathogen detection; (2) finance---30\% Sharpe ratio increase; (3) foundation models achieve new benchmarks (analysis in \cref{app:experimental_observations}--\cref{app:computational_costs}). As biological sequence archives scale to petabases \cite{karasikov2025metagraph} and variant prediction methods achieve unprecedented accuracy \cite{avsec2026alphagenome}, quality-aware tokenization becomes essential for bridging the gap between data availability and foundation model usability.

\subsection{Scientific and Economic Impact}
\label{sec:scientific_impact}
QA-Token enables utilization of massive noisy datasets previously considered unusable, fundamentally expanding the data frontier for foundation model training.

\textbf{Scientific Acceleration in Genomics.}
The Sequence Read Archive (SRA) contains over 67 petabases of publicly available genomic data---equivalent to reading the human genome 22 million times---yet a substantial fraction remains underutilized due to quality heterogeneity \cite{leinonen2011sequence}. Recent infrastructure advances have made these petabase-scale archives full-text searchable at economical costs \cite{karasikov2025metagraph}, and state-of-the-art methods like AlphaGenome now enable precise prediction of regulatory variant effects \cite{avsec2026alphagenome}. However, the gap between data \emph{accessibility} and data \emph{usability} for foundation model training persists: standard tokenization methods either discard low-quality reads entirely or propagate sequencing errors into learned representations. QA-Token bridges this gap by enabling quality-aware tokenization that can leverage the full breadth of available sequence data. We demonstrate three key applications: (1)~\emph{Pandemic surveillance}---environmental samples for pathogen monitoring contain 40--60\% noise from contamination and sequencing errors; QA-Token directly trains on such noisy metagenomic data \cite{gollwitzer2023metatrinity,gollwitzer2023metafast,gollwitzer2025metaomics}, achieving 94.53 MCC on pathogen detection and enabling real-time global pandemic monitoring using previously unusable environmental samples. (2)~\emph{Drug discovery}---long-read sequencing for structural variants has 10--15\% error rates; our 6.7 percentage point F1 improvement in variant calling accelerates identification of drug targets from complex genomic rearrangements, complementing advances in regulatory variant prediction \cite{avsec2026alphagenome}. (3)~\emph{Evolutionary biology}---ancient DNA is heavily degraded with $>$50\% damage; quality-aware tokenization preserves authentic ancient sequences while filtering damage, unlocking evolutionary insights from previously unanalyzable specimens.

\textbf{Economic Impact in Finance.}
Global financial markets generate 5TB of data per day, with 40\% containing microstructure noise from market fragmentation and latency; current approaches require expensive data cleaning infrastructure costing millions annually. QA-Token delivers quantifiable economic value: (1)~\emph{Algorithmic trading}---30\% Sharpe ratio improvement translates to billions in additional returns for large funds; 27\% better order flow prediction reduces execution costs by basis points worth millions daily. (2)~\emph{Risk management}---18\% improvement in tail risk estimation could have prevented billions in losses during market crashes; 11.6\% better regime detection enables faster portfolio rebalancing. (3)~\emph{Democratization}---smaller institutions can now compete without expensive data cleaning infrastructure, reducing barriers to entry for quantitative trading strategies.

\textbf{Broader Societal Impact.}
Beyond genomics and finance, QA-Token has potential applications in: \emph{Healthcare}---hospitals generate terabytes of noisy medical data daily; QA-Token enables training on real-world clinical data with artifacts, with potential to improve diagnostic accuracy and treatment recommendations, including applications in cancer treatment optimization \cite{gollwitzer2025steering}. \emph{Climate science}---satellite imagery is often corrupted by cloud cover and atmospheric interference; QA-Token allows direct training on partially corrupted earth observation data, accelerating climate monitoring and prediction capabilities. \emph{Infrastructure monitoring}---sensor networks produce petabytes of data with frequent failures; quality-aware tokenization enables robust anomaly detection despite sensor degradation, applicable to smart city applications and industrial IoT.

\subsection{Limitations and Future Work}
\label{sec:limitations}
\textbf{Limitations:} (1) QA-Token requires domain-specific quality signals; domains without established metrics need custom design. (2) The vocabulary construction overhead limits rapid iteration during development. (3) Effective quality function design benefits from domain knowledge, though adaptive learning reduces sensitivity to initial choices.

\textbf{Future Directions:} (1) Universal quality metrics from data statistics (local entropy, consistency). (2) Online adaptation for streaming data. (3) Multimodal extension to vision-language and audio-text. (4) Efficiency via distillation and pruning.

\section*{Impact Statement}

Public sequence repositories now contain over 67 petabase pairs of raw sequencing data, with the European Nucleotide Archive doubling approximately every 45 months~\cite{karasikov2025metagraph}. Recent advances have made these petabase-scale archives full-text searchable at costs as low as \$0.74 per queried megabase pair, demonstrating that the infrastructure for large-scale sequence analysis is maturing rapidly. However, a substantial fraction of this data remains underutilized for foundation model training due to quality heterogeneity. QA-Token bridges this gap between data \emph{accessibility} and data \emph{usability}, enabling quality-aware tokenization that can leverage the full breadth of available sequence data for foundation model training.

\textbf{Genomics.} We achieve 94.53 MCC on pathogen detection from environmental samples containing 40--60\% noise, enabling real-time pandemic surveillance using previously unusable metagenomic data. Our 6.7 percentage point F1 improvement in variant calling accelerates drug target identification from complex genomic rearrangements with 10--15\% sequencing error rates. The same technology could theoretically be misused for biosurveillance; we have designed QA-Token for research purposes with standard institutional safeguards.

\textbf{Finance.} Global financial markets generate 5TB of data per day, with 40\% containing microstructure noise. Our 30\% Sharpe ratio improvement translates to quantifiable returns for algorithmic trading, while 27\% better order flow prediction reduces execution costs. Enhanced trading performance raises concerns about market fairness; QA-Token provides incremental improvements within existing regulatory frameworks.

\textbf{Resources.} The 50--60 GPU-hour vocabulary construction cost is substantially lower than foundation model training costs, making QA-Token accessible to researchers with modest computational budgets. The highly compressed quality-aware vocabularies are portable for further analysis.

\section*{Reproducibility Statement}

We provide comprehensive details throughout the paper and appendices. 

\textbf{Theoretical contributions:} All theorems and propositions include complete proofs (\cref{app:complexity_proof}, \cref{app:merge_score_proof}, \cref{app:theory_proofs}, \cref{app:gumbel_proof}, \cref{app:information_theory}) with explicit assumptions (\cref{app:assumptions}) and convergence guarantees (\cref{app:convergence_proof}, \cref{app:sequential_learning}). 

\textbf{Algorithms:} Complete pseudocode for RL policy optimization (Algorithm~\ref{alg:stage1_rl}), adaptive parameter learning (Algorithm~\ref{alg:stage2_adaptive}), and final vocabulary construction (Algorithm~\ref{alg:final_vocab}) are provided in \cref{app:sequential_learning}. 

\textbf{Implementation:} Domain-specific quality metrics with exact formulas (\cref{sec:quality_metrics}, \cref{app:quality_metrics_full}), hyperparameters for all models (\cref{sec:genomics}, \cref{sec:finance_appendix}), and computational requirements (\cref{app:computational_costs}) are fully specified. 

\textbf{Experimental protocol:} Statistical methodology including 10 independent trials, 95\% confidence intervals, Welch's t-test with Holm-Bonferroni correction, and effect sizes are detailed in \cref{sec:experiments} and \cref{app:experimental_observations}. Dataset specifications, preprocessing steps, and evaluation metrics are provided in \cref{app:experimental_observations}--\cref{app:dataset_release_plan}. 

\textbf{Baselines:} Nine baseline methods with implementation details and hyperparameters are described in \cref{sec:experiments} and \cref{app:baseline_implementations}. 

\textbf{Code release:} We will provide a GitHub repository with all source code, trained models, and scripts to reproduce results.

\subsection*{Conflict of Interest Statement}
A.E.G.\ and D.d.G.\ are co-founders and shareholders of Anto Biosciences (YC F25).

D.A.S., P.L., and A.N.d.l.C.\ declare no competing interests.

\bibliography{references}

@incollection{karp1972reducibility,
  title={Reducibility among combinatorial problems},
  author={Karp, Richard M},
  booktitle={Complexity of Computer Computations},
  pages={85--103},
  year={1972},
  publisher={Springer}
}

@inproceedings{heafield2011kenlm,
  title={KenLM: Faster and smaller language model queries},
  author={Heafield, Kenneth},
  booktitle={Proceedings of the Sixth Workshop on Statistical Machine Translation},
  pages={187--197},
  year={2011},
  organization={Association for Computational Linguistics}
}

@techreport{morgan1996riskmetrics,
  title={RiskMetrics Technical Document},
  author={{J.P. Morgan}},
  institution={J.P. Morgan/Reuters},
  year={1996},
  edition={4th}
}

@inproceedings{devlin2019bert,
title={Bert: Pre-training of deep bidirectional transformers for language understanding},
author={Devlin, Jacob and Chang, Ming-Wei and Lee, Kenton and Toutanova, Kristina},
booktitle={Proceedings of the 2019 Conference of the North American Chapter of the Association for Computational Linguistics: Human Language Technologies, Volume 1 (Long and Short Papers)},
pages={4171--4186},
year={2019}
}

@inproceedings{brown2020language,
title={Language models are few-shot learners},
author={Brown, Tom B. and Mann, Benjamin and Ryder, Nick and Subbiah, Melanie and Kaplan, Jared and Dhariwal, Prafulla and Neelakantan, Arvind and Shyam, Pranav and Sastry, Girish and Askell, Amanda and others},
booktitle={Advances in Neural Information Processing Systems},
volume={33},
pages={1877--1901},
year={2020}
}

@article{ji2021dnabert,
title={DNABERT: pre-trained Bidirectional Encoder Representations from Transformers model for DNA-language in genome},
author={Ji, Yanrong and Zhou, Zhihui and Liu, Han and Davuluri, Ramana V},
journal={Bioinformatics},
volume={37},
number={15},
pages={2112--2120},
year={2021},
publisher={Oxford University Press}
}

@article{heinzinger2019modeling,
title={Modeling aspects of the language of life through transfer-learning protein sequences},
author={Heinzinger, Michael and Elnaggar, Ahmed and Wang, Yu and Dallago, Christian and Neettiyath, Ujjwal and Rost, Burkhard},
journal={BMC bioinformatics},
volume={20},
number={1},
pages={1--17},
year={2019},
publisher={Springer}
}

@article{andersen2001distribution,
title={The distribution of realized exchange rate volatility},
author={Andersen, Torben G and Bollerslev, Tim and Diebold, Francis X and Labys, Paul},
journal={Journal of the American statistical association},
volume={96},
number={453},
pages={42--55},
year={2001},
publisher={Taylor & Francis}
}

@inproceedings{moody1998learning,
title={Learning to trade via direct reinforcement},
author={Moody, John and Wu, Lizhong},
booktitle={Proceedings of the IEEE International Conference on Neural Networks},
pages={1741--1746},
year={1998},
organization={IEEE}
}

@inproceedings{sennrich2016neural,
title={Neural machine translation of rare words with subword units},
author={Sennrich, Rico and Haddow, Barry and Birch, Alexandra},
booktitle={Proceedings of the 54th Annual Meeting of the Association for Computational Linguistics (Volume 1: Long Papers)},
pages={1715--1725},
year={2016}
}

@misc{wu2016google,
title={Google's neural machine translation system: Bridging the gap between human and machine translation},
author={Wu, Yonghui and Schuster, Mike and Chen, Zhifeng and Le, Quoc V and Norouzi, Mohammad and Macherey, Wolfgang and Krikun, Maxim and Cao, Yuan and Gao, Qin and Macherey, Klaus and others},
journal={arXiv preprint arXiv:1609.08144},
year={2016}
}

@inproceedings{kudo2018sentencepiece,
title={SentencePiece: A simple and language independent subword tokenizer and detokenizer for Neural Text Processing},
author={Kudo, Taku and Richardson, John},
booktitle={Proceedings of the 2018 Conference on Empirical Methods in Natural Language Processing: System Demonstrations},
pages={66--71},
year={2018}
}

@article{ewing1998base,
title={Base-calling of automated sequencer traces using phred. I. Accuracy assessment},
author={Ewing, Brent and Hillier, LaDeana and Wendl, Michael C and Green, Philip},
journal={Genome research},
volume={8},
number={3},
pages={175--185},
year={1998},
publisher={Cold Spring Harbor Lab}
}

@inproceedings{baldwin2013noisy,
title={Noisy text analytics},
author={Baldwin, Timothy and Cook, Paul and Lui, Marco and MacKinlay, Andrew and Wang, Li},
booktitle={Proceedings of the Australasian Language Technology Association Workshop 2013},
pages={1--10},
year={2013}
}

@inproceedings{han2013lexical,
title={Lexical normalisation of short text messages: Makn sens a \#twitter},
author={Han, Bo and Cook, Paul and Baldwin, Timothy},
booktitle={Proceedings of the 51st Annual Meeting of the Association for Computational Linguistics (Volume 1: Long Papers)},
pages={368--378},
year={2013}
}

@article{chai2024curse,
title={The Curse of Tokenization},
author={Chai, Bill Yuchen and Wang, Zeming and Sachan, Mrinmaya},
journal={arXiv preprint arXiv:2402.07831},
year={2024}
}

@inproceedings{li2020empirical,
title={An empirical study of tokenization strategies for various korean nlp tasks},
author={Li, Jinbae and Park, Young-Bum and Song, Yoo-Sung and Park, Sang-Ki},
booktitle={Proceedings of the 12th language resources and evaluation conference},
pages={6813--6819},
year={2020}
}

@article{madhavan2000market,
title={Market microstructure: A survey},
author={Madhavan, Ananth},
journal={Journal of financial markets},
volume={3},
number={3},
pages={205--258},
year={2000},
publisher={Elsevier}
}

@article{hasbrouck1991measuring,
title={Measuring the information content of stock trades},
author={Hasbrouck, Joel},
journal={The Journal of Finance},
volume={46},
number={1},
pages={179--207},
year={1991},
publisher={Wiley Online Library}
}

@article{hamilton1989new,
title={A new approach to the economic analysis of nonstationary time series and the business cycle},
author={Hamilton, James D},
journal={Econometrica: Journal of the Econometric Society},
pages={357--384},
year={1989},
publisher={JSTOR}
}

@book{gencay2001introduction,
title={An introduction to wavelets and other filtering methods in finance and economics},
author={Gen{\c{c}}ay, Ramazan and Sel{\c{c}}uk, Faruk and Whitcher, Brandon},
year={2001},
publisher={Elsevier},
address = {San Diego}
}

@inproceedings{provilkov2020bpe,
title={BPE-Dropout: Simple and effective subword regularization},
author={Provilkov, Ivan and Emelyanenko, Dmitrii and Voita, Elena},
booktitle={Proceedings of the 58th Annual Meeting of the Association for Computational Linguistics},
pages={1882--1892},
year={2020}
}

@article{xue2022byt5,
title={ByT5: Towards a token-free future with pre-trained byte-to-byte models},
author={Xue, Linting and Barua, Aditya and Constant, Noah and Al-Rfou, Rami and Narang, Sharan and Kale, Mihir and Roberts, Adam and Raffel, Colin},
journal={Transactions of the Association for Computational Linguistics},
volume={10},
pages={291--306},
year={2022},
publisher={MIT Press}
}

@inproceedings{clark2021canine,
title={Canine: Pre-training an efficient tokenization-free encoder for language representation},
author={Clark, Jonathan H and Garcia, Dan and Botha, Jonathan and Lee, Kenton and Luong, Minh-Thang and Le, Quoc V},
booktitle={Proceedings of the 59th Annual Meeting of the Association for Computational Linguistics and the 11th International Joint Conference on Natural Language Processing (Volume 1: Long Papers)},
pages={2647--2661},
year={2021}
}

@article{tay2022charformer,
title={Charformer: Fast character transformers via gradient-based subword tokenization},
author={Tay, Yi and Tran, Vinh Q and Ruder, Sebastian and Gupta, Jai and Liu, Liu and Chung, Jinfeng and Turner, Stephen and Wang, Zhiping and Williams, Denny and Casas, David G and others},
journal={arXiv preprint arXiv:2106.12672},
year={2022}
}

@inproceedings{libovicky2024semantic,
title={Semantic Segmentation for Improving the Performance of Large Language Models},
author={Libovick{`y}, Jind{\v{r}}ich and Sachan, Mrinmaya},
booktitle={Findings of the Association for Computational Linguistics: ACL 2024},
pages={4930--4945},
year={2024}
}

@inproceedings{jang2017categorical,
title={Categorical reparameterization with gumbel-softmax},
author={Jang, Eric and Gu, Shixiang and Poole, Ben},
booktitle={International Conference on Learning Representations},
year={2017}
}

@inproceedings{maddison2017concrete,
title={The concrete distribution: A continuous relaxation of discrete random variables},
author={Maddison, Chris J and Mnih, Andriy and Teh, Yee Whye},
booktitle={International Conference on Learning Representations},
year={2017}
}

@inproceedings{finn2017model,
title={Model-agnostic meta-learning for fast adaptation of deep networks},
author={Finn, Chelsea and Abbeel, Pieter and Levine, Sergey},
booktitle={International conference on machine learning},
pages={1126--1135},
year={2017},
organization={PMLR}
}

@inproceedings{kudo2018subword,
title={Subword regularization: Improving neural network translation models with multiple subword candidates},
author={Kudo, Taku},
booktitle={Proceedings of the 56th Annual Meeting of the Association for Computational Linguistics (Volume 1: Long Papers)},
pages={66--75},
year={2018}
}

@book{sutton2018reinforcement,
title={Reinforcement learning: An introduction},
author={Sutton, Richard S and Barto, Andrew G},
year={2018},
publisher={MIT press}
}

@inproceedings{ranzato2015sequence,
title={Sequence level training with recurrent neural networks},
author={Ranzato, Marc'Aurelio and Chopra, Sumit and Auli, Michael and Zaremba, Wojciech},
booktitle={International Conference on Learning Representations},
year={2015}
}

@inproceedings{bello2016neural,
title={Neural combinatorial optimization with reinforcement learning},
author={Bello, Irwan and Pham, Hieu and Le, Quoc V and Norouzi, Mohammad and Bengio, Samy},
booktitle={International Conference on Learning Representations},
year={2016}
}

@article{moody2001performance,
title={Performance functions and reinforcement learning for trading systems and portfolios},
author={Moody, John and Saffell, Matthew},
journal={Journal of Forecasting},
volume={20},
number={1},
pages={1--18},
year={2001},
publisher={Wiley Online Library}
}

@article{deng2016deep,
title={Deep direct reinforcement learning for financial signal representation and trading},
author={Deng, Yifeng and Bao, Fumin and Kong, Youyong and Ren, Zhiquan and Dai, Qionghai},
journal={IEEE transactions on neural networks and learning systems},
volume={28},
number={3},
pages={653--664},
year={2016},
publisher={IEEE}
}

@article{rissanen1978modeling,
title={Modeling by shortest data description},
author={Rissanen, Jorma},
journal={Automatica},
volume={14},
number={5},
pages={465--471},
year={1978},
publisher={Elsevier}
}

@article{kingma2014adam,
title={Adam: A method for stochastic optimization},
author={Kingma, Diederik P and Ba, Jimmy},
journal={arXiv preprint arXiv:1412.6980},
year={2014}
}

@article{sherry2001dbsnp,
title={dbSNP: the NCBI database of genetic variation},
author={Sherry, Stephen T and Ward, Ming-Hui and Kholodov, Michael and Baker, Jeff and Phan, Lon and Smigielski, Elizabeth M and Sirotkin, Karl},
journal={Nucleic acids research},
volume={29},
number={1},
pages={308--311},
year={2001},
publisher={Oxford University Press}
}

@article{sharpe1994sharpe,
title={The sharpe ratio},
author={Sharpe, William F},
journal={Journal of portfolio management},
volume={21},
number={1},
pages={49--58},
year={1994}
}

@article{huang2012art,
title={ART: a next-generation sequencing read simulator},
author={Huang, Weichun and Li, Leping and Myers, Jason R and Marth, Gabor T},
journal={Bioinformatics},
volume={28},
number={4},
pages={593--594},
year={2012},
publisher={Oxford University Press}
}

@article{zook2016extensive,
title={Extensive sequencing of seven human genomes to characterize benchmark reference materials},
author={Zook, Justin M and Catoe, David and McDaniel, Jennifer and Vang, Lihan and Spies, Noah and Sidow, Arend and Weng, Zhipan and Salit, Marc},
journal={Scientific data},
volume={3},
number={1},
pages={1--19},
year={2016},
publisher={Nature Publishing Group UK London}
}

@article{sczyrba2017critical,
title={Critical Assessment of Metagenome Interpretation—a benchmark of metagenomics software},
author={Sczyrba, Alexander and Hofmann, Peter and Belmann, Peter and Koslicki, David and Janssen, Stefan and Dr{"o}ge, Johannes and Gregor, Ivan and Majda, Stephan and Fiedler, Julian and Dahms, Eik and others},
journal={Nature methods},
volume={14},
number={11},
pages={1063--1071},
year={2017},
publisher={Nature Publishing Group UK London}
}

@article{huang2011lobster,
title={LOBSTER: Limit Order Book Reconstruction System},
author={Huang, Rick and Polak, Tal},
journal={Available at SSRN 1920143},
year={2011}
}

@inproceedings{lin2003symbolic,
title={Symbolic representation of time series, with implications for streaming algorithms},
author={Lin, Jessica and Keogh, Eamonn and Lonardi, Stefano and Chiu, Bill},
booktitle={Proceedings of the 8th ACM SIGMOD workshop on Research issues in data mining and knowledge discovery},
pages={2--11},
year={2003}
}

@article{schafer2015boss,
title={The BOSS is concerned with time series classification in the presence of noise},
author={Sch{\"a}fer, Patrick},
journal={Data Mining and Knowledge Discovery},
volume={29},
number={6},
pages={1505--1530},
year={2015},
publisher={Springer}
}

@inproceedings{rosenthal2017semeval,
title={SemEval-2017 task 4: Sentiment analysis in Twitter},
author={Rosenthal, Sara and Farra, Noura and Nakov, Preslav},
booktitle={Proceedings of the 11th International Workshop on Semantic Evaluation (SemEval-2017)},
pages={502--518},
year={2017}
}

@inproceedings{nguyen2020bertweet,
title={BERTweet: A pre-trained language model for English Tweets},
author={Nguyen, Dat Quoc and Vu, Thanh and Nguyen, Anh Tuan},
booktitle={Proceedings of the 2020 Conference on Empirical Methods in Natural Language Processing: System Demonstrations},
pages={9--14},
year={2020}
}

@inproceedings{bojanowski2017enriching,
title={Enriching word vectors with subword information},
author={Bojanowski, Piotr and Grave, Edouard and Joulin, Armand and Mikolov, Tomas},
booktitle={Transactions of the Association for Computational Linguistics},
volume={5},
pages={135--146},
year={2017}
}

@inproceedings{schulman2017proximal,
title={Proximal policy optimization algorithms},
author={Schulman, John and Wolski, Filip and Dhariwal, Prafulla and Radford, Alec and Klimov, Oleg},
booktitle={arXiv preprint arXiv:1707.06347},
year={2017}
}

@article{williams1992simple,
title={Simple statistical gradient-following algorithms for connectionist reinforcement learning},
author={Williams, Ronald J},
journal={Machine learning},
volume={8},
number={3-4},
pages={229--256},
year={1992},
publisher={Springer}
}

@inproceedings{barbieri2020tweeteval,
  title={{TweetEval:Unified Benchmark and Comparative Evaluation for Tweet Classification}},
  author={Barbieri, Francesco and Camacho-Collados, Jose and Espinosa-Anke, Luis and Neves, Leonardo},
  booktitle={Proceedings of Findings of EMNLP},
  year={2020}
}

@inproceedings{mohammad2018semeval,
  title={Semeval-2018 task 1: Affect in tweets},
  author={Mohammad, Saif and Bravo-Marquez, Felipe and Salameh, Mohammad and Kiritchenko, Svetlana},
  booktitle={Proceedings of the 12th international workshop on semantic evaluation},
  pages={1--17},
  year={2018}
}

@inproceedings{barbieri2018semeval,
  title={Semeval 2018 task 2: Multilingual emoji prediction},
  author={Barbieri, Francesco and Camacho-Collados, Jose and Ronzano, Francesco and Espinosa-Anke, Luis and 
    Ballesteros, Miguel and Basile, Valerio and Patti, Viviana and Saggion, Horacio},
  booktitle={Proceedings of The 12th International Workshop on Semantic Evaluation},
  pages={24--33},
  year={2018}
}

@inproceedings{van2018semeval,
  title={Semeval-2018 task 3: Irony detection in english tweets},
  author={Van Hee, Cynthia and Lefever, Els and Hoste, V{\'e}ronique},
  booktitle={Proceedings of The 12th International Workshop on Semantic Evaluation},
  pages={39--50},
  year={2018}
}

@inproceedings{basile-etal-2019-semeval,
    title = "{S}em{E}val-2019 Task 5: Multilingual Detection of Hate Speech Against Immigrants and Women in {T}witter",
    author = "Basile, Valerio  and Bosco, Cristina  and Fersini, Elisabetta  and Nozza, Debora and Patti, Viviana and
      Rangel Pardo, Francisco Manuel  and Rosso, Paolo  and Sanguinetti, Manuela",
    booktitle = "Proceedings of the 13th International Workshop on Semantic Evaluation",
    year = "2019",
    address = "Minneapolis, Minnesota, USA",
    publisher = "Association for Computational Linguistics",
    doi = "10.18653/v1/S19-2007",
    pages = "54--63"
}

@inproceedings{zampieri2019semeval,
  title={{SemEval-2019 Task 6: Identifying and Categorizing Offensive Language in Social Media (OffensEval)}},
  author={Zampieri, Marcos and Malmasi, Shervin and Nakov, Preslav and Rosenthal, Sara and Farra, Noura and Kumar, Ritesh},
  booktitle={Proceedings of the 13th International Workshop on Semantic Evaluation},
  pages={75--86},
  year={2019}
}

@inproceedings{mohammad2016semeval,
  title={Semeval-2016 task 6: Detecting stance in tweets},
  author={Mohammad, Saif and Kiritchenko, Svetlana and Sobhani, Parinaz and Zhu, Xiaodan and Cherry, Colin},
  booktitle={Proceedings of the 10th International Workshop on Semantic Evaluation (SemEval-2016)},
  pages={31--41},
  year={2016}
}

@article{church1990word,
title={Word association norms, mutual information, and lexicography},
author={Church, Kenneth Ward and Hanks, Patrick},
journal={Computational linguistics},
volume={16},
number={1},
pages={22--29},
year={1990},
publisher={MIT Press}
}

@article{harrow2012gencode,
title={GENCODE: the reference human genome annotation for The ENCODE Project},
author={Harrow, Jennifer and Frankish, Adam and Gonzalez, Jose M and Tapanari, Erda and Aken, Bronwen and Barrell, Denise and Mudge, Jonathan M and FRecognision, Elspeth and GCoil, Adam and LNCipedia, Ana and others},
journal={Genome research},
volume={22},
number={9},
pages={1760-1774},
year={2012},
publisher={Cold Spring Harbor Lab}
}

@article{jones1998efficient,
  title={Efficient global optimization of expensive black-box functions},
  author={Jones, Donald R and Schonlau, Matthias and Welch, William J},
  journal={Journal of Global optimization},
  volume={13},
  number={4},
  pages={455--492},
  year={1998},
  publisher={Springer}
}

@misc{hinton2015distilling,
      title={Distilling the Knowledge in a Neural Network}, 
      author={Geoffrey Hinton and Oriol Vinyals and Jeff Dean},
      year={2015},
      eprint={1503.02531},
      archivePrefix={arXiv},
      primaryClass={stat.ML} 
}

@misc{rusu2016policy,
      title={Policy Distillation}, 
      author={Andrei A. Rusu and Sergio Gomez Colmenarejo and Caglar Gulcehre and Guillaume Desjardins and James Kirkpatrick and Razvan Pascanu and Volodymyr Mnih and Koray Kavukcuoglu and Raia Hadsell},
      year={2016},
      eprint={1511.06295},
      archivePrefix={arXiv},
      primaryClass={cs.LG} 
}

@article{liu2025metagene1,
  title={{METAGENE-1}: Metagenomic Foundation Model for Pandemic Monitoring},
  author={Liu, Ollie and others},
  journal={arXiv preprint arXiv:2501.02045},
  month={jan},
  year={2025}
}

@article{das2024timesfm,
  title={TimesFM: A decoder-only foundation model for time-series forecasting},
  author={Das, Abhimanyu and Kong, Weihao and Leach, Andrew and Sen, Rajat and Yu, Rose},
  journal={arXiv preprint arXiv:2310.10688},
  year={2024}
}

@article{ansari2024chronos,
  title={Chronos: Learning the language of time series},
  author={Ansari, Abdul Fatir and Stella, Lorenzo and Turkmen, Caner and Zhang, Xiyuan and Mercado, Pedro and Shen, Huibin and Shchur, Oleksandr and Rangapuram, Syama Sundar and Pineda Arango, Sebastian and Kapoor, Shubham and others},
  journal={arXiv preprint arXiv:2403.07815},
  year={2024}
}

@article{shao2024deepseekmath,
  title={DeepSeekMath: Pushing the limits of mathematical reasoning in open language models},
  author={Shao, Zhihong and Wang, Peiyi and Zhu, Qihao and Xu, Runxin and Song, Junxiao and Zhang, Mingchuan and Li, Y.K. and Wu, Y. and Guo, Daya},
  journal={arXiv preprint arXiv:2402.03300},
  year={2024}
}

@article{yue2025vapo,
  title={Does Reinforcement Learning Really Incentivize Reasoning Capacity in {LLMs} Beyond the Base Model?},
  author={Yue, Yang and others},
  journal={arXiv preprint arXiv:2504.13837},
  month={apr},
  year={2025},
  note={Presented at NeurIPS 2025 (Oral), ICML 2025 AI4Math Workshop Best Paper}
}

@book{borkar2009stochastic,
  title={Stochastic Approximation: A Dynamical Systems Viewpoint},
  author={Borkar, Vivek S},
  year={2009},
  publisher={Hindustan Book Agency}
}

@inproceedings{golovin2011adaptive,
  title={Adaptive submodularity: Theory and applications in active learning and stochastic optimization},
  author={Golovin, Daniel and Krause, Andreas},
  booktitle={Proceedings of the 24th International Conference on Neural Information Processing Systems},
  pages={2675--2683},
  year={2011}
}

@article{wenger2019accurate,
  title={Accurate circular consensus long-read sequencing improves variant detection and assembly of a human genome},
  author={Wenger, Aaron M and Peluso, Paul and Rowell, William J and Chang, Pi-Chuan and Hall, Richard J and Concepcion, Gregory T and Ebler, Jana and Fungtammasan, Arkarachai and Kolesnikov, Alexey and Olson, Nathan D and others},
  journal={Nature biotechnology},
  volume={37},
  number={10},
  pages={1155--1162},
  year={2019},
  publisher={Nature Publishing Group}
}

@article{hansen2006realized,
  title={Realized variance and market microstructure noise},
  author={Hansen, Peter R and Lunde, Asger},
  journal={Journal of Business \& Economic Statistics},
  volume={24},
  number={2},
  pages={127--161},
  year={2006},
  publisher={Taylor \& Francis}
}

@inproceedings{mansouri2022genstore,
  title={GenStore: A high-performance in-storage processing system for genome sequence analysis},
  author={Mansouri Ghiasi, Nika and Park, Jisung and Mustafa, Harun and Kim, Jeremie and Olgun, Ataberk and Gollwitzer, Arvid and Senol Cali, Damla and Firtina, Can and Mao, Haiyu and Almadhoun Alserr, Nour and others},
  booktitle={Proceedings of the 27th ACM International Conference on Architectural Support for Programming Languages and Operating Systems},
  pages={635--654},
  year={2022}
}

@inproceedings{ghiasi2022genstore_vlsi,
  title={GenStore: In-Storage Filtering of Genomic Data for High-Performance and Energy-Efficient Genome Analysis},
  author={Ghiasi, Nika Mansouri and Park, Jisung and Mustafa, Harun and Kim, Jeremie and Olgun, Ataberk and Gollwitzer, Arvid and Cali, Damla Senol and Firtina, Can and Mao, Haiyu and Alserr, Nour Almadhoun and others},
  booktitle={2022 IEEE Computer Society Annual Symposium on VLSI (ISVLSI)},
  pages={283--287},
  year={2022},
  organization={IEEE}
}

@article{rumpf2023sequencelab,
  title={SequenceLab: A Comprehensive Benchmark of Computational Methods for Comparing Genomic Sequences},
  author={Rumpf, Maximilian-David and Alser, Mohammed and Gollwitzer, Arvid E and Lindegger, Jo{\"e}l and Almadhoun, Nour and Firtina, Can and Mangul, Serghei and Mutlu, Onur},
  journal={arXiv preprint arXiv:2310.16908},
  year={2023}
}

@article{gollwitzer2023metafast,
  title={MetaFast: Enabling Fast Metagenomic Classification via Seed Counting and Edit Distance Approximation},
  author={Gollwitzer, Arvid and Alser, Mohammed and Bergtholdt, Joel and Lindegger, Jo{\"e}l and Rumpf, Maximilian-David and Firtina, Can and Mangul, Serghei and Mutlu, Onur},
  journal={arXiv},
  pages={2311--02029},
  year={2023},
  publisher={Cornell University}
}

@article{ghiasi2023metastore,
  title={MetaStore: High-Performance Metagenomic Analysis via In-Storage Computing},
  author={Ghiasi, Nika Mansouri and Sadrosadati, Mohammad and Mustafa, Harun and Gollwitzer, Arvid and Firtina, Can and Eudine, Julien and Ma, Haiyu and Lindegger, Jo{\"e}l and Cavlak, Meryem Banu and Alser, Mohammed and others},
  journal={arXiv preprint arXiv:2311.12527},
  year={2023}
}

@article{mansouri2023metastore,
  title={MetaStore: High-Performance Metagenomic Analysis via In-Storage Computing},
  author={Mansouri Ghiasi, Nika and Sadrosadati, Mohammad and Mustafa, Harun and Gollwitzer, Arvid and Firtina, Can and Eudine, Julien and Ma, Haiyu and Lindegger, Jo{\"e}l and Banu Cavlak, Meryem and Alser, Mohammed and others},
  journal={arXiv e-prints},
  pages={arXiv--2311},
  year={2023}
}

@inproceedings{ghiasi2024megis,
  title={Megis: High-performance, energy-efficient, and low-cost metagenomic analysis with in-storage processing},
  author={Ghiasi, Nika Mansouri and Sadrosadati, Mohammad and Mustafa, Harun and Gollwitzer, Arvid and Firtina, Can and Eudine, Julien and Mao, Haiyu and Lindegger, Jo{\"e}l and Cavlak, Meryem Banu and Alser, Mohammed and others},
  booktitle={2024 ACM/IEEE 51st Annual International Symposium on Computer Architecture (ISCA)},
  pages={660--677},
  year={2024},
  organization={IEEE}
}

@article{gollwitzer2023metatrinity,
  title={MetaTrinity: Enabling Fast Metagenomic Classification via Seed Counting and Edit Distance Approximation},
  author={Gollwitzer, Arvid E and Alser, Mohammed and Bergtholdt, Joel and Lindegger, Joel and Rumpf, Maximilian-David and Firtina, Can and Mangul, Serghei and Mutlu, Onur},
  journal={arXiv preprint arXiv:2311.02029},
  year={2023}
}

@inproceedings{lin2011class,
  title={A class of submodular functions for document summarization},
  author={Lin, Hui and Bilmes, Jeff},
  booktitle={Proceedings of the 49th Annual Meeting of the Association for Computational Linguistics: Human Language Technologies},
  pages={510--520},
  year={2011},
  organization={Association for Computational Linguistics}
}

@book{owen2013monte,
  title={Monte Carlo theory, methods and examples},
  author={Owen, Art B.},
  year={2013},
  publisher={Stanford University},
  note={Available at \url{https://artowen.su.domains/mc/}}
}

@article{leinonen2011sequence,
  title={The sequence read archive},
  author={Leinonen, Rasko and Sugawara, Hideaki and Shumway, Martin and {International Nucleotide Sequence Database Collaboration}},
  journal={Nucleic Acids Research},
  volume={39},
  number={suppl\_1},
  pages={D19--D21},
  year={2011},
  publisher={Oxford University Press}
}

@article{karasikov2025metagraph,
  title={Efficient and accurate search in petabase-scale sequence repositories},
  author={Karasikov, Mikhail and Mustafa, Harun and Danciu, Daniel and Bosshard, Laurent and Zimmermann, Michael and Sch{\"u}tze, Karsten and Kahles, Andre and R{\"a}tsch, Gunnar},
  journal={Nature},
  volume={647},
  pages={1036--1044},
  year={2025},
  publisher={Nature Publishing Group},
  doi={10.1038/s41586-025-09603-w}
}

@article{avsec2026alphagenome,
  title={Advancing regulatory variant effect prediction with {AlphaGenome}},
  author={Avsec, {\v{Z}}iga and Latysheva, Natasha and Cheng, Jun and others},
  journal={Nature},
  volume={649},
  pages={1206--1218},
  month={jan},
  year={2026},
  publisher={Nature Publishing Group},
  doi={10.1038/s41586-025-10014-0}
}

@inproceedings{tishby1999information,
  title={The Information Bottleneck Method},
  author={Tishby, Naftali and Pereira, Fernando C. and Bialek, William},
  booktitle={Proceedings of the 37th Annual Allerton Conference on Communication, 
             Control and Computing},
  pages={368--377},
  year={1999}
}

@inproceedings{alemi2017deep,
  title={Deep Variational Information Bottleneck},
  author={Alemi, Alexander A. and Fischer, Ian and Dillon, Joshua V. and Murphy, Kevin},
  booktitle={International Conference on Learning Representations (ICLR)},
  month={May},
  year={2017}
}

@article{bhandari2021global,
  title={Global Optimality Guarantees For Policy Gradient Methods},
  author={Bhandari, Jalaj and Russo, Daniel},
  journal={Operations Research},
  volume={69},
  number={6},
  pages={1744--1767},
  month={Dec},
  year={2021},
  doi={10.1287/opre.2021.0014}
}

@book{dempe2020bilevel,
  title={Bilevel Optimization: Theory, Algorithms and Applications},
  author={Dempe, Stephan},
  series={Springer Optimization and Its Applications},
  volume={161},
  publisher={Springer},
  address={Berlin, Germany},
  year={2020},
  doi={10.1007/978-3-030-33566-3}
}

@article{cen2023global,
  title={Global Convergence of Policy Gradient Methods in Reinforcement Learning, Games and Control},
  author={Cen, Shicong and Chi, Yuejie},
  journal={arXiv preprint arXiv:2310.05230},
  year={2023}
}

@article{bolte2024geometric,
  title = {Geometric and computational hardness of bilevel programming},
  author = {J''erome Bolte and Quoc-Tung Le and Edouard Pauwels and Samuel Vaiter},
  journal = {Mathematical programming},
  year = {2024},
  doi = {10.1007/s10107-025-02229-w}
}

@article{grne2023completeness,
  title = {Completeness in the Polynomial Hierarchy for Many Natural Problems in Bilevel and Robust Optimization},
  author = {Christoph Grne and Lasse Wulf},
  journal = {Conference on Integer Programming and Combinatorial Optimization},
  year = {2023},
  doi = {10.1007/978-3-031-93112-3_19}
}

@article{subramanyam2025scaling,
  title = {Scaling Laws Revisited: Modeling the Role of Data Quality in Language Model Pretraining},
  author = {Anirudh Subramanyam and Yuxin Chen and Robert L. Grossman},
  journal = {arXiv.org},
  year = {2025},
  doi = {10.48550/arXiv.2510.03313}
}

@inproceedings{gollwitzer2025metaomics,
  title={MetaOmics-10T: The Foundational Dataset to Unlock Causal Modeling of Microbial Ecosystems},
  author={Gollwitzer, Arvid E and Subramanian, Deepak A and Tucker, Isaac and Traverso, Giovanni},
  booktitle={NeurIPS 2025 AI for Science Workshop},
  year={2025}
}

@inproceedings{gollwitzer2025steering,
  title={Steering the Evolutionary Game: Hierarchical Control of Therapeutic Resistance in Cancer Treatment},
  author={Gollwitzer, Arvid E and Subramanian, Deepak A and Tucker, Isaac and Traverso, Giovanni},
  booktitle={NeurIPS 2025 AI for Science Workshop},
  year={2025}
}
\bibliographystyle{icml2026}

\newpage
\appendix
\onecolumn
\tolerance=9999
\emergencystretch=3em
\hfuzz=999pt
\vfuzz=999pt
\hbadness=10000
\vbadness=10000
\section*{Supplementary Information}
\sloppy 
\raggedbottom 
\hfuzz=120pt 
\vfuzz=120pt 
\hbadness=10000 
\vbadness=10000 
\emergencystretch=3em 

\section{Notation}
\label{app:notation}
To ensure clarity and rigor, we define our mathematical notation in Table \ref{tab:notation}. We distinguish between atomic (indivisible) elements and tokens (sequences of atomic elements or other tokens).

\begin{table}[htbp]
  \caption{Table of Notation}
  \label{tab:notation}
  \centering
  \begin{tabular}{@{}ll@{}}
    \toprule
    Symbol & Definition \\
    \midrule
    $\Sigma$ & Base alphabet of atomic elements (e.g., characters, DNA bases). \\
    $s_i$ & An atomic element from $\Sigma$. \\
    $q_i$ & Scalar quality score of an atomic element $s_i$, where $q_i \in [0, 1]$. \\
    $t, a, b$ & Tokens, which are sequences of atomic elements. \\
    $V_k$ & Vocabulary at merge step $k$. \\
    $f(t)$ & Frequency of token $t$ in the corpus. \\
    $|t|$ & Length of token $t$ in atomic elements. \\
    $n_\sigma(t)$ & Count of atomic element $\sigma \in \Sigma$ within token $t$. \\
    $H(t)$ & Empirical entropy of token $t$: $H(t) = -\sum_{\sigma} \frac{n_\sigma(t)}{|t|} \log \frac{n_\sigma(t)}{|t|}$. \\
    $\vect{q}_t$ & Vector of quality scores for token $t$ (in multi-dimensional domains). \\
    $q_t$ & Aggregated scalar quality score of token $t$, derived from its constituents. \\
    $\bar{q}_{ab}$ & Average quality of constituent tokens $a, b$, defined as $(q_a + q_b)/2$. \\
    $\alpha$ & Learnable exponent controlling sensitivity to quality in the merge score. \\
    $w_{ab}$ & Quality-aware merge score for the token pair $(a, b)$. \\
    $\theta_{\text{adapt}}$ & Vector of all learnable adaptive parameters in the framework. \\
    $\pi_{\theta_\pi}$ & Reinforcement learning policy for selecting merges, parameterized by $\theta_\pi$. \\
    $L_{\text{task}}$ & Loss function of the downstream machine learning task. \\
    $\mathcal{J}(\mathcal{T})$ & Global objective function for the tokenization process (Eq. \ref{eq:bilevel_objective_main}). \\
    \bottomrule
  \end{tabular}
\end{table}

\section{Related Work}
\label{app:related_work}
QA-Token intersects with, and extends upon, research in subword tokenization, noisy data handling, reinforcement learning for sequential optimization, and adaptive or differentiable modeling techniques. Table \ref{tab:tokenization_comparison} provides a comparative overview, situating QA-Token relative to existing approaches and highlighting its unique synthesis of explicit quality integration, RL-based optimization of merges, and adaptive learning of the tokenization process parameters. The key distinction of QA-Token's adaptive parameter learning is its focus on optimizing parameters governing the tokenization *process* itself (like quality sensitivity or reward component weights), rather than solely adapting the vocabulary content or segmentation boundaries within a fixed merge logic.
\begin{table}[htbp]
  \caption{Comparison of QA-Token with Representative Tokenization Approaches.}
  \label{tab:tokenization_comparison}
  \centering
  \resizebox{\textwidth}{!}{%
  \begin{tabular}{@{}lcccccc@{}}
    \toprule
    Method                 & \makecell{Explicit Quality \\ Integration} & \makecell{Optimization \\ Method} & \makecell{Adaptive Params \\ (Learned Process?)} & \makecell{Downstream Aware \\ (via Reward/Loss)} & \makecell{Domain Noise Model \\ (Explicit?)} & \makecell{Vocabulary \\ Type} \\
    \midrule
    Standard BPE/WP/SP \cite{sennrich2016neural, wu2016google, kudo2018sentencepiece} & No & Frequency      & No              & No                & No                 & Subword    \\
    BPE-Dropout \cite{provilkov2020bpe}  & No & Freq.+Stochastic & No              & No                & No                 & Subword    \\
    Char/Byte Models \cite{xue2022byt5, clark2021canine} & No & N/A (Fixed)    & No              & Yes (via model)   & Implicit           & Char/Byte  \\
    Gradient-based \cite{tay2022charformer} & No & Gradient       & Yes (Segmenter) & Yes               & Implicit           & Char/Subword \\
    Semantic Tokenizers \cite{libovicky2024semantic} & No & Semantics+Freq & No              & Indirectly        & No                 & Subword    \\
    \textbf{QA-Token (Ours)} & \textbf{Yes} & \textbf{RL (Policy) +} & \textbf{Yes (Process HPs: } & \textbf{Yes (via Reward for RL,} & \textbf{Yes (via $Q, R$)} & \textbf{Subword} \\
    & & \textbf{Gradient (HPs)} & \textbf{$\alpha, \lambda_i, w_j, \beta_k$)} & \textbf{$L_{\text{downstream}}$ for HPs)} & & \\
    \bottomrule
  \end{tabular}%
  }
  \vspace{1mm}
  \footnotesize{\textit{Note: "Adaptive Params (Learned Process?)" refers to learning parameters governing the tokenization *process* itself (like QA-Token's $\alpha, \beta_k, \lambda_i, w_j$), not just the vocabulary content or segmentation boundaries. QA-Token uses RL to optimize the merge policy and gradient-based methods to optimize these process hyperparameters.}}
\end{table}

\textbf{Subword Tokenization Algorithms:} The prevailing paradigm relies on frequency-based greedy merging procedures, exemplified by BPE \cite{sennrich2016neural}, WordPiece \cite{wu2016google} (which optimizes data likelihood), and SentencePiece \cite{kudo2018sentencepiece} (which operates directly on raw text). While computationally efficient and broadly effective, their fundamental mechanism ignores sequence quality, providing the primary motivation for our work. BPE-dropout \cite{provilkov2020bpe} introduces stochasticity during the merge process as a form of regularization to enhance robustness, but it does not use explicit quality signals. Unigram language models \cite{kudo2018subword} present a probabilistic alternative, yet they still primarily depend on frequency and likelihood objectives without explicit quality awareness.

\textbf{Handling Noisy and Domain-Specific Data:} Considerable research focuses on modeling noise within particular application domains. In genomics, Phred scores \cite{ewing1998base} are standard quality indicators, and specialized models aim to account for sequencing errors \cite{heinzinger2019modeling}. In NLP, extensive work on social media text addresses lexical variation, misspellings, and slang through techniques like text normalization \cite{han2013lexical, li2020empirical} and explicit noise modeling \cite{baldwin2013noisy}. Financial time series analysis frequently employs filtering methods \cite{gencay2001introduction}, microstructure modeling \cite{madhavan2000market, hasbrouck1991measuring}, and regime-switching models \cite{hamilton1989new} to manage inherent noise and non-stationarity. QA-Token distinguishes itself by offering a *unified tokenization framework* that directly integrates such domain-specific quality and noise considerations into the token construction process itself, rather than addressing noise solely as a separate downstream modeling challenge. The notion of the "curse of tokenization" \cite{chai2024curse}, which highlights the downstream impact of tokenization choices on LLM robustness, further underscores the need for quality-aware approaches.

\textbf{Reinforcement Learning for Sequential Optimization:} RL offers a robust framework for sequential decision-making under uncertainty \cite{sutton2018reinforcement}. It finds successful application in various optimization problems involving sequences, including text generation \cite{ranzato2015sequence}, combinatorial optimization \cite{bello2016neural}, and financial strategy optimization \cite{moody1998learning, moody2001performance}. We uniquely formulate the tokenization vocabulary construction process as an RL problem where merge operations constitute actions selected by a learned policy to maximize a cumulative reward signal reflecting token quality, information content, complexity, and estimated utility. This formulation allows for optimizing complex, potentially non-differentiable objectives related to the quality of the final tokenization outcome. The rewards themselves are shaped by adaptively learned parameters (Section \ref{sec:adaptive_learning_concise}).

\textbf{Adaptive and Differentiable Tokenization:} Acknowledging the limitations inherent in static tokenizers, researchers explore adaptive and learnable alternatives. Gradient-based approaches \cite{tay2022charformer} learn segmentation parameters end-to-end concurrently with downstream tasks, often operating at the character level. Semantic tokenization \cite{libovicky2024semantic} uses word meanings to inform the segmentation process. QA-Token integrates adaptive learning distinctively: it learns hyperparameters ($\alpha, \beta_k, w_j, \lambda_i, \dots$) that directly govern the quality-aware merge decisions and the RL agent's reward structure. This learning is enabled by Gumbel-Softmax relaxation \cite{jang2017categorical, maddison2017concrete} for making merge choices differentiable with respect to these hyperparameters when optimizing a downstream task loss (via composite logits defined in Equation \ref{eq:gumbel_logits_composite}). This enables the fundamental *tokenization logic* to adapt based on observed data properties and task feedback, co-evolving with the RL agent's policy. Meta-learning \cite{finn2017model} provides a potential mechanism, explored conceptually within QA-Token (see Appendix \ref{app:algorithms}), to further accelerate adaptation across heterogeneous data sources (e.g., different social media platforms).

In essence, QA-Token synthesizes concepts from these related areas but provides a unique combination: explicit quality integration within the merge decision, optimization of the merge sequence via RL using a multi-faceted reward signal, and adaptive learning of core process parameters that define this reward and merge logic, demonstrating applicability across diverse, noisy domains.

\section{Theoretical Framework and Proofs}
\label{app:proofs}

\subsection{Quality Metric Proofs}
\label{app:quality_proofs}

\begin{proposition}[Boundedness and Continuity of Quality Functions]
\label{prop:quality_bounded}
All domain-specific quality functions $q_t \in [0,1]$ are:
\begin{enumerate}
    \item Bounded: $0 \leq q_t \leq 1$ for all tokens $t$
    \item Continuous: Lipschitz continuous in their arguments
    \item Monotonic: Quality decreases with increasing noise/error
\end{enumerate}
\end{proposition}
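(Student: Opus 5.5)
The plan is to verify the three properties separately for each domain-specific quality function: the genomic geometric-mean score and the financial composite score. The argument rests on closure of each property under composition, products, geometric means and convex combinations. Throughout we take the atomic inputs to be normalized. For genomics, the base quality is $q_{s_j}=1-10^{-Q_j/10}\in[0,1]$, obtained from the Phred score $Q_j\ge 0$. For finance, each component $q_{k,t}\in[0,1]$, with learned weights $w_k\ge 0$ and $\sum_k w_k=1$ (enforced by a softmax parameterization).

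\textbf{Boundedness.} For genomics, $\exp(-\beta_{\text{pos}} j/L)\in(0,1]$ because $\beta_{\text{pos}}\ge 0$, so each adjusted score satisfies $q'_{s_j}\in[0,1]$. The geometric mean of numbers in $[0,1]$ lies between their minimum and maximum, hence $q_t\in[0,1]$. For finance, $q_t^{\text{finance}}=\sum_k w_k q_{k,t}$ is a convex combination of values in $[0,1]$, so it also lies in $[0,1]$.

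\textbf{Lipschitz continuity.} The map $Q\mapsto 1-10^{-Q/10}$ has derivative bounded by $\ln 10/10$, so it is Lipschitz. The position decay is smooth in $\beta_{\text{pos}}$ on compact parameter sets. The convex combination is $1$-Lipschitz in the sup norm and Lipschitz in $w$ because the $q_{k,t}$ are bounded. The delicate term is the geometric mean $(\prod_j x_j)^{1/|t|}$: its partial derivative $\frac{1}{|t|}\,q_t/x_j$ blows up as $x_j\to 0$ for tokens longer than one element. This is the main obstacle. To handle it, I would use the same $\epsilon_Q$ floor as in $g$, clamping $x_j\ge\epsilon_Q$. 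On $[\epsilon_Q,1]^{|t|}$ every partial derivative is then bounded by $1/(|t|\epsilon_Q)$, which gives a Lipschitz constant of order $1/\epsilon_Q$, or $\epsilon_Q^{-1}$ up to the norm factor. An alternative is to state the result as Lipschitz in the log-domain, since $\log q_t$ is the average of the $\log x_j$ and is $1/|t|$-Lipschitz in those arguments. I will also note that without a floor the map is only continuous, indeed H\"older-$1/|t|$, so the claim must be read on the clamped domain.

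\textbf{Monotonicity under noise injection.} Noise is modeled as a larger error probability $p_j$, that is, a lower Phred score, so $q_{s_j}=1-p_j$ decreases. The geometric mean is coordinatewise nondecreasing, so $q_t$ decreases. Greater positional distance $j$ also lowers the decay factor, which is consistent with the degradation of later read positions. For finance, I would assume that each component is defined to be nonincreasing in its noise proxy: wider spread or thinner depth for $q_{\text{liq}}$, lower SNR for $q_{\text{sig}}$, higher volatility for $q_{\text{stb}}$, and noisier order flow for $q_{\text{info}}$. These appear, for example, as $1/(1+\cdot)$ or $\exp(-\cdot)$ transforms in the appendix definitions. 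Nonnegative weights preserve this monotonicity in the composite.

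I would finish with a closing remark that the three properties are closed under the same operations. This lets them propagate to the merge score $w_{ab}$, since $\bar q_{ab}$ is an average of Lipschitz, bounded and monotone quantities.
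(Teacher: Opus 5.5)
Your proposal is correct and follows essentially the same route as the paper. Boundedness comes from the geometric mean and the convex combination of $[0,1]$-valued inputs. Lipschitz continuity of the geometric mean comes from restricting to $[\epsilon_Q,1]^n$, where your partial-derivative bound $1/(|t|\epsilon_Q)$ gives exactly the paper's constant $1/(\sqrt{n}\,\epsilon_Q)$ in the Euclidean norm. Monotonicity comes from coordinatewise monotonicity of both aggregators under noise that lowers the atomic or component qualities. Your extra remarks are sensible refinements of points the paper leaves implicit: that without the floor the map is only H\"older-$1/|t|$, and that the finance composite is Lipschitz in the component scores and weights, with component-level monotonicity assumed.
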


\begin{proof}
\textbf{Boundedness:} For genomics, the geometric mean of values in $[0,1]$ remains in $[0,1]$. For finance, the convex combination of bounded components $q_{k,t} \in [0,1]$ with $\sum_k w_k = 1$ yields $q_t^{\text{finance}} \in [0,1]$.

\textbf{Lipschitz continuity:} For genomics (geometric mean on $[\epsilon_Q, 1]^n$), the chain rule via logarithmic transformation yields Lipschitz constant $L_g = 1/(\sqrt{n} \cdot \epsilon_Q)$. For finance, the weighted sum of Lipschitz component functions has $L_f \leq \max_k L_k$.

\textbf{Monotonicity:} For any noise injection $\eta$ with $\eta(q) \leq q$, both aggregations (geometric and arithmetic means) preserve the ordering: noisier inputs yield lower quality scores.
\end{proof}

\subsection{Merge Score Derivation}
\label{app:merge_score_proof}

\begin{lemma}[First-Order Approximation]
\label{lem:first_order}
The marginal gain in objective $\mathcal{J}$ from merge $(a,b) \mapsto ab$ admits the decomposition:
\begin{equation}
\label{eq:marginal_gain_app}
\boxed{\Delta\mathcal{J}(a,b) = \lambda_{\text{LM}} \Delta\mathcal{L}_{\text{LM}} - \lambda_{\text{comp}} \Delta\Phi + \lambda_{\text{qual}} \Delta Q + O(\epsilon^2)}
\end{equation}
where $\epsilon = 1/|\mathcal{S}|$ represents the corpus-normalized perturbation.
\end{lemma}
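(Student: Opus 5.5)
The decomposition itself is essentially linearity of $\mathcal{J}$; all the content is in justifying the $O(\epsilon^2)$ remainder. I would write $\mathcal{J}(\mathcal{T}') - \mathcal{J}(\mathcal{T})$ for the tokenizer $\mathcal{T}'$ obtained by adding the merge $(a,b)\mapsto ab$. By the definition in Eq.~\ref{eq:bilevel_objective_main}, $\mathcal{J}$ is a fixed linear combination of $\mathcal{L}_{\text{LM}}$, $\Phi$ and $Q$. So $\Delta\mathcal{J} = \lambda_{\text{LM}}\Delta\mathcal{L}_{\text{LM}} - \lambda_{\text{comp}}\Delta\Phi + \lambda_{\text{qual}}\Delta Q$ holds exactly, with each $\Delta$ the exact difference. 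The task is therefore to show that each exact difference equals an explicit first-order expression, namely one linear in the normalized pair count $f(a,b)/|\mathcal{S}|$, plus a remainder of order $\epsilon^2$. I would treat the three terms separately and in increasing difficulty.

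\textbf{Complexity term.} $\Delta\Phi$ is exact and corpus-independent. The vocabulary grows by one and one token $ab$ of length $|a|+|b|$ is added, so
\[
\Delta\Phi = (|V|+1)\log(|V|+1) - |V|\log|V| + (|a|+|b|)H(ab).
\]
A mean-value bound gives $(|V|+1)\log(|V|+1) - |V|\log|V| = \log|V| + 1 + O(1/|V|)$. Under the paper's normalization, where this term is absorbed into the constant offset for all candidate pairs, it carries no remainder in $\epsilon$.

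\textbf{Quality term.} Write $M=\sum_k |Z_k|$ and $m=f(a,b)$ for the number of merged occurrences. Each merged occurrence removes two tokens and adds one, so the new token count is $M-m$, and
\[
\Delta Q = \frac{G - m\,(g(q_a)+g(q_b)) + m\,g(q_{ab})}{M-m} - \frac{G}{M}, \qquad G=\sum_k\sum_{t\in Z_k} g(q_t).
\]
I would expand $1/(M-m)$ in the small ratio $m/M$. Since $m/M\le f(a,b)/|\mathcal{S}|$ scales like $\epsilon$ times counts, this gives a linear term $\frac{m}{M}\bigl(g(q_{ab}) - g(q_a) - g(q_b) + Q\bigr)$ plus $O((m/M)^2)$. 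Two facts keep the constants uniform: $g$ is bounded on $[0,1]$ (since $\epsilon_Q>0$), and Lipschitz on $[\epsilon_Q,1]$. Both follow from Proposition~\ref{prop:quality_bounded}.

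\textbf{LM term (main obstacle).} $\mathcal{L}_{\text{LM}}$ is defined through an inner maximization over $\theta$. For this step I would invoke the envelope theorem, under assumptions (A1)--(A4): a unique maximizer $\theta^*(\mathcal{T})$, a smooth log-likelihood, and local strong concavity in $\theta$. Treat the merge as a perturbation of the data distribution by mass $O(\epsilon)$. Then
\[
\Delta\mathcal{L}_{\text{LM}} = \bigl.\partial_{\mathcal{T}}\, \mathbb{E}[\log p_\theta]\bigr|_{\theta=\theta^*} \cdot \delta + O(\|\delta\|^2),
\]
and the first-order shift of $\theta^*$ does not contribute. The quadratic term is controlled by the Hessian bound, since the shift in $\theta^*$ is itself $O(\epsilon)$ by the implicit function theorem. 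To identify the linear term, I would model $p_{\theta^*}$ locally by a unigram or bigram surrogate. In that surrogate the gain from the merge is $\frac{m}{M}\log\frac{f(a,b)}{f(a)f(b)}$, the PMI contribution that Theorem~\ref{thm:merge_score_main} later uses.

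The delicate point is that the tokenizer is discrete, so differentiability in $\mathcal{T}$ has to be read as differentiability in the induced empirical token distribution. I would state that explicitly as the perturbation variable. Summing the three expansions and collecting all remainders into one $O(\epsilon^2)$ term then yields Eq.~\ref{eq:marginal_gain_app}.
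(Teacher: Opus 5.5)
Your first step already proves the boxed identity as literally stated: $\mathcal{J}$ is linear in $(\mathcal{L}_{\text{LM}},\Phi,Q)$, so the decomposition is exact. The gap is in the content you then try to supply, namely an explicit first-order form of $\Delta\mathcal{L}_{\text{LM}}$ with an $O(\epsilon^2)$ error. The envelope-theorem step cannot deliver it. Before the merge $ab\notin V$, so $\theta^*(\mathcal{T})$ has no parameter for it. If you embed both vocabularies in $V\cup\{ab\}$, the old maximizer assigns $ab$ probability zero and sits on the boundary. The partial derivative of $\mathbb{E}[\log p_\theta]$ at fixed $\theta^*$, in the direction that moves mass onto $ab$, is then $\log 0=-\infty$. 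So the value function is not differentiable at the unperturbed point; in your unigram surrogate it is a negative entropy, and $p\log p$ has infinite slope at $0$. The $\log f(a,b)$ inside the PMI comes entirely from re-fitting $\theta$, which is exactly the contribution the envelope theorem says vanishes.

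The Hessian is also not uniformly bounded on the constituents. Re-optimizing the unigram surrogate gives a gain of $m\bigl[\log\frac{mM}{f(a)f(b)}-1\bigr]$, plus a constituent remainder of order $m^2/f(a)+m^2/f(b)$. After dividing by $M$ this is $\frac{m}{M}\cdot\frac{m}{f(a)}$, which is $O(\epsilon^2)$ only if $m/f(a)=O(\epsilon)$. With counts bounded as in (A1), that fails whenever $a$ is followed by $b$ a non-negligible fraction of the time. For example, if $m=f(a)$, the term $(f(a)-m)\log(f(a)-m)$ changes by exactly $-m\log m$, while its linearization gives $-m\log m-m$. Note also that uniqueness of $\theta^*$, smoothness and local strong concavity are not among the paper's (A1)--(A4); they would be new hypotheses.

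The paper does not derive this term at all. It posits $\Delta\mathcal{L}_{\text{LM}}\approx f(a,b)\cdot\text{PMI}(a,b)$ as a pseudo-likelihood approximation, so the lemma is first-order bookkeeping rather than a bound with a controlled remainder. To close your argument, adopt the surrogate explicitly as a modeling assumption, together with $m\ll\min(f(a),f(b))$ if you want a remainder, instead of appealing to the envelope theorem.

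Your other two components are sound and more careful than the paper's sketch. Your $\Delta\Phi=\log|V|+1+O(1/|V|)+(|a|+|b|)H(ab)$ is what the definition of $\Phi$ gives when constituents remain in $V$. The paper's $-\log|V|-1+O(|V|^{-1})$ corresponds to a shrinking vocabulary and drops the entropy term. That term is pair-dependent, so only the $\log|V|+1$ part can be treated as a candidate-independent offset. Your expansion $\Delta Q=\frac{m}{M}\bigl(g(q_{ab})-g(q_a)-g(q_b)+Q\bigr)+O((m/M)^2)$, with $m/M\le f(a,b)\,\epsilon$, is correct and keeps the constituent-removal and renormalization contributions, which are first order. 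The paper keeps only $f(a,b)\,g(\bar q_{ab})$ via Jensen and calls the renormalization error $O(f(a,b)/T)$ negligible, although it is of the same order.
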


\begin{proof}
The marginal gain decomposes into three components following standard vocabulary optimization analysis \cite{sennrich2016neural}.

\textbf{Language Model Component:} The change $\Delta\mathcal{L}_{\text{LM}} \approx f(a,b) \cdot \text{PMI}(a,b)$ follows from the pseudo-likelihood approximation, where PMI (Pointwise Mutual Information) captures statistical association \cite{church1990word}.

\textbf{Complexity Component:} By MDL principles \cite{rissanen1978modeling}, merging reduces vocabulary complexity: $\Delta\Phi = -\log|V| - 1 + O(|V|^{-1})$. This compression benefit is absorbed into the PMI term, which also favors frequent co-occurrences.

\textbf{Quality Component:} For concave aggregator $g(x) = (x + \epsilon_Q)^\alpha$, Jensen's inequality yields $g(\bar{q}_{ab}) \geq \frac{1}{2}(g(q_a) + g(q_b))$. The dominant quality contribution is $\Delta Q_+ = f(a,b) \cdot g(\bar{q}_{ab})$ where $\bar{q}_{ab} = (q_a + q_b)/2$. Normalization errors are $O(f(a,b)/T)$, negligible for typical corpora.
\end{proof}

\subsection{Derivation of the Optimal Merge Score}

\begin{theorem}[Quality-Aware Merge Score --- Principled Heuristic]
\label{thm:merge_score}
\emph{Motivated by} the first-order approximation of $\Delta\mathcal{J}$ (Lemma~\ref{lem:first_order}), we propose the following quality-aware merge score as a \textbf{principled heuristic}:
{\small
\begin{equation}
\label{eq:qa_merge_score_pmi}
\boxed{w_{ab} = \tfrac{f(a,b)}{f(a)f(b) + \epsilon_f} \cdot (\bar{q}_{ab} + \epsilon_Q)^\alpha \cdot \psi(a,b)}
\end{equation}}
where:
\begin{itemize}
    \item $f(\cdot)$ denotes frequency in the corpus
    \item $\bar{q}_{ab} = (q_a + q_b)/2$ is the average constituent quality
    \item $\alpha \in (0,1]$ is a learnable parameter controlling quality sensitivity
    \item $\epsilon_f, \epsilon_Q > 0$ ensure numerical stability
    \item $\psi(a,b) \in [0,1]$ encodes domain-specific constraints
\end{itemize}
\noindent\emph{Note:} The derivation below involves two principled approximations (Steps 4--5) that trade mathematical exactness for computational tractability. The resulting score preserves key monotonicity properties and is calibrated end-to-end via downstream task performance.
\end{theorem}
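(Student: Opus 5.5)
The plan is to derive the score from Lemma~\ref{lem:first_order} by computing each of the three first-order increments for a candidate merge $(a,b)\mapsto ab$ and then combining them into one ranking criterion. Since the statement is explicitly a \emph{principled heuristic}, the goal is not an exact identity. It is to show that (i) the ranking induced by $w_{ab}$ agrees with the ranking by the dominant first-order terms of $\Delta\mathcal{J}$, and (ii) $w_{ab}$ keeps the natural monotonicity properties: increasing in $f(a,b)$, decreasing in $f(a)f(b)$, and increasing in $\bar q_{ab}$. The derivation proceeds in five steps.

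\textbf{Step 1 (language-model increment).} Take $\Delta\mathcal{L}_{\text{LM}} \approx f(a,b)\,\mathrm{PMI}(a,b)$ from Lemma~\ref{lem:first_order}. Write the PMI with corpus-normalized counts as $\log\frac{f(a,b)\,M}{f(a)f(b)}$, where $M$ is the total token count.

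\textbf{Step 2 (complexity increment).} Take $\Delta\Phi=-\log|V|-1+O(|V|^{-1})$. This is identical for every candidate pair, so it shifts all scores equally and drops out of the argmax. The entropy term $|t|H(t)$ for the new token gives only a pair-dependent correction. I would absorb that correction into $\psi(a,b)$, or bound it by $O(\log|\Sigma|)$ and treat it as lower order.

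\textbf{Step 3 (quality increment).} Take $\Delta Q_+ = f(a,b)\,g(\bar q_{ab})$ with $g(x)=(x+\epsilon_Q)^\alpha$, using the Jensen bound already stated in the lemma. This shows that averaging constituent qualities never underestimates the quality gain.

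\textbf{Step 4 (first approximation).} Replace the additive combination $\lambda_{\text{LM}}f(a,b)\,\mathrm{PMI}+\lambda_{\text{qual}}f(a,b)\,g(\bar q_{ab})$ by a multiplicative one. Factoring out $f(a,b)$ leaves $\lambda_{\text{LM}}\mathrm{PMI}+\lambda_{\text{qual}}g(\bar q)$. Because $\log(xy)=\log x+\log y$, ranking by $\log\big(\tfrac{f(a,b)}{f(a)f(b)}\big)+\alpha\log(\bar q+\epsilon_Q)$ matches ranking by an additive surrogate in which $g$ is replaced by its log-linearization. This is justified to first order around a reference quality level, with $\alpha$ absorbing the ratio $\lambda_{\text{qual}}/\lambda_{\text{LM}}$.

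\textbf{Step 5 (second approximation).} Exponentiate, since the argmax is invariant under monotone maps. Drop the outer $f(a,b)$ weight and the constant $M$, which recovers the PMI-ratio form $f(a,b)/(f(a)f(b))$. Add $\epsilon_f$ for stability, and multiply by $\psi(a,b)\in[0,1]$ to encode hard or soft domain constraints. Zero values of $\psi$ implement infeasibility, and the entropy correction from Step 2 also enters here.

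The main obstacle is Step 4. Turning a sum into a product is not an exact first-order consequence; it is exact only when the quality term is treated on the log scale. I would make this precise by stating a local result: for $\bar q$ in a neighborhood of a reference $q_0$, the first-order Taylor expansions of $\log(\bar q+\epsilon_Q)^\alpha$ and of $g(\bar q)/g(q_0)$ agree up to $O((\bar q-q_0)^2)$. Consequently the two rankings coincide whenever the score gaps exceed this error. The dropped $f(a,b)$ factor in Step 5 is the second acknowledged departure, since it moves the score from frequency-weighted to pure PMI. I would defend it as the standard normalized-BPE choice, and note that $\alpha$ and $\psi$ are in any case calibrated end-to-end. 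Finally, I would verify monotonicity directly from partial derivatives of $w_{ab}$. The derivative with respect to $\bar q$ is $\alpha(\bar q+\epsilon_Q)^{\alpha-1}\cdot(\text{positive}) > 0$, and monotonicity in the frequencies is immediate.
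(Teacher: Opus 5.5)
Your proposal is correct and takes essentially the same route as the paper. Both arguments rest on the same two acknowledged approximations. The first is per-occurrence normalization, dropping the outer $f(a,b)$ as in BPE. The second is replacing the quality contribution $\exp(\lambda g(\bar q_{ab}))$ by a learned power $(\bar q_{ab}+\epsilon_Q)^{\alpha}$; your log-linearization of $g$ before exponentiating is that same step written on the log scale, in a different order. Both then finish with the same monotonicity check.

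Your extras are modest sharpenings of points the paper only asserts: matching first derivatives at a reference $q_0$ to justify the power-law step, and dropping the pair-independent part of $\Delta\Phi$ from the argmax before normalizing, rather than saying it is ``absorbed into PMI''.
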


\begin{proof}
From Lemma~\ref{lem:first_order}, the marginal gain is $\Delta\mathcal{J}(a,b) = \lambda_{\text{LM}} f(a,b) \cdot \text{PMI}(a,b) + \lambda_{\text{qual}} f(a,b) g(\bar{q}_{ab}) + O(1/|V|)$, where the complexity term is absorbed into PMI (both favor frequent co-occurrences).

\textbf{Per-occurrence normalization:} Following the design principle of BPE \cite{sennrich2016neural}, we normalize by frequency to capture per-occurrence information gain. Applying the exponential transform (monotonic, preserves rankings):
$\exp(\Delta\mathcal{J}/f(a,b)) \propto \frac{f(a,b)}{f(a)f(b) + \epsilon_f} \cdot \exp(\frac{\lambda_{\text{qual}}}{\lambda_{\text{LM}}} g(\bar{q}_{ab}))$

\textbf{Power-law approximation:} We replace $\exp(\lambda \cdot g(q))$ with $(\bar{q}_{ab} + \epsilon_Q)^{\tilde{\alpha}}$ where $\tilde{\alpha}$ is learned end-to-end. This preserves monotonicity in $\bar{q}_{ab}$ and subsumes the unknown ratio $\lambda_{\text{qual}}/\lambda_{\text{LM}}$. The final score is:
$w_{ab} = \frac{f(a,b)}{f(a)f(b) + \epsilon_f} \cdot (\bar{q}_{ab} + \epsilon_Q)^{\alpha} \cdot \psi(a,b)$

\textbf{Monotonicity guarantees:} $\partial w_{ab}/\partial \bar{q}_{ab} > 0$ and $\partial w_{ab}/\partial \text{PMI} > 0$, ensuring quality-increasing and statistically-associated merges are preferred. End-to-end learning of $\alpha$ calibrates the heuristic.
\end{proof}

\subsection{Key Insights from the Derivation}

\begin{enumerate}
    \item \textbf{PMI Foundation:} The frequency term $\frac{f(a,b)}{f(a)f(b) + \epsilon_f}$ approximates Pointwise Mutual Information, capturing statistical association.
    
    \item \textbf{Quality Modulation:} The quality term $(\bar{q}_{ab} + \epsilon_Q)^\alpha$ multiplicatively adjusts the PMI-based score, up-weighting high-quality merges.
    
    \item \textbf{Learnable Sensitivity:} The parameter $\alpha$ controls the relative importance of quality vs. frequency:
    \begin{itemize}
        \item $\alpha = 0$: Reduces to standard PMI-based tokenization
        \item $\alpha > 0$: Increasing weight on quality signals
        \item Learned via gradient descent to optimize downstream performance
    \end{itemize}
    
    \item \textbf{Domain Flexibility:} The factor $\psi(a,b)$ allows incorporation of domain knowledge without modifying the core framework.
\end{enumerate}

This derivation shows that the quality-aware merge score is a \emph{principled heuristic} motivated by first-principles analysis of the bilevel objective, rather than an ad-hoc combination of frequency and quality terms.

\subsection{Theory Proofs}
\label{app:theory_proofs}

\paragraph{Proof of Theorem \ref{thm:complexity} (Computational Complexity).}
\label{app:complexity_proof}

The bilevel optimization problem is NP-hard by polynomial-time reduction from Weighted Set Cover \cite{karp1972reducibility}. The reduction maps sets to corpus sequences and set cover cost to vocabulary complexity: given a WSC instance $(U, \mathcal{S}, \{c_i\})$, construct alphabet $\Sigma = U \cup \{\$\}$, corpus sequences $\sigma_i$ for each set $S_i$, and uniform quality scores. With $\lambda_{\text{qual}} = 0$, optimal tokenizations correspond bijectively to optimal set covers.

For stronger complexity results establishing $\Sigma_2^p$-hardness of general bilevel programs, see \cite{bolte2024geometric, grne2023completeness, dempe2020bilevel}. The worst-case exhaustive search complexity is $O(|\Sigma|^K \cdot K! \cdot N \cdot n \cdot |\Theta|)$, accounting for the space of merge sequences, merge orderings, and downstream model optimization.

$\square$

\begin{proposition}[Boundedness and Lipschitzness of $w_{ab}$]
\label{prop:merge_score_bounded}
Under assumptions (A1)-(A2), the quality-aware merge score $w_{ab}$ is bounded and Lipschitz continuous in $(q_a, q_b)$.
\end{proposition}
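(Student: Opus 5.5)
The score factors as $w_{ab} = P_{ab}\cdot h(\bar q_{ab})\cdot \psi(a,b)$, where $P_{ab} = f(a,b)/(f(a)f(b)+\epsilon_f)$ is the PMI-type frequency ratio and $h(x) = (x+\epsilon_Q)^\alpha$. Only $h$ depends on $(q_a,q_b)$. The proof therefore reduces to bounding the two quality-independent factors and controlling the one-dimensional function $h$ composed with the averaging map $(q_a,q_b)\mapsto \bar q_{ab}$. I read (A1)--(A2) as supplying what is needed: qualities in $[0,1]$, which holds by \cref{prop:quality_bounded}; $\alpha\in(0,1]$; $\epsilon_f,\epsilon_Q>0$; $\psi\in[0,1]$; and finite corpus counts. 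Here $\psi(a,b)$ is treated as fixed with respect to $(q_a,q_b)$. If the assumptions allow $\psi$ to depend on quality, I would additionally require it to be Lipschitz and apply a product rule.

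\textbf{Boundedness.} Since co-occurrence cannot exceed either marginal count, $0\le f(a,b)\le \min(f(a),f(b))$. Hence $P_{ab}\le \min(f(a),f(b))/(f(a)f(b)+\epsilon_f)$. If both frequencies are at least $1$, this is at most $1$. In the degenerate case where a frequency is $0$, $f(a,b)=0$ and the score vanishes. Uniformly, $P_{ab}\le F/\epsilon_f$ with $F$ the total corpus length, and in practice $P_{ab}\le 1$ is the bound I would state. Because $\bar q_{ab}\in[0,1]$, we have $h(\bar q_{ab})\in[\epsilon_Q^\alpha,(1+\epsilon_Q)^\alpha]\subset(0,2]$. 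Together with $\psi\in[0,1]$ this gives $0\le w_{ab}\le (1+\epsilon_Q)^\alpha$, up to the constant $P_{\max}$ if the worst-case bound on $P_{ab}$ is used.

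\textbf{Lipschitz continuity.} The derivative is $h'(x)=\alpha(x+\epsilon_Q)^{\alpha-1}$. For $\alpha<1$ it is decreasing on $[0,1]$, so $\sup_{x\in[0,1]} |h'(x)| = \alpha\epsilon_Q^{\alpha-1}$. For $\alpha=1$ it equals $1$. Either way $h$ is Lipschitz on $[0,1]$ with constant $L_h=\alpha\epsilon_Q^{\alpha-1}$, and the finiteness of this constant is exactly where $\epsilon_Q>0$ is used. The averaging map is Lipschitz with constant $1/2$ in the $\ell_1$ norm, or $1/\sqrt2$ in $\ell_2$. By the mean value theorem,
\begin{equation*}
|w_{ab}(q)-w_{ab}(q')| \le P_{ab}\,\psi(a,b)\,\alpha\epsilon_Q^{\alpha-1}\,\tfrac{1}{\sqrt2}\,\|q-q'\|_2 .
\end{equation*}
Here $q=(q_a,q_b)$ and $q'=(q'_a,q'_b)$ are two quality pairs for the same $(a,b)$. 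The resulting constant is $L_w\le P_{\max}\,\alpha\,\epsilon_Q^{\alpha-1}/\sqrt2$.

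\textbf{Main obstacle.} The difficulty is not the calculus but pinning down which quantities are held fixed. The Lipschitz constant blows up like $\epsilon_Q^{\alpha-1}$, which is enormous for $\epsilon_Q=10^{-8}$, so the statement is only meaningful with explicit dependence on $\epsilon_Q$ and $\alpha$. One fix is to assume, as part of (A2), a quality floor $q\ge q_{\min}>0$, which improves the constant to $\alpha(q_{\min}+\epsilon_Q)^{\alpha-1}$. A second point is that $P_{ab}$ is bounded only via the count inequality, which must be invoked explicitly. I would state both constants explicitly and note that the bound is uniform over all pairs.
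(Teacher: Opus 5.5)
Your proof is correct and follows the paper's own argument. The paper bounds the frequency ratio by $C_f/\epsilon_f$ directly from (A1), bounds the quality factor by $(1+\epsilon_Q)^\alpha$ and $\psi$ by $1$, and uses the chain rule to get the same Lipschitz constant you found, $C_f\,\alpha\epsilon_Q^{\alpha-1}/(\sqrt{2}\,\epsilon_f)$. Your one addition is the count inequality $f(a,b)\le\min(f(a),f(b))$, which sharpens the bound to $P_{ab}\le 1$ for integer counts; your fallback $P_{ab}\le F/\epsilon_f$ is exactly the paper's bound.
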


\begin{proof}
\textbf{Boundedness:} By (A1), $f(a,b)/(f(a)f(b) + \epsilon_f) \leq C_f/\epsilon_f$. With $\bar{q}_{ab} \in [0,1]$ and $\psi \in [0,1]$, we have $w_{ab} \leq C_f(1+\epsilon_Q)^\alpha/\epsilon_f =: C_w$.

\textbf{Lipschitz continuity:} By chain rule on compositions of bounded functions on compact domains, $w_{ab}$ is $L_w$-Lipschitz in $(q_a, q_b)$ with $L_w = C_f L_g/\epsilon_f$. For $\alpha = 1$, $L_g = 1/\sqrt{2}$; for $0 < \alpha < 1$, $L_g \leq \alpha\epsilon_Q^{\alpha-1}/\sqrt{2}$. The regularization $\epsilon_Q$ ensures numerical stability.
\end{proof}

\begin{proposition}[Stability of EMA Normalization]
\label{prop:ema_stability}
Under assumptions (A1) and $\epsilon_R > 0$, the EMA-based normalization maintains $\sigma_{j,t}^{\text{run}} > 0$ almost surely for non-degenerate reward streams.
\end{proposition}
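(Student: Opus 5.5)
The plan is to work directly with the EMA recursions for the running mean and running second moment (or variance) and to show that the variance recursion is a convex combination with a strictly positive floor. Concretely, I take the standard Welford-style EMA update
\[
\mu_{j,t}^{\text{run}} = (1-\beta_{\text{norm}})\mu_{j,t-1}^{\text{run}} + \beta_{\text{norm}} R^{\text{raw}}_{j,t},
\]
\[
(\sigma_{j,t}^{\text{run}})^2 = (1-\beta_{\text{norm}})(\sigma_{j,t-1}^{\text{run}})^2 + \beta_{\text{norm}}(1-\beta_{\text{norm}})\bigl(R^{\text{raw}}_{j,t}-\mu_{j,t-1}^{\text{run}}\bigr)^2,
\]
with $\beta_{\text{norm}}\in(0,1)$ and initialization $(\sigma_{j,0}^{\text{run}})^2>0$. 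Both terms on the right are nonnegative, and the first is strictly positive whenever $\sigma_{j,t-1}^{\text{run}}>0$. Induction on $t$ then gives $\sigma_{j,t}^{\text{run}}\ge (1-\beta_{\text{norm}})^{t/2}\sigma_{j,0}^{\text{run}}>0$ deterministically.

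This bound decays geometrically, so for a uniform-in-time statement I also need the second, data-driven term. Here I use the hypothesis of a non-degenerate reward stream: the conditional law of $R^{\text{raw}}_{j,t}$ given the past is not a point mass, i.e.\ $\Pr(R^{\text{raw}}_{j,t}=\mu_{j,t-1}^{\text{run}}\mid\mathcal{F}_{t-1})<1$. A Borel--Cantelli / Lévy conditional argument then shows that, almost surely, infinitely many increments $(R^{\text{raw}}_{j,t}-\mu_{j,t-1}^{\text{run}})^2$ are positive. A stronger version assumes a conditional variance lower bound $v_{\min}>0$. Taking conditional expectations in the variance recursion then yields
\[
\mathbb{E}\bigl[(\sigma^{\text{run}}_{j,t})^2\bigr]\ge \min\bigl\{(\sigma^{\text{run}}_{j,0})^2,\ (1-\beta_{\text{norm}})v_{\min}\bigr\},
\]
and the almost-sure statement follows from the per-step positivity above.

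Assumption (A1) enters through the boundedness of the rewards, $|R^{\text{raw}}_j|\le C_R$. Because $\mu^{\text{run}}_{j,t}$ is a convex combination of bounded quantities, it stays in $[-C_R,C_R]$, so the variance recursion remains bounded by $4C_R^2$ and $\sigma^{\text{run}}$ is finite. Together with $\epsilon_R>0$, this gives
\[
|\hat R_{j}|\le 2C_R/\epsilon_R,
\]
the boundedness consequence the main text cites.

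The main obstacle is the gap between "eventually positive" and "positive at every $t$" when the variance estimate is initialized at zero, which is the usual implementation. In that case $\sigma^{\text{run}}_{j,0}=0$, and positivity only holds after the first non-degenerate reward. I would handle this by stating the result for $t\ge\tau_j$, where $\tau_j$ is the first time $R^{\text{raw}}_{j,t}\neq\mu^{\text{run}}_{j,t-1}$, and showing $\tau_j<\infty$ a.s. Before $\tau_j$, the $\epsilon_R$ floor in the denominator of $\hat R_j$ keeps the normalization well defined, so the procedure is stable for all $t$.
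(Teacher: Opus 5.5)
Your core argument is the paper's own: its proof rests on the single inequality $\text{Var}^{\text{run}}_{j,t}\ge(1-\beta_{\text{norm}})\,\text{Var}^{\text{run}}_{j,t-1}>0$, propagated by induction, which is exactly your first paragraph. Your recursion is also the paper's update (Eq.~\ref{eq:ema_var_update}). That update is written with the cross term $(R^{\text{raw}}_j-\mu^{\text{run}}_{j,t-1})(R^{\text{raw}}_j-\mu^{\text{run}}_{j,t})$. Since $R^{\text{raw}}_j-\mu^{\text{run}}_{j,t}=(1-\beta_{\text{norm}})(R^{\text{raw}}_j-\mu^{\text{run}}_{j,t-1})$, the cross term equals $(1-\beta_{\text{norm}})(R^{\text{raw}}_j-\mu^{\text{run}}_{j,t-1})^2\ge 0$, and the paper's inequality tacitly relies on this fact.

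You differ from the paper in the supplementary part. The paper appends a Robbins--Monro argument, assuming $\sum_t\beta_{\text{norm},t}=\infty$ and $\sum_t\beta_{\text{norm},t}^2<\infty$, to conclude that the running variance converges almost surely to $\text{Var}(X)>0$. That argument has three problems:
\begin{enumerate}
\item it does not fit the constant $\beta_{\text{norm}}\in[10^{-3},10^{-2}]$ the paper actually uses;
\item it presupposes a stationary reward law;
\item it would only yield eventual positivity anyway.
\end{enumerate}
Your constant-step bound $\mathbb{E}[(\sigma^{\text{run}}_{j,t})^2]\ge\min\{(\sigma^{\text{run}}_{j,0})^2,(1-\beta_{\text{norm}})v_{\min}\}$ avoids all three. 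Your stopping time $\tau_j$ also supplies the base case for zero initialization, which the paper never addresses.

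One caveat on that part. Positivity of $\Pr(R^{\text{raw}}_{j,t}\neq\mu^{\text{run}}_{j,t-1}\mid\mathcal{F}_{t-1})$ at each $t$ is not enough for $\tau_j<\infty$ almost surely, because these probabilities could be summable. You need a uniform lower bound. Your stronger hypothesis provides one: $v_{\min}$ together with $|R^{\text{raw}}_j|\le C_R$ gives $\Pr(R^{\text{raw}}_{j,t}\neq\mu^{\text{run}}_{j,t-1}\mid\mathcal{F}_{t-1})\ge v_{\min}/(4C_R^2)$.

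Finally, in the paper's numbering bounded rewards is (A3), not (A1), which bounds frequencies. The positivity claim itself needs neither assumption.
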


\begin{proof}
The result follows from standard EMA convergence theory (Robbins-Monro). Under (A1), raw rewards have non-degenerate distribution $\text{Var}(X_t) > 0$. The EMA variance update preserves positivity: if $\text{Var}_{j,t-1}^{\text{run}} > 0$, then $\text{Var}_{j,t}^{\text{run}} \geq (1-\beta_{\text{norm}}) \text{Var}_{j,t-1}^{\text{run}} > 0$.

With $\sum_t \beta_{\text{norm},t} = \infty$ and $\sum_t \beta_{\text{norm},t}^2 < \infty$, the running variance converges a.s. to $\text{Var}(X) > 0$, ensuring $\sigma_{j,t}^{\text{run}} > 0$.
\end{proof}

\begin{proposition}[Convergence of PPO Objective]
\label{prop:ppo_convergence}
Under assumptions (A1)-(A4), PPO converges to a stationary point of $J(\pi; \theta_{\text{adapt}}^{(0)})$.
\end{proposition}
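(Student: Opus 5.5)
We cast PPO as a stochastic-approximation scheme for maximizing $J(\pi_{\theta_\pi}; \theta_{\text{adapt}}^{(0)})$ over the policy parameters $\theta_\pi$, with the adaptive parameters held fixed (Stage 1). The plan is to show that $J$ is a smooth, bounded function of $\theta_\pi$, that PPO's clipped-surrogate updates are asymptotically an unbiased stochastic gradient step on $J$ plus a vanishing bias, and then to invoke a standard nonconvex SGD / ODE-method result (Robbins--Monro with the step sizes $\eta_\pi^{(t)}$ of \cref{sec:twotimescale}, in the spirit of \cite{borkar2009stochastic}). From this we conclude $\liminf_t \|\nabla_{\theta_\pi} J\| = 0$, or equivalently $\min_{t\le T}\mathbb{E}\|\nabla J\|^2 = O(1/\sqrt{T})$.

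\textbf{Regularity of the objective.} The MDP of \cref{def:tokenization_mdp_main} has a finite horizon $T$, deterministic transitions, and finitely many actions per step. The total return is therefore bounded once the per-step reward is bounded. Per-step boundedness follows from two earlier results: \cref{prop:merge_score_bounded} bounds the quality and frequency components, and \cref{prop:ema_stability} guarantees $\sigma^{\text{run}}_{j,t} \ge \epsilon$-type positivity. The normalized rewards $\hat R_j$ are thus bounded by some $R_{\max}$ (we add $\epsilon_R$ in the denominator to make this uniform). Using the smoothness assumptions on the policy class in (A1)--(A4), i.e.\ bounded score function $\|\nabla\log\pi\|\le G$ and Lipschitz score, the finite-horizon policy gradient theorem gives $\nabla J = \mathbb{E}[\sum_t \nabla\log\pi(a_t|s_t)\hat A_t]$. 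Standard finite-horizon bounds then make $J$ $L_J$-smooth with $L_J = O(T^2 R_{\max}(G^2+L_\pi))$.

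\textbf{PPO as biased SGD, then the descent lemma.} At the current iterate, the gradient of the clipped surrogate equals the policy gradient, because the ratio equals 1 and the clipping is inactive. A few inner epochs move the ratio by $O(\eta_\pi)$, so the inner steps induce only an $O(\eta_\pi)$ bias. Advantage estimates use the value baseline, and baseline errors do not bias the score-function estimator. Hence the update is $\theta_{t+1} = \theta_t + \eta_t(\nabla J(\theta_t) + b_t + \xi_t)$, with martingale noise $\mathbb{E}\|\xi_t\|^2 \le \sigma^2$ and bias $\|b_t\| = O(\eta_t)$. The $L_J$-smoothness descent lemma then gives
\begin{equation}
\mathbb{E}J(\theta_{t+1}) \ge \mathbb{E}J(\theta_t) + \tfrac{\eta_t}{2}\mathbb{E}\|\nabla J(\theta_t)\|^2 - O(\eta_t^2).
\end{equation}
Summing and using $J \le T R_{\max}$ together with $\sum\eta_t=\infty$ and $\sum\eta_t^2<\infty$ yields the stationarity claim. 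The $O(1/\sqrt{T})$ rate follows by taking $\eta_t \propto 1/\sqrt{T}$.

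\textbf{Main obstacle.} The hardest part is the non-stationarity of the reward: the EMA statistics $\mu^{\text{run}}, \sigma^{\text{run}}$ change over training, so the objective being ascended drifts. I would handle this by treating the EMA as a third, faster-converging timescale, or by assuming $\beta_{\text{norm},t}$ decays. By \cref{prop:ema_stability}, the normalizers converge a.s. to $(\mu_j,\sigma_j)$ with $\sigma_j>0$. The drift contributes an additional bias term $b'_t \to 0$, which is summable against $\eta_t$ under (A3)--(A4). Convergence is therefore to a stationary point of $J$ defined with the limiting normalization, which is the interpretation we adopt. Bounding the clipping-induced bias uniformly across inner epochs is secondary. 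It is routine given the bounded ratio range $[1-\epsilon_{\text{clip}},1+\epsilon_{\text{clip}}]$.
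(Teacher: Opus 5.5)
Your descent-lemma route does not close in the step you yourself flag as the main obstacle: the EMA argument is circular. \cref{prop:ema_stability} gives a.s.\ convergence of the running statistics only for a stationary reward stream with decaying $\beta_{\text{norm},t}$. Here the law of $R^{\text{raw}}_j$ depends on $\theta_\pi$ (and on the merge step within an episode), and the paper uses a constant $\beta_{\text{norm}}$.

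More seriously, each component has its own $\sigma_j$. The mean update direction is therefore $\sum_j \lambda_j\sigma_{j,t}^{-1}\nabla J^{\text{raw}}_j(\theta_t)$, with $J^{\text{raw}}_j$ the expected cumulative raw component $j$; the $\mu_j$ offsets act only as a baseline. So the normalizers change the direction of ascent, not just its scale, and the set of stationary points moves with them. Your drift bias $b'_t$ vanishes only if $\sigma_{j,t}$ converges, which requires the policy to have settled, and that is the conclusion being proved. Its summability against $\eta_t$ is asserted, not shown. Putting the EMA on the fastest timescale does not help either: the limiting field $h(\theta)=\sum_j\lambda_j\sigma_j(\theta)^{-1}\nabla J^{\text{raw}}_j(\theta)$ is generally not the gradient of any fixed function, so your summation has no Lyapunov function to act on.

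The paper's proof sidesteps this only implicitly, by treating $R$ as a fixed bounded reward and citing global policy-gradient results. You should state that choice explicitly: freeze the normalization statistics together with $\theta_{\text{adapt}}^{(0)}$ during Stage~1, so that $J$ is a fixed function of $\theta_\pi$.

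Two further steps rest on hypotheses that are not in (A1)--(A4), which only bound frequencies, qualities and raw rewards and fix step sizes.

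\textbf{Policy regularity.} The bounded, Lipschitz score you attribute to (A1)--(A4) is an extra assumption. It fails for the paper's unconstrained softmax-over-ReLU-MLP policy, since the score grows with the weights and jumps at ReLU kinks. You need a projection onto a compact parameter set and smooth activations, or an explicitly added hypothesis; the paper does the same silently with ``differentiable policy''. Also, boundedness of $\hat R_j$ comes from (A3) and $\epsilon_R>0$, giving $|\hat R_j|\le 2R_{\max}/\epsilon_R$. It does not come from \cref{prop:merge_score_bounded}, which concerns $w_{ab}$.

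\textbf{Critic bias.} The claim that baseline errors do not bias the score-function estimator holds only for Monte-Carlo advantages. With GAE at $\lambda_{\text{GAE}}=0.95$, the advantage contains bootstrapped values $V_\phi(s_{t+k})$ for $k\ge 1$, which depend on $a_t$. Critic error then produces a bias that does not scale with $\eta_t$. A discount $\gamma_{\text{RL}}<1$ against the undiscounted $J$ adds another such bias. Either restrict to $\lambda_{\text{GAE}}=\gamma=1$ or carry a critic-error term through the descent inequality.

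Once these are repaired, your argument is a genuine, more explicit alternative to the paper's bare citation. It also correctly separates the Robbins--Monro claim from the constant-step $O(1/\sqrt{T})$ rate, whereas the paper's $\eta_t=\eta_0/\sqrt{t}$ is not square-summable.

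Finally, summing the descent inequality gives only $\liminf\|\nabla J\|=0$. That is neither equivalent to the $\min_{t\le T}$ rate nor ``convergence''. Upgrading to $\nabla J(\theta_t)\to 0$ needs the standard gradient-with-errors argument. Also, do not use $T$ for both the horizon and the iteration count.
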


\begin{proof}
Under (A1)--(A4), the standard PPO conditions hold \cite{schulman2017proximal}: bounded rewards ($|R(s,a)| \leq R_{\max}$), compact state space, finite action space, and differentiable policy. The clipped surrogate objective ensures bounded gradients $\|\nabla_\theta L^{\text{CLIP}}\|_2 \leq G_{\max}$.

With learning rate $\eta_t = \eta_0/\sqrt{t}$ satisfying $\sum_t \eta_t = \infty$ and $\sum_t \eta_t^2 < \infty$, global convergence to stationary points at rate $O(1/\sqrt{T})$ follows from \cite{bhandari2021global, cen2023global}.
\end{proof}

\begin{proposition}[Consistency and Boundedness of Stage 2 Gradients]
\label{prop:gumbel_gradients}
Under assumptions (A1)--(A3), the Gumbel-Softmax estimator yields consistent gradients with bounded variance.
\end{proposition}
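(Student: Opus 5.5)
The plan is to write the Stage 2 estimator explicitly and then handle consistency and variance separately. Fix the composite logits $\ell_{ab}(\theta_{\text{adapt}})$ over the finite candidate set at each merge step. The relaxed selection is $y_{ab}=\exp((\ell_{ab}+g_{ab})/\tau)/\sum_{c}\exp((\ell_c+g_c)/\tau)$ with i.i.d.\ Gumbel noise $g$. The estimator is the pathwise derivative $\nabla_{\theta_{\text{adapt}}} L_{\text{task}}(y(\theta_{\text{adapt}},g,\tau))$. Because the noise distribution does not depend on $\theta_{\text{adapt}}$, and the map is smooth for $\tau>0$, differentiation and expectation can be interchanged. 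This follows from dominated convergence once we have the uniform bounds described below. Consequently the estimator is unbiased for the gradient of the \emph{relaxed} objective $\mathbb{E}_g[L_{\text{task}}(y)]$ at every fixed $\tau$.

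\textbf{Bounded variance.} We bound the chain-rule factors one at a time.
\begin{enumerate}
\item By (A1), Proposition~\ref{prop:merge_score_bounded}, and Proposition~\ref{prop:quality_bounded}, the logits (and hence $w_{ab}$) are bounded and Lipschitz in $\theta_{\text{adapt}}$ on the compact parameter set. The regularization $\epsilon_Q$ controls the derivatives with respect to $\alpha$ and $\beta_{\text{pos}}$, so $\|\partial\ell/\partial\theta_{\text{adapt}}\|\le L_\ell$.
\item The softmax Jacobian has entries $y_i(\delta_{ij}-y_j)/\tau$, so its norm is at most $1/(2\tau)$.
\item By (A2)--(A3), $\|\nabla_y L_{\text{task}}\|\le G_L$.
\end{enumerate}
Combining these, every per-step gradient norm is at most $G_L L_\ell/(2\tau)$. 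This bound is deterministic in $g$, which gives finite second moments. Summing over the $T$ merge steps of the greedy simulation gives the bound $T G_L L_\ell/(2\tau)$. For every $\tau\ge\tau_{\min}>0$ along the annealing schedule, this yields variance at most $\sigma^2=O(T^2G_L^2L_\ell^2/\tau_{\min}^2)$.

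\textbf{Consistency.} As $\tau\to0$, $y\to\mathrm{onehot}(\arg\max_c(\ell_c+g_c))$ almost surely, since ties have probability zero. By the Gumbel-max trick this one-hot vector is distributed as $\mathrm{softmax}(\ell)$. Bounded convergence then gives $\mathbb{E}_g L_{\text{task}}(y_\tau)\to\mathbb{E}_{\pi_\ell}L_{\text{task}}$. To pass this to gradients, we would show that the relaxed objectives and their gradients converge uniformly on compacts. The natural tool is Arzelà--Ascoli, using equi-Lipschitz bounds on the smoothed objective. In the same sense, consistency also holds as the number of Monte Carlo samples grows at fixed $\tau$, by the law of large numbers, which applies because the variance is finite.

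\textbf{Main obstacle.} The difficulty is the tension between the two limits: the $1/\tau$ factor in the variance bound blows up exactly as $\tau\to 0$, which is where consistency lives. I would state the result as consistency along the annealing schedule $\tau(t)=\tau_{\text{init}}\exp(-\beta_{\text{anneal}}t/T_{\text{anneal}})$, truncated at $\tau_{\min}$. For the $\tau\to0$ gradient limit, I would avoid differentiating through the argmax. Instead, I would use the identity that the $\tau\to0$ limit of the expected relaxed gradient equals the score-function gradient of the discrete objective. This holds when $L_{\text{task}}$ is continuous in $y$ near the simplex vertices, which is what (A3) should supply.
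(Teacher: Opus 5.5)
Your proposal follows the same route as the paper's proof. For the variance bound, both arguments bound the composite logits under (A1)--(A3) and push the $\tau^{-1}$-scaled softmax Jacobian through the chain rule, giving an almost-sure, hence variance, bound of order $1/\tau$. For consistency, both pass to the $\tau\to 0$ Gumbel-max limit and tie it to the REINFORCE gradient. Your version is tighter in two places. First, the paper multiplies $1/\tau$ by $L_{\max}$, which bounds the \emph{magnitude} of $\ell_{ab}$, whereas the chain rule needs a bound on $\partial\ell/\partial\theta_{\text{adapt}}$; that is your $L_\ell$. Second, your spectral bound $1/(2\tau)$ on $\tau^{-1}(\mathrm{diag}(y)-yy^\top)$ is correct and sharper than the paper's $1/\tau$.

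Some of your justifications do not hold as written. The first two problems are shared by the paper.

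\textbf{The bounds on $L_{\text{task}}$ are not from (A2)--(A3).} You cite (A2)--(A3) for $\|\nabla_y L_{\text{task}}\|\le G_L$ and for continuity of $L_{\text{task}}$ at the simplex vertices. Those assumptions only concern qualities and raw rewards, not the downstream loss. State this instead as a regularity hypothesis on $L_{\text{task}}$, for example that it is $C^1$ in $y$ on the simplex; compactness then gives both properties. The paper's bound carries the same unexamined factor $\|\nabla_y L_{\text{task}}\|$.

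\textbf{The Arzel\`a--Ascoli step does not reach the gradients.} Equi-Lipschitz bounds on the objectives give uniform convergence of the objectives only, and that does not pass to gradients. For that you would need equicontinuous gradients, which your pathwise estimates cannot supply because they degrade like $1/\tau$.

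\textbf{Proving your fallback identity directly.} The score-function identity you fall back on is the right tool, and it needs no pathwise bounds. Write $F_\tau(\ell)=\mathbb{E}_g[L_{\text{task}}(y_\tau(\ell+g))]=\int L_{\text{task}}(y_\tau(u))\prod_i p(u_i-\ell_i)\,du$, where $y_\tau(u)=\mathrm{softmax}(u/\tau)$ and $p(g)=e^{-g-e^{-g}}$. Differentiating the density gives $\partial_{\ell_j}F_\tau(\ell)=\mathbb{E}_g[L_{\text{task}}(y_\tau)\,(1-e^{-g_j})]$ for every $\tau\ge 0$, with $y_0$ the one-hot argmax. Since $e^{-g_j}\sim\mathrm{Exp}(1)$, the weight is integrable. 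Dominated convergence then yields $\nabla_\ell F_\tau\to\nabla_\ell F_0$. Here $F_0(\ell)=\sum_c \mathrm{softmax}(\ell)_c\,L_{\text{task}}(e_c)$, whose gradient is exactly the REINFORCE gradient; compose with $L_\ell$ to get the gradient in $\theta_{\text{adapt}}$.

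The same formula bounds $\|\nabla_\ell F_\tau\|$ uniformly in $\tau$. This resolves the tension you flagged: the mean of the pathwise estimator stays bounded and converges, and only its second moment can grow as $\tau\to 0$.

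\textbf{Summing over $T$ merges.} Adding per-step bounds is valid only if the logits at step $t$ depend on $\theta_{\text{adapt}}$ directly. If gradients also flow through the soft corpus-statistics updates, products of step Jacobians appear, and the constant is not linear in $T$. It is still finite for fixed $T$ and $\tau\ge\tau_{\min}$, so the conclusion survives.
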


\begin{proof}
The Gumbel-Softmax gradient properties follow from \cite{jang2017categorical, maddison2017concrete}. Under (A1)--(A3), the composite logits $\ell_{ab}$ are bounded by $L_{\max} = C_w + \sum_j |\lambda_j| R_{\max}$. The Gumbel-Softmax Jacobian satisfies $\|\partial y_i/\partial \ell_j\| \leq 1/\tau$, yielding bounded gradients $\|\nabla_{\theta_{\text{adapt}}} L_{\text{task}}\| \leq L_{\max}/\tau \cdot \|\nabla_y L_{\text{task}}\|$.

As $\tau \to 0$, the estimator converges to REINFORCE \cite{williams1992simple}. The bias-variance tradeoff is: $\text{Bias}(\tau) = O(\tau^2)$, $\text{Var}(\tau) = O(1/\tau^2)$. The optimal temperature $\tau_{\text{opt}} \propto T^{-1/4}$ for $T$ samples balances these terms.
\end{proof}

\begin{theorem}[Gumbel-Softmax Properties]
\label{thm:gumbel_softmax}
Let $\pi = (\pi_1, \ldots, \pi_k)$ be a categorical distribution with $k$ categories. The Gumbel-Softmax distribution with temperature $\tau > 0$ satisfies:
\begin{enumerate}
    \item \textbf{Consistency:} As $\tau \rightarrow 0$, the samples converge to one-hot vectors from $\text{Categorical}(\pi)$
    \item \textbf{Differentiability:} The reparameterization provides continuous gradients with respect to $\pi$
    \item \textbf{Bias-Variance Tradeoff:} Bias $O(\tau^2)$, Variance $O(1/\tau^2)$
\end{enumerate}
\end{theorem}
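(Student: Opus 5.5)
We work with the standard Gumbel-Softmax construction. Let $g_1,\dots,g_k$ be i.i.d.\ $\mathrm{Gumbel}(0,1)$, set $\ell_i=\log\pi_i$, and define
\[
y_i(\tau)=\frac{\exp\bigl((\ell_i+g_i)/\tau\bigr)}{\sum_{j=1}^k\exp\bigl((\ell_j+g_j)/\tau\bigr)}.
\]
All three claims will be read off this representation, taking the three items in order.

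\textbf{Consistency.} We first invoke the Gumbel-max trick: $I^\star=\arg\max_i(\ell_i+g_i)$ is distributed as $\mathrm{Categorical}(\pi)$. The perturbed logits have a joint density, so the argmax is almost surely unique. On that event the gap $\Delta=(\ell_{I^\star}+g_{I^\star})-\max_{j\neq I^\star}(\ell_j+g_j)$ is strictly positive. An elementary bound then gives
\[
\|y(\tau)-e_{I^\star}\|_1\le 2(k-1)e^{-\Delta/\tau}.
\]
Hence $y(\tau)\to e_{I^\star}$ almost surely as $\tau\to0$. This yields convergence in distribution to a one-hot $\mathrm{Categorical}(\pi)$ sample.

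\textbf{Differentiability.} For fixed $\tau>0$ and fixed noise, $y$ is a $C^\infty$ function of the logits. Its Jacobian is
\[
\partial y_i/\partial\ell_j=\tau^{-1}y_i(\delta_{ij}-y_j),
\]
whose entries are bounded by $1/(4\tau)$; the operator norm is at most $1/\tau$, which is the bound already used in \cref{prop:gumbel_gradients}. Now take a downstream loss $f$ with bounded gradient. The pathwise gradient $\nabla_\ell f(y)$ is then uniformly bounded by $\|\nabla f\|_\infty/\tau$. Dominated convergence lets us interchange gradient and expectation, so $\nabla_\ell\mathbb{E}f(y)=\mathbb{E}\nabla_\ell f(y)$ with a continuous right-hand side. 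The chain rule through $\ell=\log\pi$ carries this over to $\pi$ in the interior of the simplex.

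\textbf{Bias and variance.} The variance half is the easy direction. From the Jacobian bound, the second moment of the estimator is at most $\|\nabla f\|_\infty^2/\tau^2$, giving $\mathrm{Var}=O(1/\tau^2)$. A sharper $O(1/\tau)$ would follow by noting that $y_i(1-y_i)$ is non-negligible only on $\{\Delta\lesssim\tau\}$, which has probability $O(\tau)$ because $\Delta$ has a bounded density near $0$. The bias is the hard direction and the main obstacle. The crude estimate
\[
|\mathbb{E}f(y(\tau))-\mathbb{E}f(e_{I^\star})|\le\|\nabla f\|_\infty\,\mathbb{E}\|y-e_{I^\star}\|_1\lesssim\int_0^\infty e^{-\delta/\tau}p_\Delta(\delta)\,d\delta=O(\tau)
\]
only gives first order.

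To reach $O(\tau^2)$ I would first assume $f\in C^2$ and Taylor-expand around the vertex. The aim is to show that the linear term $\mathbb{E}\langle\nabla f(e_{I^\star}),y-e_{I^\star}\rangle$ is itself $O(\tau^2)$. To do this I would expand $p_\Delta(\delta)=p_\Delta(0)+O(\delta)$ and exploit an exact cancellation of the leading $p_\Delta(0)\,\tau$ contribution across the pairwise competitions $i$ versus $j$. This uses the exchangeability of Gumbel noise and the identity $\mathbb{E}[y_i]=\pi_i$, which holds for all $\tau$ and follows from the max-stability of the Gumbel distribution. If this cancellation fails in general, I would state the bias as $O(\tau)$ for Lipschitz $f$, and reserve $O(\tau^2)$ for the subclass of losses linear near the vertices, where the identity $\mathbb{E}[y_i]=\pi_i$ removes the first-order term.
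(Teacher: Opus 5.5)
Your arguments for consistency, differentiability and the variance half of item 3 are sound. They follow the paper's route: the Gumbel-max trick plus concentration of the softmax, then smoothness in the logits plus the Jacobian $\tau^{-1}y_i(\delta_{ij}-y_j)$. They are also more rigorous than the paper's. Your bound $\|y(\tau)-e_{I^\star}\|_1\le 2(k-1)e^{-\Delta/\tau}$ replaces the paper's loose appeal to the continuous mapping theorem. Your dominated-convergence step justifies the gradient--expectation interchange that the paper takes for granted; note that continuity of $\mathbb{E}\nabla_\ell f(y)$ needs $f\in C^1$, not just bounded $\nabla f$. The paper's proof of item 3 is only this Jacobian bound plus a citation, so it contains no bias argument for you to have matched.

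The genuine gap is the bias claim, and the cancellation you sketch cannot close it.

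\emph{The identity is false.} $\mathbb{E}[y_i]=\pi_i$ does not hold for all $\tau$. Max-stability gives $\Pr(I^\star=i)=\pi_i$, which says nothing about the mean of the relaxed sample. For $k=2$, $y_1=\sigma\bigl((\mu+L)/\tau\bigr)$ with $\mu=\log(\pi_1/\pi_2)$ and $L=g_1-g_2$ standard logistic, so $\mathbb{E}[y_1]\to 1/2$ as $\tau\to\infty$. Had the identity held, the bias for linear $f$ would vanish identically rather than be $O(\tau^2)$.

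\emph{Where cancellation really comes from.} For affine $f$ the first-order term does cancel, but for a different reason. The map $u\mapsto\sigma(u)-\mathbf{1}\{u>0\}$ is odd, and the density of the gap between the top two perturbed logits is continuous at $0$. Together these give $\mathbb{E}[y(\tau)]-\pi=O(\tau^2)$.

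\emph{Why $O(\tau^2)$ fails in general.} For nonlinear $C^2$ losses there is no such cancellation. Under the natural readings of ``bias'' (relaxed versus discrete objective, or their gradients), the $O(\tau^2)$ claim is false in the generality stated. Take $k=2$ and $f(y)=y_1y_2$, which vanishes at both vertices. Writing $p_{\mu+L}$ for the density of $\mu+L$, with $p_{\mu+L}(0)=\pi_1\pi_2$, one gets $\mathbb{E}f(y(\tau))=\tau\int\sigma(u)(1-\sigma(u))\,p_{\mu+L}(\tau u)\,du=\tau\pi_1\pi_2+O(\tau^3)$. Differentiating in $\mu$ gives a gradient bias of $\tau\pi_1\pi_2(\pi_2-\pi_1)+O(\tau^3)$, also of exact order $\tau$ when $\pi_1\neq\pi_2$.

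So your $O(\tau)$ fallback is the sharp general statement. Your proposed exceptional class is also off. A natural sufficient condition for $O(\tau^2)$ is that $f$ be affine along each edge of the simplex, not ``linear near the vertices''. The window $|u|\lesssim 1$ carries weight of order $\tau\,p_\Delta(0)$ and evaluates $f$ near edge midpoints. A bump supported away from the vertices therefore already produces $\Theta(\tau)$ bias.

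The paper does not supply the missing step either. Its auxiliary claim $\mathbb{E}_g[y_i]=\mathrm{softmax}(\boldsymbol{\ell}/\tau)_i$ is also incorrect: as $\tau\to 0$ the right side tends to a one-hot vector while $\mathbb{E}[y]\to\pi$. What can honestly be proved is bias $O(\tau)$ for Lipschitz $f$, bias $O(\tau^2)$ for edge-affine $f$, and variance $O(1/\tau^2)$.
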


\begin{proof}
\label{app:gumbel_proof}
All three properties are established in \cite{jang2017categorical, maddison2017concrete}. We summarize the key arguments.

\textbf{Property 1 (Consistency):} By the Gumbel-Max trick, $\arg\max_i (\ell_i + g_i) \sim \text{Categorical}(\text{softmax}(\boldsymbol{\ell}))$ for $g_i \sim \text{Gumbel}(0,1)$. As $\tau \to 0$, the Gumbel-Softmax samples $y_i = \exp((\ell_i + g_i)/\tau)/\sum_j \exp((\ell_j + g_j)/\tau)$ concentrate on one-hot vectors almost surely by the continuous mapping theorem.

\textbf{Property 2 (Differentiability):} For $\tau > 0$, $y_i$ is $C^\infty$ in $\ell_j$, enabling reparameterized gradients. The expectation $\mathbb{E}_g[y_i] = \text{softmax}(\boldsymbol{\ell}/\tau)_i$ introduces bias that vanishes as $\tau \to 0$. The annealing schedule $\tau_t \to 0$ ensures asymptotic consistency.

\textbf{Property 3 (Gradient Bounds):} The Jacobian satisfies $\partial y_i/\partial \ell_j = (1/\tau) y_i(\delta_{ij} - y_j)$, yielding $\|\nabla_{\boldsymbol{\ell}} \mathbf{y}\|_F \leq 1/\tau$.
\end{proof}

\subsection{Assumptions}
\label{app:assumptions}

We formalize the assumptions used throughout the theoretical analysis:

\textbf{Assumption A1 (Bounded Frequencies):} There exists $C_f > 0$ such that for all tokens $a, b$:
$$0 \leq f(a), f(b), f(a,b) \leq C_f$$

\textbf{Assumption A2 (Bounded Qualities):} All quality scores satisfy $q \in [0,1]$, and the quality aggregation function is $L_Q$-Lipschitz continuous.

\textbf{Assumption A3 (Bounded Rewards):} Raw reward components are bounded: $|R^{\text{raw}}_j| \leq R_{\max}$ for all $j$.

\textbf{Assumption A4 (Regular Learning Rates):} The learning rate schedules satisfy:
- PPO: $\sum_t \eta_t = \infty$ and $\sum_t \eta_t^2 < \infty$
- Adaptive learning: $\eta_t = O(1/\sqrt{t})$

\subsection{Theory Extensions}
\label{app:theory_extensions}

\begin{definition}[Assumptions for Approximation Guarantee]
\label{def:independence_assumptions}
The $(1-1/e)$ approximation guarantee requires the following structural assumptions:
\begin{enumerate}[label=(A\arabic*), nosep]
    \item \textbf{Adaptive Monotonicity:} For any partial realization $\psi$ and merge $(a,b)$: $\Delta_F((a,b)|\psi) \geq 0$, where $\Delta_F$ denotes the marginal gain.
    \item \textbf{Adaptive Submodularity:} For realizations $\psi \preceq \psi'$ (where $\preceq$ denotes extension): $\Delta_F((a,b)|\psi) \geq \Delta_F((a,b)|\psi')$ (diminishing returns).
    \item \textbf{Constraint independence:} $\psi(a,b)$ is history-independent.
    \item \textbf{Candidate pool regularity:} $\mathbb{P}[(a,b)\in PQ_t] \ge \delta>0$ for all valid pairs.
    \item \textbf{Quality stability:} $|q_t-\mathbb{E}[q_t|\mathcal{H}_t]|\le \epsilon_q$ with high probability.
\end{enumerate}
\end{definition}

\begin{lemma}[Approximate Adaptive Submodularity]
\label{lem:approx_submod}
Under assumptions (A3)-(A5), the quality-aware objective $F(V) = \sum_k \mathcal{L}_{\text{LM}}(V; D_k) + \lambda_Q Q(V)$ satisfies $\epsilon$-approximate adaptive submodularity:
\begin{equation}
\Delta_F((a,b)|\psi) \geq \Delta_F((a,b)|\psi') - \epsilon_{\text{sub}}
\end{equation}
for $\psi \preceq \psi'$, where $\epsilon_{\text{sub}} = O(\epsilon_q + 1/\delta)$.
\end{lemma}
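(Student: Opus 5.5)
The plan is to compare the marginal gain of a fixed merge $(a,b)$ under a partial realization $\psi$ and under an extension $\psi' \succeq \psi$. The strategy is to split $\Delta_F$ into an ``idealized'' part, which is exactly adaptive submodular, and a perturbation, which is controlled by the stability assumptions (A3)--(A5) of Definition~\ref{def:independence_assumptions}. The first step is to write the marginal gain using the first-order decomposition of Lemma~\ref{lem:first_order}:
\[
\Delta_F((a,b)\mid\psi) = \mathbb{E}\bigl[\lambda_{\text{LM}} f(a,b)\,\text{PMI}(a,b) + \lambda_Q f(a,b)\, g(\bar q_{ab}) \bigm| \psi\bigr] + O(\epsilon^2).
\]
Here the conditional expectation is taken over the unobserved quality realizations. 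By (A3), the constraint factor $\psi(a,b)$ does not depend on history, so it factors out of both sides and plays no role in the comparison.

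The second step handles the frequency/PMI component. Extending $\psi$ to $\psi'$ adds merges. Each added merge can only consume occurrences of $a$ or $b$, or of their constituents. Hence the pair count $f(a,b)$ under $\psi'$ is at most its count under $\psi$, by a direct occurrence-counting argument on the segmented corpus. This gives the diminishing-returns inequality exactly for the count-weighted term. The PMI normalization does not obviously preserve this ordering, because $f(a)f(b)$ also shrinks. For that term I would invoke (A4). Since every valid pair lies in the candidate pool with probability at least $\delta$, the expected distortion from pairs that are absent or renormalized in the pool is bounded by $O(1/\delta)$ times the bounded score $C_w$ from Proposition~\ref{prop:merge_score_bounded}. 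I would absorb this distortion into $\epsilon_{\text{sub}}$.

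The third step handles the quality component. Under (A5), $g(\bar q_{ab})$ under either realization differs from $g(\mathbb{E}[\bar q_{ab}\mid\mathcal H])$ by at most $L_g\epsilon_q$, using the Lipschitz constant of $g$ from Proposition~\ref{prop:merge_score_bounded}. The conditional-mean term is then multiplied by the nonincreasing count $f(a,b)$, so it inherits the inequality from the second step. The two deviation terms add up to $O(\epsilon_q)$. Combining the three steps gives $\Delta_F((a,b)\mid\psi) \ge \Delta_F((a,b)\mid\psi') - \epsilon_{\text{sub}}$ with $\epsilon_{\text{sub}} = O(\epsilon_q + 1/\delta)$. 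The $O(\epsilon^2)$ remainders are dominated by these terms.

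I expect the main obstacle to be the second step. PMI is not monotone under merges, because shrinking the marginals can increase the ratio $f(a,b)/(f(a)f(b))$. This breaks exact submodularity of the language-model term. My first attempt would be to bound that increase by the candidate-pool regularity constant, which is the source of the $1/\delta$ term. If that fails, the fallback is to restrict to count-weighted gains $f(a,b)\,\text{PMI}$, bounded using (A1), and to treat the PMI fluctuation as part of the slack.
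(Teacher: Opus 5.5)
Your decomposition matches the paper's sketch: a frequency/PMI part with diminishing returns, a quality part stabilized by (A5), and slack $\epsilon_q$ from quality noise plus $1/\delta$ from the candidate pool. You are also right that the PMI ratio is not monotone under merges; the paper's sketch simply asserts exact diminishing returns there. But the argument has genuine gaps.

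\textbf{The counting claim fails.} In your second step you say merges in $\psi'\setminus\psi$ only consume occurrences of $a$ or $b$. They can also \emph{create} them: merging the constituents of $a$ with each other produces $a$. Take a frequent string $xyz$ with $a=xy$, $b=z$, $\psi=\emptyset$ and $\psi'=\{(x,y)\}$. Under $\psi$ we have $f(a,b)=0$, so $\Delta_F((a,b)\mid\psi)=0$. Under $\psi'$ the count equals the number of occurrences of $xyz$, and the gain is positive whenever $xy$ and $z$ are positively associated. This complementarity between merges is exactly what breaks submodularity of merge sequences. Restricting to pairs already available under $\psi$ does not rescue the intended use. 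The Golovin--Krause greedy argument behind Theorem~\ref{thm:approximation_explicit} applies the inequality precisely to the optimal policy's later merges, which typically build on its earlier ones.

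\textbf{(A4) cannot supply the missing slack.} It concerns which candidates the policy is \emph{shown}, i.e.\ membership in the top-$K_{PQ}$ queue. By contrast, $\Delta_F((a,b)\mid\psi)$ depends only on $F$ and the realization. A lower bound on queue-membership probability says nothing about how much $f(a,b)/(f(a)f(b)+\epsilon_f)$ grows when $f(a)$ shrinks between $\psi$ and $\psi'$. So the ``$O(1/\delta)\cdot C_w$'' term has no derivation, and $C_w$ bounds the merge score, not a marginal gain of $F$.

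\textbf{Your fallback works only vacuously.} Absorbing the whole PMI fluctuation into the slack does formally give the stated inequality. It does so only because $\delta\le 1$ makes every uniformly bounded discrepancy $O(1/\delta)$. At that point (A4), (A5) and the counting argument do no work. The resulting $\epsilon_{\text{sub}}$ is of the order of the largest marginal gain, which makes the bound of Theorem~\ref{thm:approximation_explicit} vacuous.

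\textbf{The quality step assumes a common conditional mean.} You compare both realizations to a single $\mathbb{E}[\bar q_{ab}\mid\mathcal H]$. But (A5) only controls each quality around the mean given \emph{its own} history. The merges in $\psi'\setminus\psi$ change which occurrences of $a$ and $b$ survive, so $q_a$, $q_b$ and their conditional means can drift. You need an explicit history-independence assumption for $\bar q_{ab}$. The paper's sketch attributes this to (A3), but (A3) concerns only the constraint factor.

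Closing the lemma genuinely requires additional structure rather than a sharper use of (A3)--(A5). That means ruling out merges that create $a$ or $b$, controlling the PMI ratio, and pinning down $\bar q_{ab}$ across histories.
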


\begin{proof}[Proof sketch]
The frequency-based component $\text{PMI}(a,b)$ exhibits exact diminishing returns: as more merges are performed, pair frequencies decrease, reducing potential PMI gains. The quality component $(\bar{q}_{ab})^\alpha$ is history-independent under (A3) and stable under (A5). The approximation error $\epsilon_{\text{sub}}$ arises from: (i) quality estimation noise ($\epsilon_q$), and (ii) candidate pool variability ($1/\delta$). Full proof follows the framework of \citet{golovin2011adaptive}.
\end{proof}

\begin{theorem}[Approximation Guarantee with Explicit Constants]
\label{thm:approximation_explicit}
Under Definition~\ref{def:independence_assumptions}, if assumptions (A1)-(A2) hold exactly, the greedy policy that maximizes $w_{ab}$ achieves:
\begin{equation}
F(\pi_{\text{greedy}}) \ge \left(1-\frac{1}{e}\right)F(\pi^*) - K\epsilon_q - \frac{K}{\delta},
\end{equation}
where $\pi^*$ is the optimal adaptive policy over budget $K$. The error terms arise from $\epsilon$-approximate submodularity (Lemma~\ref{lem:approx_submod}).
\end{theorem}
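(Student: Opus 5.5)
The plan is to adapt the Golovin--Krause analysis of adaptive greedy to the approximate setting of Lemma~\ref{lem:approx_submod}. Fix the budget $K$, and let $\pi_{\text{greedy}}$ be truncated to its first $i$ merges, written $\pi_{[i]}$. Let $\Delta_i := F(\pi^*) - F(\pi_{[i]})$ denote the expected gap to the optimal adaptive policy.

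The first step is to show that greedy selection by $w_{ab}$ actually selects the merge with maximal conditional marginal gain $\Delta_F((a,b)\mid\psi)$, at least up to the same error terms. This uses the monotone link between $w_{ab}$ and $\Delta\mathcal{J}$ from Theorem~\ref{thm:merge_score} and Lemma~\ref{lem:first_order}. It also uses history-independence of $\psi(a,b)$ (A3) and the fact that quality enters only through the strictly increasing factor $(\bar q_{ab}+\epsilon_Q)^\alpha$. Under (A5), the estimated qualities differ from $\mathbb{E}[q_t\mid\mathcal{H}_t]$ by at most $\epsilon_q$. I will absorb this into a per-step additive slack, so that the greedy choice is an $\epsilon_q$-approximately best merge.

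The second step is the core inequality. I compare against the policy that first runs $\pi_{[i]}$ and then runs $\pi^*$; this can only increase $F$ by adaptive monotonicity (A1). I then telescope the gains of $\pi^*$'s merges conditioned on the extended realization. Approximate adaptive submodularity bounds each such gain by the corresponding gain at the current realization plus $\epsilon_{\text{sub}}$. The greedy choice dominates each of these up to $\epsilon_q$. Averaging over $\pi^*$'s at most $K$ merges should yield
\[
F(\pi_{[i+1]}) - F(\pi_{[i]}) \;\ge\; \tfrac{1}{K}\bigl(F(\pi^*) - F(\pi_{[i]})\bigr) - \epsilon_q - \tfrac{1}{\delta}.
\]
Here the $1/\delta$ term comes from (A4): a pair that $\pi^*$ would use is absent from the candidate pool with probability at most $1-\delta$, and I bound the resulting loss crudely.

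The third step is the recursion. The inequality gives $\Delta_{i+1}\le (1-1/K)\Delta_i + \epsilon_q + 1/\delta$. Unrolling over $K$ steps and using $(1-1/K)^K\le 1/e$ gives
\[
F(\pi_{\text{greedy}})\ge(1-1/e)F(\pi^*) - \Bigl(\sum_{j<K}(1-1/K)^j\Bigr)(\epsilon_q+1/\delta),
\]
and the geometric sum is at most $K$, which yields the stated constants.

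The main obstacle is the second step. Making the per-merge error exactly $\epsilon_q+1/\delta$ requires pinning down the constant hidden in $\epsilon_{\text{sub}}=O(\epsilon_q+1/\delta)$. I would first try to normalize $F$, or state the sub-lemma with unit constant, so that the telescoping over $K$ merges is charged once per greedy step rather than $K$ times. If that fails, the bound holds with a constant factor in front of $K\epsilon_q + K/\delta$.
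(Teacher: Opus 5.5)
Your steps 2--3 are the paper's proof. The paper cites the Golovin--Krause argument for the per-step inequality $\Delta_F((a_t,b_t)\mid\psi_t)\ge\frac{1}{K}[F(\pi^*)-F(\psi_t)]-\epsilon_{\text{sub}}$. It unrolls this with $(1-1/K)^K\le 1/e$ to get $\Delta_K\le\frac{1}{e}F(\pi^*)+K\epsilon_{\text{sub}}$, and then substitutes $\epsilon_{\text{sub}}=\epsilon_q+1/\delta$, i.e.\ it reads the $O(\cdot)$ in Lemma~\ref{lem:approx_submod} with unit constant.

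The problems are in what you add on top. Your step 1 would fail, because there is no monotone link between $w_{ab}$ and the marginal gain. The derivation in Theorem~\ref{thm:merge_score} divides $\Delta\mathcal{J}\approx f(a,b)\,\text{PMI}(a,b)+\dots$ by the pair-dependent $f(a,b)$ and replaces $\exp(\lambda g)$ by a power law. Both operations reorder pairs: at equal quality, a frequent pair with moderate PMI can have the larger $\Delta\mathcal{J}$ and the smaller $w_{ab}$. Moreover, the $F$ of Lemma~\ref{lem:approx_submod} is not $\mathcal{J}$. So the mismatch between greedy-on-$w_{ab}$ and greedy-on-$\Delta_F$ is not controlled by $\epsilon_q$, which in (A5) only measures quality-estimation noise. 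The paper does not attempt this step. It applies the per-step inequality directly to the merge the greedy selects, i.e.\ it identifies the $w_{ab}$-greedy with the adaptive greedy of Golovin--Krause.

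Your error accounting also double-counts. Lemma~\ref{lem:approx_submod} already places both the quality noise $\epsilon_q$ and the candidate-pool term $1/\delta$ inside $\epsilon_{\text{sub}}$. You then charge $\epsilon_{\text{sub}}$ for submodularity, a further $\epsilon_q$ for the greedy choice, and a further $1/\delta$ from (A4). That gives per-step slack of roughly $2\epsilon_q+2/\delta$, not the $\epsilon_q+1/\delta$ you display. This double-counting, not the telescoping, is why you end up with only a constant-factor version.

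The telescoping is harmless. The $K$ copies of $\epsilon_{\text{sub}}$ incurred along $\pi^*$'s merges are divided by $K$ when you pass to the maximizing merge. So each greedy step is charged exactly $\epsilon_{\text{sub}}$, with no normalization of $F$ needed.

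To get the stated constants, do what the paper does:
\begin{itemize}
\item let approximate adaptive submodularity be the only per-step slack;
\item take the greedy step as maximizing $\Delta_F$ over the pool;
\item substitute $\epsilon_{\text{sub}}=\epsilon_q+1/\delta$.
\end{itemize}
Bear in mind that both this unit-constant reading and the identification of the two greedies are assumed in the paper rather than derived.
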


\begin{proof}
By Theorem 5 of \citet{golovin2011adaptive}, greedy optimization of adaptive submodular functions achieves $(1-1/e)$ approximation. We extend this to $\epsilon$-approximate submodularity (Lemma~\ref{lem:approx_submod}).

With $\epsilon$-approximate submodularity, the greedy per-step guarantee becomes $\Delta_F((a_t,b_t)|\psi_t) \geq \frac{1}{K}[F(\pi^*) - F(\psi_t)] - \epsilon_{\text{sub}}$. Defining $\Delta_t = F(\pi^*) - F_t$ and iterating over $K$ steps: $\Delta_K \leq (1-1/K)^K \Delta_0 + K\epsilon_{\text{sub}} \leq \frac{1}{e}F(\pi^*) + K\epsilon_{\text{sub}}$, using $(1-1/K)^K \leq 1/e$.

Substituting $\epsilon_{\text{sub}} = \epsilon_q + 1/\delta$ yields $F(\pi_{\text{greedy}}) \geq (1-1/e)F(\pi^*) - K\epsilon_q - K/\delta$.
\end{proof}

\emph{Remark (Assumptions and Robustness):} Assumptions (A1)--(A2) (adaptive monotonicity and submodularity) are \textbf{sufficient conditions} for the $(1-1/e)$ guarantee but may not hold exactly in practice. Specifically:
\begin{itemize}[nosep]
    \item The LM loss $\mathcal{L}_{\text{LM}}$ is not generally submodular in merge operations; the guarantee applies to the quality-frequency component $F(V)$ as defined.
    \item When assumptions are violated, the bound becomes approximate: $F(\pi_{\text{greedy}}) \ge (1-1/e)F(\pi^*) - K\epsilon_q - K/\delta - \epsilon_{\text{violation}}$, where $\epsilon_{\text{violation}}$ is proportional to the degree of assumption violation.
\end{itemize}
Empirically, our experiments show the guarantee is meaningful because: (1) tokenization objectives often exhibit near-submodular behavior \cite{lin2011class}; (2) end-to-end learning of $\alpha$ compensates for violations by calibrating the quality-frequency trade-off; (3) RL policy exploration in Stage~1 helps escape poor local optima that pure greedy would converge to.

\subsection{Robustness Analysis}
\label{app:robustness_analysis}

We analyze robustness under misspecified quality metrics and adversarial quality scores, quantifying interaction effects between RL and adaptive learning stages.

\begin{theorem}[Robustness to Quality Corruption]
\label{thm:robustness_main}
Let $\tilde{q}=q+\xi$ with $\xi\sim\mathcal{N}(0,\sigma_\xi^2)$. Then
\begin{equation}
\mathcal{L}(\tilde{q})-\mathcal{L}(q) \le \alpha\,\sigma_\xi\,\sqrt{\mathbb{E}[\|\nabla_q \mathcal{L}\|^2]}.
\end{equation}
\end{theorem}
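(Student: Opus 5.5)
The plan is to read the statement as a bound on the expected (absolute) change $\mathbb{E}_\xi\bigl[|\mathcal{L}(\tilde q)-\mathcal{L}(q)|\bigr]$, since a pointwise bound cannot hold for unbounded Gaussian $\xi$. The bound would then follow from a mean-value expansion in the quality argument, Cauchy--Schwarz, and the chain rule through the power-law quality term $(\bar q_{ab}+\epsilon_Q)^\alpha$ of the merge score in \cref{thm:merge_score}.

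\textbf{Step 1: reduce to the valid domain.} First I would make precise that corrupted scores are projected back to $[0,1]$, as \cref{prop:quality_bounded} and (A2) require. Projection is $1$-Lipschitz, so $\|\Pi(\tilde q)-q\|\le\|\xi\|$ and it suffices to treat the perturbation $\Pi(\tilde q)-q$, whose norm is dominated by that of $\xi$.

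\textbf{Step 2: mean-value expansion.} The mean value theorem along the segment $q+s\xi$, $s\in[0,1]$, gives
\[
\mathcal{L}(\tilde q)-\mathcal{L}(q)=\langle \nabla_q\mathcal{L}(q+s^\ast\xi),\,\xi\rangle .
\]
Taking expectations and applying Cauchy--Schwarz in $L^2(\xi)$ yields
\[
\mathbb{E}|\Delta\mathcal{L}|\le \sqrt{\mathbb{E}\|\nabla_q\mathcal{L}\|^2}\,\sqrt{\mathbb{E}\|\xi\|^2}.
\]
Here the gradient expectation is taken over the perturbed path; I would identify it with the quantity in the statement.

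\textbf{Step 3: extract the factor $\alpha$.} Quality enters $\mathcal{L}$ only through $g(q)=(q+\epsilon_Q)^\alpha$, both in $w_{ab}$ and in the composite logits used in Stage~2. Write $\mathcal{L}=\tilde{\mathcal{L}}\circ g$, so that
\[
\nabla_q\mathcal{L}=\alpha\,(q+\epsilon_Q)^{\alpha-1}\odot\nabla_g\tilde{\mathcal{L}}.
\]
I would isolate the factor $\alpha$ and interpret the gradient in the statement as the sensitivity with respect to the quality feature $g$, rescaled by $(q+\epsilon_Q)^{\alpha-1}$. Combining this with Step~2 gives $\alpha\,\sigma_\xi\sqrt{\mathbb{E}\|\nabla\mathcal{L}\|^2}$, with $\sigma_\xi$ understood per coordinate after normalizing by dimension.

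\textbf{Main obstacle.} The hardest step is controlling the factor $(q+\epsilon_Q)^{\alpha-1}$ and the change of evaluation point. For $\alpha<1$ this factor blows up like $\epsilon_Q^{\alpha-1}$ near $q=0$, as already visible in the Lipschitz constant of \cref{prop:merge_score_bounded}. My first attempt would be to absorb it into the definition of the gradient norm, or to assume qualities are bounded away from zero, so the constant becomes $1$ up to a harmless factor. I would then justify replacing the gradient at $q+s^\ast\xi$ by the one at $q$ through a smoothness assumption on $\mathcal{L}$. That replacement adds an $O(\sigma_\xi^2)$ curvature term, which I would either drop as second order or state explicitly as a remainder.
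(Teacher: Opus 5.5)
Your route is the paper's. Its proof also writes $|\mathcal{L}(\tilde q)-\mathcal{L}(q)|\le\|\xi\|_2\int_0^1\|\nabla_q\mathcal{L}(q+t\xi)\|_2\,dt$, takes $\alpha$ from the power-law factor via \cref{prop:merge_score_bounded} ($\|\partial w_{ab}/\partial q\|\le\alpha\epsilon_Q^{\alpha-1}$, hence $\|\nabla_q\mathcal{L}\|\le\alpha C_{\mathcal{L}}$), and ends with $\mathbb{E}\|\xi\|_2\le\sigma_\xi\sqrt{d}$. Your modifications are genuine improvements. The projection onto $[0,1]$ is necessary, since $(x+\epsilon_Q)^\alpha$ is undefined for $x<-\epsilon_Q$. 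The explicit switch to $\mathbb{E}|\Delta\mathcal{L}|$ is correct. Cauchy--Schwarz in $L^2$ of the noise is what actually produces the $\sqrt{\mathbb{E}\|\nabla_q\mathcal{L}\|^2}$ shape of the statement; the paper's uniform bound never does, and its constant $C_{\mathcal{L}}$ silently absorbs the $\epsilon_Q^{\alpha-1}$ blow-up you flag. Use the integral form instead of a mean-value point $s^\ast(\xi)$. With $\delta=\Pi(\tilde q)-q$, Cauchy--Schwarz in $L^2(d\xi\,ds)$ gives exactly $\mathbb{E}|\Delta\mathcal{L}|\le\sigma_\xi\sqrt{d}\,(\mathbb{E}\int_0^1\|\nabla_q\mathcal{L}(q+s\delta)\|^2\,ds)^{1/2}$. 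So there is no reason to move the gradient back to $q$, and dropping the resulting $O(\sigma_\xi^2)$ term would turn the inequality into an approximation.

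The gap is in Step~3, and the paper's proof shares it. The chain rule does not place an additional factor $\alpha$ in front of that bound; it only rewrites $\nabla_q\mathcal{L}$ as $\alpha(q+\epsilon_Q)^{\alpha-1}\odot\nabla_g\tilde{\mathcal{L}}$. With the gradient read literally, you would be claiming something stronger than Step~2 gives, and it is false under your reading. Take $d=1$, $\mathcal{L}=g$ (i.e.\ $\tilde{\mathcal{L}}$ the identity), $q=1/2$ and $\sigma_\xi\to0$. Then $\mathbb{E}|\Delta\mathcal{L}|\approx\sqrt{2/\pi}\,\alpha(q+\epsilon_Q)^{\alpha-1}\sigma_\xi$, whereas the right-hand side is $\approx\alpha^2(q+\epsilon_Q)^{\alpha-1}\sigma_\xi$. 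This fails for every $\alpha<\sqrt{2/\pi}\approx0.80$, including the learned genomics value $0.72$. Hence ``interpreting the gradient as the $g$-sensitivity'' and ``normalizing $\sigma_\xi$ by dimension'' change the statement rather than prove it. The paper makes the same move: it arrives at $\alpha C_{\mathcal{L}}\sigma_\xi\sqrt{d}$ and calls it the stated bound. What you can honestly claim is the path-averaged inequality above. Equivalently, it is $\alpha\sigma_\xi\sqrt{d}$ times the root-mean-square of $(q+\epsilon_Q)^{\alpha-1}\odot\nabla_g\tilde{\mathcal{L}}$ evaluated along the path, which needs neither a lower bound on $q$ nor a curvature remainder.

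Even that factorization rests on two assumptions you have not secured. First, that $q$ enters $\mathcal{L}$ only through $g$; this fails for the Stage~2 composite logits $\ell_{ab}=w_{ab}+\sum_j\lambda_jR^{\text{raw}}_j$, because $\lambda_QR^{\text{raw}}_Q$ depends on $q$ directly with no factor $\alpha$. Second, that $\mathcal{L}$ is differentiable in $q$ at all. This holds for the Gumbel-Softmax relaxation at fixed $\tau>0$, but not for the argmax-greedy construction of Algorithm~\ref{alg:final_vocab}, whose loss is piecewise constant in $q$ with jumps. So you must fix $\mathcal{L}$ to be the relaxed objective before the mean-value step is even available.
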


\begin{proof}
The result follows from Lipschitz stability of the bilevel objective. By the chain rule and Cauchy-Schwarz, $|\mathcal{L}(\tilde{q}) - \mathcal{L}(q)| \leq \|\xi\|_2 \cdot \int_0^1 \|\nabla_q \mathcal{L}(q + t\xi)\|_2 dt$.

From Proposition~\ref{prop:merge_score_bounded}, $\|\partial w_{ab}/\partial q\| \leq \alpha \epsilon_Q^{\alpha-1}$, yielding $\|\nabla_q \mathcal{L}\| \leq \alpha C_{\mathcal{L}}$. Taking expectations over $\xi \sim \mathcal{N}(0, \sigma_\xi^2 I)$ with $\mathbb{E}[\|\xi\|_2] \leq \sigma_\xi\sqrt{d}$ gives the stated bound.
\end{proof}

\textbf{Empirical Validation.} We validate the robustness bound experimentally:
\begin{itemize}[nosep]
  \item \textbf{20\% quality noise}: Performance degradation of $-4.2$\% (genomics) and $-5.8$\% (finance), consistent with the $O(\alpha\sigma_\xi)$ bound.
  \item \textbf{Adversarial quality (inverted scores)}: Performance matches standard BPE, as expected when quality signals become uninformative.
  \item \textbf{50\% missing quality}: Graceful fallback to frequency-only merging via the adaptive $\alpha \to 0$ mechanism.
\end{itemize}

\textbf{Interaction Effects.} We quantify the contribution of each learning stage:
\begin{itemize}[nosep]
  \item RL policy optimization alone: 65\% of total improvement over BPE
  \item Adaptive parameter learning alone: 45\% of total improvement
  \item Combined (synergy): Additional +10\% from joint optimization
\end{itemize}
The super-additive effect (65\% + 45\% $>$ 100\% total) indicates that the two stages reinforce each other: RL discovers promising merge patterns that adaptive learning then calibrates, while learned parameters improve the reward landscape for RL exploration.

\subsection{Information-Theoretic Optimality}
\label{app:information_theory}

This subsection establishes that QA-Token achieves information-theoretic optimality under noisy conditions, providing theoretical justification for quality-aware tokenization.

\begin{proposition}[Quality-Aware Information Bottleneck Interpretation]
\label{thm:qa_bottleneck}
Let $X$ denote the input sequence, $T$ the tokenized representation, and $Y$ the downstream task labels. Under the quality-aware tokenization framework with quality scores $Q$, the optimal vocabulary $V^*$ minimizes:
\begin{equation}
\mathcal{L}_{\text{QA}}(V) = -I(T;Y|Q) + \beta \cdot I(T;X|Q)
\end{equation}
where $I(\cdot;\cdot|\cdot)$ denotes conditional mutual information and $\beta$ controls the compression-relevance tradeoff.

\noindent\emph{Connection to merge score:} The merge score $w_{ab}$ is \textbf{consistent with} (but not directly derived from) this IB objective. PMI approximates compression efficiency $I(T;X|Q)$, while quality weighting ensures high $I(T;Y|Q)$ in reliable regions. The connection is qualitative rather than via direct differentiation of mutual information, which is intractable.
\end{proposition}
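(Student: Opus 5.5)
The claim is interpretive, so I will make it precise before proving anything. The plan is to define a concrete conditional IB problem whose minimiser is well defined. Then I will show that the greedy procedure driven by $w_{ab}$ (\cref{thm:merge_score}) makes, at each step, the merge that most decreases a first-order surrogate of $\mathcal{L}_{\text{QA}}$. I will not try to show that the final $V^*$ is a global IB minimiser; \cref{thm:complexity} makes that hopeless. The statement only asserts consistency, and I will aim for exactly that.

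\textbf{Setup.} First I fix the random variables. $X$ is a sequence drawn from $\mathcal{P}_{\text{data}}$, $Q$ is its vector of per-element qualities, and $T=\mathcal{T}_V(X)$ is the deterministic segmentation. Because $T$ is a function of $X$, $H(T\mid X,Q)=0$, and so
\[
I(T;X\mid Q)=H(T\mid Q).
\]
Thus the compression term is the conditional entropy of the token stream, and its per-token part is what the MDL penalty $\Phi(V)$ and the code-length of $\mathcal{L}_{\text{LM}}$ control.

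\textbf{Decomposition of a merge.} I will show that merging $(a,b)\mapsto ab$ changes $H(T\mid Q)$ by roughly $-f(a,b)\,\mathrm{PMI}(a,b)$, up to the normalisation effects already treated in \cref{lem:first_order}. This is the standard fact that BPE/PMI merges reduce code length by the pointwise mutual information of the joined pair. For the relevance term $I(T;Y\mid Q)$, I will assume that $Y$ depends on $X$ only through its reliably observed content. Concretely, I model $Y\perp X_{\text{noise}}\mid X_{\text{clean}}$, where an element of quality $q$ is clean with probability $q$. Under this model, the gain in $I(T;Y\mid Q)$ from creating token $ab$ is monotone increasing in the constituent quality. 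I will proxy it by $f(a,b)\,g(\bar q_{ab})$, using the concavity of $g$ and Jensen as in \cref{lem:first_order}.

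\textbf{Consistency with $w_{ab}$.} Combining the two pieces gives
\[
-\Delta\mathcal{L}_{\text{QA}}\approx f(a,b)\bigl[c_1\, g(\bar q_{ab})+\beta\,\mathrm{PMI}(a,b)\bigr].
\]
I then reuse the per-occurrence normalisation and monotone exponential transform from the proof of \cref{thm:merge_score}. Two rankings of candidate pairs agree: the one by this surrogate, and the one by $\frac{f(a,b)}{f(a)f(b)+\epsilon_f}(\bar q_{ab}+\epsilon_Q)^\alpha$. They agree up to the power-law replacement, with $\alpha$ absorbing $c_1/\beta$. Where that replacement preserves ordering, greedy $w_{ab}$ merging is therefore steepest descent on $\mathcal{L}_{\text{QA}}$. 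The global "optimal $V^*$" clause would then follow only approximately, through \cref{thm:approximation_explicit}, applied to $F=-\mathcal{L}_{\text{QA}}$ under adaptive submodularity.

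\textbf{Main obstacle.} The hard step is the relevance term: linking $\Delta I(T;Y\mid Q)$ to quality requires the explicit noise model above, and the result is only a monotone proxy rather than an identity. I would first try a data-processing argument. An erroneous element passes through a channel that erases its information about $Y$, so a token's contribution is bounded by its probability of being error-free. That probability is a product over its elements, which connects it to the geometric-mean aggregation. Since the exact link is not available, I would state this step explicitly as an assumption. The rest of the argument is then a rigorous first-order ranking statement, matching the qualitative "consistent with" wording of the claim.
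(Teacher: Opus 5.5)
Your formalisation breaks at its first quantitative step. With $T=\mathcal{T}_V(X)$ a deterministic segmentation, $X$ is also a deterministic function of $T$: you recover it by concatenating the tokens. So $H(X\mid T,Q)=0$ as well as $H(T\mid X,Q)=0$. Hence $I(T;X\mid Q)=H(T\mid Q)=H(X\mid Q)$ and $I(T;Y\mid Q)=I(X;Y\mid Q)$ for \emph{every} vocabulary. A merge therefore changes $H(T\mid Q)$ by exactly zero, not by $-f(a,b)\,\mathrm{PMI}(a,b)$. For the same reason, no noise model for $Y$ can make the relevance gain depend on $\bar q_{ab}$. Taken literally, $\mathcal{L}_{\text{QA}}$ is constant in $V$, so ``steepest descent'' on it is vacuous. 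The decrement of roughly $f(a,b)\,\mathrm{PMI}(a,b)$ that you cite is really the change in the unigram code length $-\sum_t f(t)\log \hat p(t)$. That is the cross-entropy of the token stream under an independent-token model: it upper-bounds $H(X\mid Q)$, and good merges shrink the slack. It is not a mutual information. To get an objective that depends on the vocabulary, you must replace $T$ by a lossy or stochastic representation and say explicitly that both terms are surrogates. Options include a single-position token variable, a capacity-limited encoder as in the variational IB, or the Stage~2 Gumbel-Softmax soft segmentation.

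Even granting surrogates, your ranking claim does not follow. Your first-order gain is $f(a,b)\bigl[c_1 g(\bar q_{ab})+\beta\,\mathrm{PMI}(a,b)\bigr]$. Dividing by $f(a,b)$ rescales each pair by a different factor, so it is not a monotone transform of a common score and can change the argmax. For example, a pair with $f=1000$, $\mathrm{PMI}=2$ beats one with $f=10$, $\mathrm{PMI}=4$ on total gain but loses per occurrence. Greedy $w_{ab}$ merging therefore maximises per-occurrence surrogate gain, not the total decrease. This holds independently of the power-law replacement, which, as you note, is itself not order-preserving when PMI and quality trade off. Your closing appeal to the $(1-1/e)$ guarantee adds further unverified hypotheses. $-\mathcal{L}_{\text{QA}}$ would have to be nonnegative, normalised, adaptive-monotone and adaptive-submodular. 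But merges are often complementary (one merge creates the pair used by the next), which violates diminishing returns.

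For comparison, the paper's proof attempts none of this. It posits the conditional IB objective by analogy with Tishby et al.\ and Alemi et al. It then asserts the qualitative correspondence PMI $\leftrightarrow$ compression and quality weighting $\leftrightarrow$ relevance, and explicitly disclaims any derivation. Since that qualitative consistency is all the statement claims, you have two honest options. Either stop at that level, or name the surrogate objective in your formal version and claim only that $w_{ab}$ ranks pairs by per-occurrence surrogate gain under a monotone quality proxy.
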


\begin{proof}
Following the information bottleneck framework \cite{tishby1999information} and its variational extension \cite{alemi2017deep}, conditioning on quality $Q$ yields the objective $\mathcal{L}_{\text{QA}} = -I(T;Y|Q) + \beta I(T;X|Q)$.

The merge score $w_{ab} \propto \text{PMI}(a,b) \cdot (\bar{q}_{ab})^\alpha$ is \emph{consistent with} this IB objective: (i) the PMI term approximates compression efficiency $I(T;X|Q)$ (high-PMI merges compress efficiently), and (ii) the quality term weights merges by reliability, prioritizing high-quality regions for $I(T;Y|Q)$.

\emph{Caveat:} The exact form of $w_{ab}$ does not follow from direct differentiation of mutual information (intractable). Rather, it is a principled heuristic with end-to-end learning of $\alpha$ calibrating the quality-compression trade-off.
\end{proof}

\begin{corollary}[Noise Reduction Bound]
\label{cor:noise_bound}
For a corpus with noise level $\epsilon$ and quality scores $q$ satisfying $\mathbb{E}[q|\text{noise}] < \mathbb{E}[q|\text{signal}]$, the quality-aware tokenizer achieves:
\begin{equation}
\mathcal{L}_{\text{QA}} \leq \mathcal{L}_{\text{uniform}} - \alpha \cdot \text{Var}(q) \cdot \rho(q, \epsilon)^2
\end{equation}
where $\rho(q, \epsilon)$ is the correlation between quality scores and noise levels.
\end{corollary}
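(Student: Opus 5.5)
The plan is to compare the quality-aware tokenizer against the uniform-quality baseline ($\alpha=0$, i.e.\ pure PMI merging) through the objective $\mathcal{L}_{\text{QA}}$ of Proposition~\ref{thm:qa_bottleneck}, and to carry out a second-order expansion in $\alpha$. Write $V_\alpha$ for the vocabulary produced by the merge score $w_{ab}$ of Theorem~\ref{thm:merge_score} with exponent $\alpha$. Then $\mathcal{L}_{\text{uniform}}=\mathcal{L}_{\text{QA}}(V_0)$, and the claim is that moving from $\alpha=0$ to $\alpha>0$ lowers the objective by at least $\alpha\,\mathrm{Var}(q)\,\rho(q,\epsilon)^2$. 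I will hold the compression term $\beta I(T;X|Q)$ fixed to first order. The justification is that reweighting merges by $(\bar q_{ab}+\epsilon_Q)^\alpha$ only reorders candidates of comparable PMI, so compression is governed by the PMI factor. This reduces the task to lower-bounding the gain in relevance, $I(T;Y|Q)$.

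Next I linearize the quality factor: $(\bar q+\epsilon_Q)^\alpha = 1+\alpha\log(\bar q+\epsilon_Q)+O(\alpha^2)$. Hence the perturbed merge distribution equals the uniform one tilted by $\alpha(\log\bar q-\mathbb{E}\log\bar q)$. Under an exponential tilt, the first-order change in any expected statistic $h$ is $\alpha\,\mathrm{Cov}(\log\bar q,h)$. I will take $h$ to be the per-token contribution to $I(T;Y|Q)$. By the hypothesis $\mathbb{E}[q|\text{noise}]<\mathbb{E}[q|\text{signal}]$, tokens drawn from noisy regions carry less label information. I will model this contribution as an affine function of the noise indicator $\epsilon$. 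Its covariance with $q$ is then $\rho(q,\epsilon)\sqrt{\mathrm{Var}(q)\mathrm{Var}(\epsilon)}$, and tokens with noise correlated to $q$ gain relevance at a rate proportional to $\rho$.

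To obtain $\rho^2\,\mathrm{Var}(q)$ instead of a single factor of $\rho$, I plan to use the best linear predictor of the relevance signal from $q$. The variance explained by $q$ is exactly $\rho^2\mathrm{Var}(\cdot)$. After normalizing the relevance scale to $\mathrm{Var}(q)$, which is the natural unit once $\log\bar q\approx \bar q-1$ on the bulk of the data, the tilt gain becomes $\alpha\,\rho^2\,\mathrm{Var}(q)$. Finally, the $O(\alpha^2)$ remainder has to be controlled. Proposition~\ref{prop:merge_score_bounded} gives Lipschitz constants for $w_{ab}$ and Proposition~\ref{prop:quality_bounded} gives boundedness of $q$; together they bound the second-order term, so the inequality holds for $\alpha$ in the learned range, up to absorbing constants.

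I expect the main obstacle to be this last step, together with the identification in the previous paragraph: turning "the merge distribution is tilted toward high-$q$ pairs" into a rigorous lower bound on the conditional mutual information. $I(T;Y|Q)$ is not linear in the merge distribution, and conditioning on $Q$ could in principle cancel the benefit. My first attempt is to pass to the variational lower bound of \cite{alemi2017deep} with a linear-Gaussian decoder. In that setting the relevance gain is exactly an explained-variance quantity, so the $\rho^2$ structure appears directly. The price is that the corollary would then hold for this surrogate and for small $\alpha$, and I would state both restrictions explicitly.
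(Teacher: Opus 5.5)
There is a genuine gap, and it is already in your first step, the expansion of $\mathcal{L}_{\text{QA}}(V_\alpha)$ around $\alpha=0$.

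\textbf{The small-$\alpha$ expansion.} Your $V_\alpha$ is produced by a deterministic argmax over $w_{ab}$ (Algorithm~\ref{alg:final_vocab}), so $\alpha\mapsto V_\alpha$ is piecewise constant, not smooth. For $\bar q_{ab}\in[0,1]$ we have $(\bar q_{ab}+\epsilon_Q)^\alpha\in[\epsilon_Q^\alpha,(1+\epsilon_Q)^\alpha]$, and this interval shrinks to $\{1\}$ uniformly as $\alpha\to 0$. Suppose the PMI ranking has a strict gap at each of the finitely many merge steps. Then the entire merge sequence is unchanged for all $\alpha$ below some $\alpha_0>0$. Hence $V_\alpha=V_0$ and $\mathcal{L}_{\text{QA}}(V_\alpha)=\mathcal{L}_{\text{uniform}}$, which contradicts the claimed strict decrease $\alpha\,\mathrm{Var}(q)\,\rho^2>0$. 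So, under your own reading of the statement, the inequality fails exactly in the small-$\alpha$ regime where your argument operates.

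There is also no ``merge distribution'' to tilt. The final construction involves no sampling. Even the Stage~2 Gumbel-Softmax uses $w_{ab}$ itself, not $\log w_{ab}$, as its logit. Its first-order tilt is therefore by $\frac{f(a,b)}{f(a)f(b)+\epsilon_f}\log(\bar q_{ab}+\epsilon_Q)/\tau$, not by $\log\bar q$.

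The linearization $1+\alpha\log(\bar q+\epsilon_Q)$ requires $\alpha|\log\epsilon_Q|\approx 18.4\,\alpha\ll 1$. It therefore says nothing about the learned values $0.72$, $0.95$, $1.15$, so your ``learned range'' clause is not covered. Finally, Proposition~\ref{prop:merge_score_bounded} controls $w_{ab}$ in $(q_a,q_b)$. It does not control the curvature of a discontinuous function of $\alpha$, so it cannot bound your $O(\alpha^2)$ remainder.

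\textbf{The constant $\alpha\,\mathrm{Var}(q)\,\rho^2$.} Even granting a smooth tilt, the form of the bound is not derived. A first-order tilt changes an expected statistic $h$ by $\alpha\,\mathrm{Cov}(\log\bar q,h)$. This is linear in $\rho$ and scales like $\sqrt{\mathrm{Var}(q)}$. A factor $\rho^2$ appears only when the tilt is \emph{optimized}, giving $\mathrm{Cov}^2/\mathrm{Var}$. In that case no free $\alpha$ multiplies it, and the explained variance is $\rho^2\,\mathrm{Var}(h)$, the variance of the predicted relevance, not $\mathrm{Var}(q)$. ``Normalizing the relevance scale to $\mathrm{Var}(q)$'' amounts to choosing units to force the formula, and that freedom does not exist because $I(T;Y|Q)$ has fixed units.

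\textbf{Ingredients that are assumptions, not steps.} Several other pieces of your argument are assumed rather than proved:
\begin{itemize}
\item The hypothesis $\mathbb{E}[q|\text{noise}]<\mathbb{E}[q|\text{signal}]$ fixes only the sign of $\rho$. It does not imply that per-token relevance is affine in $\epsilon$.
\item Freezing $\beta I(T;X|Q)$ is unjustified, because any change of $V$ moves both terms at the same order.
\item Tilting toward high-$q$ merges makes $T$ more dependent on $Q$, and conditioning on $Q$ can absorb exactly the gain you are counting.
\end{itemize}

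\textbf{The variational fallback.} Switching to the variational bound does not repair this. That bound is a \emph{lower} bound on relevance. To compare $V_\alpha$ with $V_0$ you would also need an \emph{upper} bound for $V_0$, so comparing two surrogates proves a different statement.

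For context, the paper gives no proof of this corollary. Proposition~\ref{thm:qa_bottleneck} explicitly claims only a qualitative link between $w_{ab}$ and $\mathcal{L}_{\text{QA}}$, so there is no quantitative result to build on. Before any proof can work, the statement needs to be restated: define $\epsilon$ as a random variable, specify at which vocabularies $\mathcal{L}_{\text{QA}}$ and $\mathcal{L}_{\text{uniform}}$ are evaluated, and add assumptions under which the stated constant could actually hold.
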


\textbf{Key Theoretical Insights.} This information-theoretic analysis provides three fundamental insights:
\begin{enumerate}[nosep]
    \item \textbf{Automatic Noise Filtering:} QA-Token implicitly performs importance sampling, up-weighting high-quality regions during vocabulary construction.
    \item \textbf{Optimal Compression:} The quality-aware merge process achieves better rate-distortion tradeoffs by allocating more representation capacity to high-quality, informative regions.
    \item \textbf{Transfer Learning:} Foundation models trained with QA-Token vocabularies learn more robust representations that transfer better to downstream tasks.
\end{enumerate}

\section{Complete Quality Metrics Formulations}
\label{app:quality_metrics_full}

\subsection{Genomics: Detailed Sequencing Quality Metrics}

In genomic sequencing, each nucleotide base call $s_i \in \{\text{A}, \text{C}, \text{G}, \text{T}, \text{N}\}$ is associated with a Phred quality score $Q_{\text{phred},i} \in [0, 93]$:
\begin{equation}
P_{\text{error}}(i) = 10^{-Q_{\text{phred},i}/10}
\end{equation}

The base quality score is $q_i = 1 - P_{\text{error}}(i) \in [0, 1]$. Position-adjusted quality accounts for systematic degradation at read ends:
\begin{equation}
q'_i = q_i \cdot \exp\left(-\beta_{\text{pos}} \cdot \frac{|i - (L-1)/2|}{(L-1)/2 + \epsilon_{\text{len}}}\right)
\end{equation}
where $L$ is read length, $\beta_{\text{pos}} \geq 0$ is learnable, and $\epsilon_{\text{len}} = 10^{-6}$.

For multi-base token $t = s_1...s_{|t|}$, we use geometric mean aggregation:
\begin{equation}
\label{eq:genomic_token_quality_app}
q_t^{\text{genomic}} = \left(\prod_{j=1}^{|t|} q'_{s_j}\right)^{1/|t|} = \exp\left(\frac{1}{|t|}\sum_{j=1}^{|t|} \log(q'_{s_j} + \epsilon_Q)\right)
\end{equation}

\subsection{Finance: Comprehensive Market Quality Metrics}
\label{app:finance_quality}

Financial time series quality combines four dimensions:
\begin{equation}
\label{eq:finance_composite_quality_app}
q_i^{\text{finance}} = \sum_{k=1}^{4} w_k \cdot q_{k,i}, \quad \sum_{k=1}^{4} w_k = 1
\end{equation}

\textbf{1. Liquidity Quality:}
\begin{equation}
q_{\text{liq}}(t) = \text{sigmoid}\left(\frac{\log(\text{volume}_t/\text{median\_volume})}{\sigma_{\text{volume}}}\right)
\end{equation}
where $\sigma_{\text{volume}}$ is the rolling standard deviation of log-volume computed over a lookback window of $L_{\text{vol}} = 252$ trading days (one year), clipped to $[0.1, 10]$ for numerical stability. This normalization ensures that $q_{\text{liq}}$ responds proportionally to volume deviations relative to typical market activity.

\textbf{2. Signal Quality:}
\begin{equation}
q_{\text{sig}}(t) = \max\left(0, 1 - \frac{|\text{bid-ask spread}_t|}{\text{mid-price}_t \cdot \alpha_{\text{spread}}}\right)
\end{equation}

\textbf{3. Stability Quality:}
\begin{equation}
q_{\text{stb}}(t) = \exp\left(-\beta_{\text{vol}} \cdot \frac{\text{realized\_vol}_t}{\text{expected\_vol}_t}\right)
\end{equation}
where $\text{expected\_vol}_t$ is the exponentially weighted moving average (EWMA) of realized volatility following the RiskMetrics methodology \cite{morgan1996riskmetrics}: $\text{expected\_vol}_t = \gamma_{\text{vol}} \cdot \text{expected\_vol}_{t-1} + (1-\gamma_{\text{vol}}) \cdot \text{realized\_vol}_{t-1}$, with decay factor $\gamma_{\text{vol}} = 0.94$. The learnable parameter $\beta_{\text{vol}} \geq 0$ controls sensitivity to volatility spikes.

\textbf{4. Information Quality:}
\begin{equation}
q_{\text{info}}(t) = \frac{\text{MI}(\text{token}_t, \text{future\_return}_{t+h})}{\text{H}(\text{future\_return}_{t+h})}
\end{equation}

Token aggregation uses arithmetic mean:
\begin{equation}
\label{eq:finance_token_quality_app}
q_t^{\text{finance}} = \frac{1}{|t|} \sum_{i \in t} q_i^{\text{finance}}
\end{equation}

\textbf{Rationale for Arithmetic Mean Aggregation:} Unlike genomics (which uses geometric mean, Eq.~\ref{eq:genomic_token_quality_app}), financial data aggregation employs the arithmetic mean for two principled reasons: (1) \emph{Additive noise model}: Financial market microstructure noise is predominantly additive across time points---a single noisy tick does not invalidate adjacent observations in the way a single low-quality DNA base compromises an entire read. Empirically, LOB noise sources (latency, partial fills, stale quotes) contribute independently rather than multiplicatively. (2) \emph{Temporal continuity for forecasting}: Financial tokens represent contiguous time windows where downstream tasks (price prediction, volatility forecasting) operate on windowed features. The aggregate quality naturally represents the \emph{average} reliability of observations within the window, which aligns with how prediction models weight inputs. In contrast, genomic tokens represent molecular sequences where any unreliable base compromises biological interpretation (e.g., variant calling), necessitating the conservative geometric mean that penalizes even single low-quality elements.

\subsection{Social Media: Linguistic Quality Metrics}
\label{app:social_quality_full}

Social media text presents unique quality challenges including orthographic variations, semantic drift, platform-specific conventions, and temporal dynamics. We define a multi-dimensional quality vector for character-level tokens:
\begin{equation}
\label{eq:social_quality_vector_full}
\mathbf{q}_t^{\text{social}} = (q_{\text{orth}}(t), q_{\text{sem}}(t), q_{\text{temp}}(t), q_{\text{plat}}(t))
\end{equation}

The scalar quality is obtained via learnable weighted aggregation:
\begin{equation}
\label{eq:social_scalar_quality_full}
q_t^{\text{social}} = \sum_{j} w_j \cdot q_{j}(t), \quad w_j \in \theta_{\text{adapt}}
\end{equation}

\textbf{1. Orthographic Quality:} Measures deviation from canonical spelling:
\begin{equation}
q_{\text{orth}}(t) = \exp(-\lambda_{\text{edit}} \cdot d_{\text{edit}}(t, t_{\text{canonical}}))
\end{equation}
where $d_{\text{edit}}$ is the normalized Levenshtein distance to the nearest canonical form in a reference dictionary. The reference dictionary is constructed by combining: (i) the Hunspell en\_US dictionary (2023 release, $\approx$140k entries), (ii) a curated social media slang lexicon ($\approx$50k terms aggregated from NoSlang.com and similar sources), and (iii) domain-specific terminology lists for each benchmark task.

\textbf{2. Semantic Quality:} Captures contextual coherence:
\begin{equation}
q_{\text{sem}}(t) = \max(0, \cos(\vec{v}_t, \vec{v}_{\text{context}}))
\end{equation}
where $\vec{v}_{\text{context}}$ is the average embedding of surrounding tokens. For efficiency, we use fastText Common Crawl embeddings (cc.en.300.bin, 2M vocabulary) \cite{bojanowski2017enriching}. For BERT-based variants requiring subword handling, we use \texttt{bert-base-uncased} from HuggingFace with mean pooling over subword tokens.

\textbf{3. Temporal Quality:} Models relevance decay over time:
\begin{equation}
q_{\text{temp}}(t) = \exp(-\gamma_{\text{decay}} \cdot \Delta t)
\end{equation}
with time difference $\Delta t$ in days from posting time, capturing trending topics and temporal relevance.

\textbf{4. Platform Quality:} Platform-specific noise modeling:
\begin{equation}
q_{\text{plat}}(t) = P(t | \text{platform})
\end{equation}
computed using 3-gram Kneser-Ney language models trained with KenLM \cite{heafield2011kenlm} on curated platform-specific corpora ($\approx$10M tokens each): Twitter (tweets with $>$100 likes and $<$5\% special characters), Reddit (comments with $>$10 upvotes from default subreddits), and Facebook (public posts from verified pages). These ``clean'' subsets establish platform-specific baselines for typical language patterns.

\textbf{Learned Parameters:} For the TweetEval benchmark experiments, the learned parameters were:
$w_{\text{orth}} = 0.32 \pm 0.03$, $w_{\text{sem}} = 0.35 \pm 0.04$, $w_{\text{temp}} = 0.18 \pm 0.02$, $w_{\text{plat}} = 0.15 \pm 0.02$, $\lambda_{\text{edit}} = 0.5$, and $\gamma_{\text{decay}} = 0.01$.

\section{Sequential Learning Process: Complete Framework}
\label{app:sequential_learning}

This section provides detailed algorithms and convergence analysis for QA-Token's two-stage sequential learning process.

\subsection{Stage 1: Reinforcement Learning Policy Optimization}
\label{subsec:rl_formulation_concise}

\subsubsection{MDP Formulation}

The vocabulary construction process is formulated as a finite-horizon Markov Decision Process (see Section \ref{app:mdp_details} for complete specification):

\begin{itemize}
    \item \textbf{States $s_t \in \mathcal{S}$:} Encode current vocabulary $V_t$, merge candidates, corpus statistics, and progress $t/T$
    \item \textbf{Actions $a_t \in \mathcal{A}_t$:} Select a merge pair $(a_i, b_i)$ from the priority queue
    \item \textbf{Transitions:} Deterministic vocabulary updates following merge operations
    \item \textbf{Rewards:} Multi-objective reward combining quality, information, and complexity
\end{itemize}

\subsubsection{Reward Function Design}

The reward function guides the RL agent:
\begin{equation}
\label{eq:overall_reward_structure}
R(a,b; \theta_{\text{adapt}}^{(0)}) = \sum_{j \in \{Q, I, C, \text{domain}\}} \lambda_j \hat{R}_j(a,b)
\end{equation}

where components are normalized via exponential moving averages (see Section \ref{app:reward_normalization}). The detailed components are:
\begin{itemize}
    \item \textbf{Quality Reward ($\hat{R}_Q$ from $R^{\text{raw}}_Q$):} Encourages high intrinsic quality for $t_{\text{merged}}=ab$, computed using domain-specific aggregation (Section \ref{app:quality_metrics_full}).
    
    \item \textbf{Information Reward ($\hat{R}_I$ from $R^{\text{raw}}_I$):} Rewards statistically significant merges, e.g., $R^{\text{raw}}_I(a,b) = \log \frac{P(t_{\text{merged}})}{P(a)P(b) + \epsilon_p}$.
    
    \item \textbf{Complexity Penalty ($\hat{R}_C$ from $R^{\text{raw}}_C$):} Typically negative, e.g., $R^{\text{raw}}_C(a,b) = - (|t_{\text{merged}}| \cdot \log(|V_t|+1))$. $\hat{R}_C$ is then scaled to e.g. $[-1,0]$.
    
    \item \textbf{Domain-Specific Rewards ($\hat{R}_{\text{domain},k}$ from $R^{\text{raw}}_{\text{domain},k}$):} Include conservation scores (genomics) and predictive power (finance).
\end{itemize}

The EMA-normalized rewards $\hat{R}_j(a,b)$ are used by the RL agent in Stage 1. For the Gumbel-Softmax logits in Stage 2 (Section \ref{app:gumbel_gradient}), raw or batch-normalized reward components are used to ensure direct differentiability with respect to $\theta_{\text{adapt}}$.

\subsubsection{PPO Training Algorithm}

\begin{algorithm}[H]
\caption{Stage 1: RL Policy Training}
\label{alg:stage1_rl}
\begin{algorithmic}[1]
\STATE \textbf{Input:} Corpus $\mathcal{S}$, initial $\theta_{\text{adapt}}^{(0)}$, episodes $E$
\STATE Initialize policy network $\pi_{\theta_\pi}$ and value network $V_\phi$
\FOR{episode $e = 1$ to $E$}
    \STATE Initialize vocabulary $V_0 = \Sigma$
    \FOR{step $t = 1$ to $T$}
        \STATE Compute state features $s_t$ from current vocabulary
        \STATE Sample action $a_t \sim \pi_{\theta_\pi}(a|s_t)$
        \STATE Execute merge $(a_{a_t}, b_{a_t}) \mapsto ab$
        \STATE Compute reward $r_t = R(a_{a_t}, b_{a_t}; \theta_{\text{adapt}}^{(0)})$
        \STATE Store trajectory $(s_t, a_t, r_t)$
    \ENDFOR
    \STATE Update policy using PPO objective:
    \STATE $\quad L^{\text{PPO}} = \mathbb{E}_t[\min(r_t(\theta)\hat{A}_t, \text{clip}(r_t(\theta), 1-\epsilon, 1+\epsilon)\hat{A}_t)]$
    \STATE Update value network to minimize MSE
\ENDFOR
\STATE \textbf{Output:} Optimized policy $\pi_{\theta_\pi}^*$
\end{algorithmic}
\end{algorithm}

\subsection{Stage 2: Adaptive Parameter Learning}

\subsubsection{Adaptive Parameters Definition}

The learnable parameter vector $\theta_{\text{adapt}} \in \mathbb{R}^m$ includes:
\begin{itemize}
    \item \textbf{Quality sensitivity:} $\alpha \in [0, 2]$ controlling quality influence
    \item \textbf{Domain factors:} $\beta_{\text{pos}}$ (genomics position decay), $\beta_{\text{vol}}$ (finance volatility)
    \item \textbf{Quality weights:} $\mathbf{w} = (w_1, \ldots, w_k)$ for composite quality metrics
    \item \textbf{Reward weights:} $\boldsymbol{\lambda} = (\lambda_Q, \lambda_I, \lambda_C, \ldots)$ for multi-objective rewards
\end{itemize}

\subsubsection{Gumbel-Softmax Differentiable Optimization}

To enable gradient-based optimization through discrete merge decisions, we employ Gumbel-Softmax relaxation:

\begin{algorithm}[H]
\caption{Stage 2: Adaptive Parameter Learning}
\label{alg:stage2_adaptive}
\begin{algorithmic}[1]
\STATE \textbf{Input:} Downstream dataset $\mathcal{D}$, policy $\pi_{\theta_\pi}^*$, initial $\theta_{\text{adapt}}$
\STATE Initialize temperature $\tau = \tau_{\text{init}}$
\FOR{iteration $i = 1$ to $N$}
    \STATE Sample batch $B$ from $\mathcal{D}$
    \FOR{each sequence in batch}
        \STATE Generate top-$K$ merge candidates via priority queue ranked by $w_{ab}$
        \STATE Compute composite logits: $\ell_{ab} = w_{ab}(a,b;\alpha) + \sum_j \lambda_j R_j^{\text{raw}}$
        \STATE Select merge via Gumbel-Softmax (differentiable relaxation):
        \STATE $\quad y_i = \frac{\exp((\ell_i + g_i)/\tau)}{\sum_j \exp((\ell_j + g_j)/\tau)}$
        \STATE Construct differentiable tokenized representation
    \ENDFOR
    \STATE Compute task loss $L_{\text{task}}$ on tokenized batch
    \STATE Update parameters: $\theta_{\text{adapt}} \leftarrow \theta_{\text{adapt}} - \eta \nabla L_{\text{total}}$
    \STATE Anneal temperature: $\tau \leftarrow \tau \cdot \exp(-\beta_{\text{anneal}})$
\ENDFOR
\STATE \textbf{Output:} Optimized parameters $\theta_{\text{adapt}}^*$
\end{algorithmic}
\end{algorithm}

\subsection{Final Vocabulary Construction}

After completing both stages, the final vocabulary for deployment is constructed.

\textbf{Detailed Process:}
Following the completion of Stage 1 (RL policy optimization yielding $\pi_{\theta_\pi}^*$) and Stage 2 (adaptive parameter learning yielding $\theta_{\text{adapt}}^*$), the final vocabulary for deployment is typically constructed. While several strategies are possible, our primary approach involves the optimized adaptive parameters $\theta_{\text{adapt}}^*$ to re-evaluate merge priorities. Specifically, a greedy BPE-like process is executed, starting from the base alphabet. At each step, the merge operation $(a,b)$ is chosen that maximizes the quality-aware merge score $w_{ab}(a,b; \theta_{\text{adapt}}^*)$ as defined in Equation \ref{eq:qa_merge_score_pmi}, using the learned parameters within $\theta_{\text{adapt}}^*$ (e.g., $\alpha^*$). This process continues until the target vocabulary size is reached. Alternatively, if the RL policy $\pi_{\theta_\pi}^*$ is robust across variations in $\theta_{\text{adapt}}$, it could be used with inputs (state features, merge scores) calculated using $\theta_{\text{adapt}}^*$. However, the greedy approach based on $w_{ab}(\theta_{\text{adapt}}^*)$ is generally more direct and computationally efficient for deployment, leveraging the refined understanding of "good" merges embodied in $\theta_{\text{adapt}}^*$.

\begin{algorithm}[H]
\caption{Final Vocabulary Construction}
\label{alg:final_vocab}
\begin{algorithmic}[1]
\STATE \textbf{Input:} Corpus $\mathcal{S}$, optimized $\theta_{\text{adapt}}^*$, target size $K$
\STATE Initialize vocabulary $V = \Sigma$, merge count $m = 0$
\WHILE{$m < K$}
    \STATE Compute all merge scores: $w_{ab} = \frac{f(a,b)}{f(a)f(b)+\epsilon_f} \cdot (\bar{q}_{ab}+\epsilon_Q)^{\alpha^*} \cdot \psi(a,b)$
    \STATE Select best merge: $(a^*, b^*) = \arg\max_{(a,b)} w_{ab}$
    \STATE Update vocabulary: $V \leftarrow V \cup \{a^*b^*\}$ \hfill {\scriptsize // Constituents $a^*, b^*$ remain in $V$}
    \STATE Update corpus statistics and recompute affected frequencies
    \STATE $m \leftarrow m + 1$
\ENDWHILE
\STATE \textbf{Output:} Final vocabulary $V^*$
\end{algorithmic}
\end{algorithm}

\subsection{Convergence Properties}
\label{app:convergence_proof}

The sequential learning process has the following theoretical guarantees:

\begin{theorem}[Two-Timescale Convergence]
\label{thm:twotimescale}
Under assumptions A1-A4 (Section \ref{app:assumptions}), the sequential optimization of $\theta_\pi$ (fast timescale) and $\theta_{\text{adapt}}$ (slow timescale) converges to a local Nash equilibrium with probability 1.
\end{theorem}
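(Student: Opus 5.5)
The plan is to cast the coupled updates as a two-timescale stochastic approximation and then apply Borkar's two-timescale theorem \cite{borkar2009stochastic}. Write the iterates as
\begin{align*}
\theta_\pi^{(t+1)} &= \theta_\pi^{(t)} + \eta_\pi^{(t)}\bigl[h_\pi(\theta_\pi^{(t)},\theta_{\text{adapt}}^{(t)}) + M_\pi^{(t+1)}\bigr],\\
\theta_{\text{adapt}}^{(t+1)} &= \theta_{\text{adapt}}^{(t)} - \eta_{\text{adapt}}^{(t)}\bigl[h_{a}(\theta_\pi^{(t)},\theta_{\text{adapt}}^{(t)}) + M_{a}^{(t+1)}\bigr].
\end{align*}
Here $h_\pi = \nabla_{\theta_\pi} J(\pi;\theta_{\text{adapt}})$ is the mean PPO surrogate gradient, $h_a = \nabla L_{\text{total}}$ is the mean Gumbel-Softmax gradient, and the $M$'s are martingale-difference noise terms.

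\textbf{Step 1: verify Borkar's hypotheses.}
\begin{itemize}
\item \emph{Lipschitz drifts.} $h_\pi$ and $h_a$ must be Lipschitz. I would get this from Proposition~\ref{prop:merge_score_bounded} (bounded, Lipschitz $w_{ab}$), from (A1)--(A3) (bounded rewards, hence bounded advantages), and from the smoothness of the softmax policy and of the Gumbel-Softmax map at fixed $\tau>0$ (Theorem~\ref{thm:gumbel_softmax}).
\item \emph{Bounded noise.} The noise must have bounded conditional second moments. For the policy this follows from $\|\nabla L^{\text{CLIP}}\|\le G_{\max}$ (Proposition~\ref{prop:ppo_convergence}). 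For the adaptive parameters it follows from the $L_{\max}/\tau$ gradient bound in Proposition~\ref{prop:gumbel_gradients}.
\item \emph{Step sizes.} The step sizes must satisfy the Robbins--Monro conditions together with $\eta_\pi^{(t)}/\eta_{\text{adapt}}^{(t)}\to\infty$. This is (A4) plus the timescale condition stated in Section~\ref{sec:twotimescale}.
\item \emph{Bounded iterates.} The iterates must remain a.s.\ bounded. I would enforce this with the $\lambda_{\text{reg}}\|\theta_{\text{adapt}}\|_2^2$ term and a projection onto a compact set, for instance $\alpha\in[0,2]$. A projected Kushner--Clark version of the theorem then applies.
\end{itemize}

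\textbf{Step 2: fast timescale.} Freeze $\theta_{\text{adapt}}$ and show that the ODE $\dot\theta_\pi = h_\pi(\theta_\pi,\theta_{\text{adapt}})$ has a locally asymptotically stable equilibrium $\lambda(\theta_{\text{adapt}})$, and that this equilibrium depends Lipschitz-continuously on $\theta_{\text{adapt}}$. Proposition~\ref{prop:ppo_convergence} gives convergence to stationary points for fixed $\theta_{\text{adapt}}$. The global-convergence results \cite{bhandari2021global,cen2023global} for the finite-action, finite-horizon MDP of Definition~\ref{def:tokenization_mdp_main} supply a gradient-dominance condition near the optimum. Combined with the implicit function theorem (assuming a nondegenerate Hessian), this gives a locally Lipschitz map $\lambda(\cdot)$.

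\textbf{Step 3: slow timescale.} Substituting $\theta_\pi=\lambda(\theta_{\text{adapt}})$ gives the slow ODE $\dot\theta_{\text{adapt}} = -\nabla L_{\text{total}}(\theta_{\text{adapt}};\pi_{\lambda(\theta_{\text{adapt}})})$. This is a gradient flow, so $L_{\text{total}}$ is a Lyapunov function and trajectories converge to its local minima. Borkar's theorem then yields $(\theta_\pi^{(t)},\theta_{\text{adapt}}^{(t)})\to(\lambda(\theta_{\text{adapt}}^*),\theta_{\text{adapt}}^*)$ a.s. At this limit $\theta_\pi^*$ locally maximizes $J(\cdot;\theta_{\text{adapt}}^*)$ and $\theta_{\text{adapt}}^*$ locally minimizes $L_{\text{total}}(\cdot;\pi^*)$, which is exactly a local Nash equilibrium.

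\textbf{Main obstacle.} I expect Step 2 to be the hardest part. The fast ODE must have an isolated, asymptotically stable, Lipschitz-varying equilibrium, but PPO only guarantees stationary points, and policy-gradient landscapes may contain saddles and non-isolated optima.

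There is also a second difficulty: the temperature annealing $\tau\to0$ makes the Gumbel bias vanish at rate $O(\tau^2)$ while the variance grows as $O(1/\tau^2)$.

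\begin{itemize}
\item For the equilibrium issue, I would first try adding an entropy or $\ell_2$ regularizer to the policy objective, which makes the fast problem locally strongly concave. Alternatively, I would state local nondegeneracy as an explicit assumption.
\item For the annealing issue, I would treat the bias as an asymptotically vanishing perturbation term. This requires choosing the annealing schedule slowly enough that $\eta_{\text{adapt}}^{(t)}/\tau_t^2$ remains square-summable.
\end{itemize}
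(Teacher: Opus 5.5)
You follow the same route as the paper. The paper's proof is only a checklist of the hypotheses of Borkar's two-timescale theorem: Lipschitz drifts, projection-bounded iterates, bounded-variance martingale noise, and Robbins--Monro steps with $\eta_\pi^{(t)}/\eta_{\text{adapt}}^{(t)}\to\infty$. It then reads off limit points with $\nabla_{\theta_\pi}J=0$ and $\nabla_{\theta_{\text{adapt}}}L_{\text{total}}=0$. Your Step~2 is more careful than the paper. That theorem also requires the fast ODE to have a globally asymptotically stable equilibrium $\lambda(\theta_{\text{adapt}})$ that is Lipschitz in $\theta_{\text{adapt}}$, and the paper only asserts that the fast iterate ``equilibrates.''

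Two remarks on Step~2. First, local stability plus an implicit-function-theorem $\lambda$ gives convergence only on the event that the iterates eventually stay in the basin. That is weaker than the unconditional ``with probability~1.'' Second, your $\ell_2$ fix is stronger than you claim. A regularization weight exceeding the Lipschitz constant of $\nabla_{\theta_\pi}J$ makes the fast objective globally strongly concave. That yields a unique globally stable, globally Lipschitz $\lambda$, though for the regularized $J$ rather than $J$ itself.

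The genuine gap is Step~3. The slow update differentiates with $\theta_\pi$ frozen, so the slow ODE is $\dot\theta_{\text{adapt}}=-\nabla_{\theta_{\text{adapt}}}L_{\text{total}}(\theta_{\text{adapt}};\pi_{\lambda(\theta_{\text{adapt}})})$, a \emph{partial} gradient. It is not the gradient of $\Phi(\theta):=L_{\text{total}}(\theta;\pi_{\lambda(\theta)})$, whose gradient carries the extra term $D\lambda(\theta)^{\top}\nabla_{\theta_\pi}L_{\text{total}}$. No envelope or Danskin argument removes that term, because $\lambda(\theta)$ maximizes $J$, not $L_{\text{total}}$. The game is general-sum: the paper explicitly writes $L_{\text{total}}(\theta_{\text{adapt}};\pi^*)$ and lets $J$ depend on $\theta_{\text{adapt}}$. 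Along trajectories, $\dot\Phi=-\|\nabla_{\theta_{\text{adapt}}}L_{\text{total}}\|^2-\langle D\lambda^{\top}\nabla_{\theta_\pi}L_{\text{total}},\nabla_{\theta_{\text{adapt}}}L_{\text{total}}\rangle$, which can be positive. So $L_{\text{total}}$ is not a Lyapunov function, and the slow dynamics can cycle.

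You need one of two things:
\begin{itemize}
\item Borkar's second hypothesis as an explicit assumption, namely that the slow ODE has a globally asymptotically stable equilibrium.
\item Structure that restores a gradient field, e.g.\ $L_{\text{total}}$ independent of $\theta_\pi$, or $\nabla_{\theta_\pi}L_{\text{total}}=0$ along the graph of $\lambda$.
\end{itemize}

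Even a true gradient flow delivers only stationary points, not local minima. Getting local minima requires a saddle-avoidance argument (noise excitation along unstable directions, as in Pemantle or Brandi\`ere--Duflo) or a second-order nondegeneracy assumption. Without that, ``local Nash equilibrium'' means only the first-order conditions, which is all the paper's proof actually establishes.
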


\begin{proof}
The result follows from two-timescale stochastic approximation \cite{borkar2009stochastic}. Under (A1)--(A4), the conditions of Theorem 2 in \cite{borkar2009stochastic} are satisfied: (i) Lipschitz gradients (from bounded rewards and smooth parameterization), (ii) bounded iterates via projection, (iii) martingale noise with bounded variance, and (iv) proper step sizes ($\sum_t \eta_t = \infty$, $\sum_t \eta_t^2 < \infty$).

With timescale separation $\eta_\pi^{(t)}/\eta_{\text{adapt}}^{(t)} \to \infty$, the fast iterate $\theta_\pi$ equilibrates before significant movement in $\theta_{\text{adapt}}$. The iterates converge almost surely to limit points $(\theta_\pi^*, \theta_{\text{adapt}}^*)$ satisfying $\nabla_{\theta_\pi} J = 0$ and $\nabla_{\theta_{\text{adapt}}} L_{\text{total}} = 0$, constituting a local Nash equilibrium.
\end{proof}

\textbf{Key Properties:}
\begin{itemize}
    \item \textbf{Stage 1 Convergence:} PPO converges to a stationary point at rate $O(1/\sqrt{T})$ (Proposition \ref{prop:ppo_convergence})
    \item \textbf{Stage 2 Convergence:} Gumbel-Softmax optimization converges at rate $O(1/\sqrt{T}) + O(\tau^2)$ (Proposition \ref{prop:gumbel_gradients})
    \item \textbf{Overall Optimality:} The greedy vocabulary construction with $\theta_{\text{adapt}}^*$ achieves $(1-1/e)$-approximation (Theorem \ref{thm:approximation_explicit})
\end{itemize}

\begin{proposition}[Convergence of Adaptive Learning with Explicit Constants]
\label{prop:adaptive_convergence}
Under Assumptions A1--A4, with $\eta_t=\eta_0/\sqrt{t}$ and $\eta_0 \le 1/(2L)$, where $L$ is the Lipschitz constant of $\nabla L_{\text{total}}$, we have:
\begin{equation}
\mathbb{E}[\|\nabla L_{\text{total}}(\theta_{\text{adapt}}^T)\|^2] \le \frac{2(L_{\text{total}}(\theta_{\text{adapt}}^0)-L^*)}{\eta_0 \sqrt{T}} + \frac{4\eta_0 L\sigma^2}{\sqrt{T}},
\end{equation}
where $L^*$ is the optimal value and $\sigma^2$ bounds gradient variance.
\end{proposition}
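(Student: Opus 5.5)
The plan is a nonconvex SGD analysis of Stage~2 built on the $L$-smoothness of $L_{\text{total}}$ (granted in the statement) and the variance bound $\sigma^2$. That variance bound is the one supplied by the bounded Gumbel-Softmax gradients of Proposition~\ref{prop:gumbel_gradients} at a fixed temperature. Write $g_t$ for the stochastic gradient at $\theta^t:=\theta_{\text{adapt}}^t$, with $\mathbb{E}[g_t\mid\theta^t]=\nabla L_{\text{total}}(\theta^t)$ and $\mathbb{E}\|g_t-\nabla L_{\text{total}}(\theta^t)\|^2\le\sigma^2$. Unbiasedness is an idealization: the $O(\tau^2)$ bias is set aside, as in the ``Stage 2 Convergence'' remark.

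\textbf{Step 1 (descent lemma).} Smoothness gives
\[
\mathbb{E}\,L_{\text{total}}(\theta^{t+1})\le \mathbb{E}\,L_{\text{total}}(\theta^{t})-\eta_t\Bigl(1-\tfrac{L\eta_t}{2}\Bigr)\mathbb{E}\|\nabla L_{\text{total}}(\theta^t)\|^2+\tfrac{L\eta_t^2\sigma^2}{2}.
\]
Since $\eta_t\le\eta_0\le 1/(2L)$, the factor in parentheses is at least $3/4\ge 1/2$. Hence $\tfrac{\eta_t}{2}\mathbb{E}\|\nabla L_{\text{total}}(\theta^t)\|^2\le \mathbb{E}L_{\text{total}}(\theta^t)-\mathbb{E}L_{\text{total}}(\theta^{t+1})+\tfrac{L\eta_t^2\sigma^2}{2}$.

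\textbf{Step 2 (telescoping with lower bound $L^*$).} Summing over $t=1,\dots,T$ and using $L_{\text{total}}\ge L^*$ gives
\[
\textstyle\sum_t \eta_t\,\mathbb{E}\|\nabla L_{\text{total}}(\theta^t)\|^2\le 2(L_{\text{total}}(\theta^0)-L^*)+L\sigma^2\sum_t\eta_t^2 .
\]
For the schedule $\eta_t=\eta_0/\sqrt t$ we have $\sum_{t\le T}\eta_t\ge\eta_0\sqrt T$, while $\sum\eta_t^2=\eta_0^2 H_T$. Dividing by $\sum_t\eta_t$ produces the first term of the stated bound, $2(L_{\text{total}}(\theta^0)-L^*)/(\eta_0\sqrt T)$, exactly. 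The noise term becomes $L\sigma^2\eta_0 H_T/\sqrt T$.

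\textbf{Step 3 (the two obstacles).} Two mismatches remain between this and the stated bound.

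First, the noise term carries $H_T\approx\log T$, whereas the statement has the constant $4$. To remove the logarithm I would use the constant $4$ as slack for one specific device: restrict the weighted average to the tail window $t\in[T/2,T]$. There $\sum\eta_t\ge c\,\eta_0\sqrt T$ and $\sum\eta_t^2\le\eta_0^2\ln 2$, so the noise contribution is $O(\eta_0L\sigma^2/\sqrt T)$ with an absolute constant, which I would then tune to $4$. The price is that the window's left endpoint enters through $\mathbb{E}L_{\text{total}}(\theta^{T/2})-L^*$. Bounding that gap by $L_{\text{total}}(\theta^0)-L^*$ via Step~1 reintroduces $\tfrac{L\sigma^2}{2}\sum_{t<T/2}\eta_t^2$, which is again logarithmic. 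So the key difficulty is controlling the expected suboptimality along the trajectory uniformly in $t$. I would first try to obtain that control from the projection onto a bounded set used in Theorem~\ref{thm:twotimescale}. Under that projection, $L_{\text{total}}$ is bounded on the feasible region, and its oscillation can be absorbed into the first term.

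Second, Steps 1--2 bound a weighted average of gradient norms, not the last iterate $\theta^T$. To pass to the last iterate I would use smoothness once more, through the one-step inequality
\[
\|\nabla L_{\text{total}}(\theta^{t+1})\|^2\le(1+L\eta_t)\|\nabla L_{\text{total}}(\theta^t)\|^2+O(L\eta_t\|g_t\|^2).
\]
Chaining this over the tail window, whose step sizes sum to only $O(\sqrt T)\cdot\eta_0$ times bounded factors, should show that the gradient norm at $\theta^T$ is within a constant of the window average. I expect this last-iterate transfer to be the genuinely hard step, since it does not hold for plain SGD on general nonconvex objectives. If it fails, I would state the result for $\theta^T$ drawn from the tail window with probability proportional to $\eta_t$, which is the standard reading of such bounds.
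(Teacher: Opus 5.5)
Your Steps 1--2 are exactly the paper's whole proof: the same descent inequality from $L$-smoothness and $\mathbb{E}\|g_t\|^2\le\|\nabla L_{\text{total}}(\theta^t)\|^2+\sigma^2$, then telescoping with $\eta_t=\eta_0/\sqrt t$. After that the paper simply asserts the stated bound. Your diagnosis is right. Telescoping controls only an $\eta_t$-weighted average (or the minimum over $t$) of $\mathbb{E}\|\nabla L_{\text{total}}(\theta^t)\|^2$, and the noise term is $L\sigma^2\eta_0H_T/\sqrt T$. So both mismatches you list are gaps in the paper's argument as well.

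The genuine gap in your proposal is that Step~3 cannot close them. The last-iterate chaining fails quantitatively: over $t\in[T/2,T]$ the product $\prod_t(1+L\eta_t)$ grows like $\exp\bigl((2-\sqrt2)L\eta_0\sqrt T\bigr)$, which is not a bounded factor. No other argument will work either, because the claim for $\theta^T$ is false for general $L$-smooth objectives even with exact gradients. Take $f$ on $\mathbb{R}$ with $f'=-\delta$ on $(-\infty,D]$; then $f'$ falls with slope $-L$ to $-G$, rises with slope $L$ back to $0$, and stays at $0$ afterwards. Start at $x_0=0$.
\begin{itemize}
\item Gradient descent needs more than $(D/(2\delta\eta_0))^2$ steps to leave the gentle slope, and $f(x_0)-f^*\le\delta D+G^2/L$.
\item Every step taken from a point with $|f'|<G/2$ is shorter than $G/(4L)$. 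The set $\{|f'|\ge G/2\}$ is an interval of length $G/L$, so the iterate lands in it at some time $T$.
\item At that time the left-hand side is at least $G^2/4$. The right-hand side is below $4\delta^2+2G^2/(L\eta_0\sqrt T)$, which is smaller once $\delta\le G/8$ and $D$ (hence $T$) is large.
\end{itemize}

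The logarithm is not an artifact either. So the constant $4$ cannot absorb it while the first term keeps $L_{\text{total}}(\theta^0)-L^*$. Take $f(x)=\tfrac{\mu}{2}x^2$ started at its minimizer, so the first term vanishes. Use $L=\mu$, additive noise of variance $\sigma^2$, and $\mu\eta_0\sqrt T=1/4$. Unrolling $\mathbb{E}x_{t+1}^2=(1-\mu\eta_t)^2\mathbb{E}x_t^2+\eta_t^2\sigma^2$ gives $\mathbb{E}\|\nabla f(x_t)\|^2\ge c\,\mu^2\eta_0^2\sigma^2H_{t-1}$ for all $t\le T$, with $c>0$ absolute. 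The stated right-hand side equals $16\mu^2\eta_0^2\sigma^2$, so every iterate in your tail window violates it once $H_{T/2}$ is large.

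The projection route changes the statement rather than proving it. It replaces $L_{\text{total}}(\theta^0)-L^*$ by $\sup L_{\text{total}}-L^*$ over the feasible set. That supremum cannot be absorbed into a term that vanishes when $\theta^0$ is a global minimizer. A projection is also not part of the update analyzed here, and with it the descent inequality controls a gradient mapping rather than $\|\nabla L_{\text{total}}\|$.

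What your Steps 1--2 do prove is a randomized-iterate result. With the constant step $\eta=\eta_0/\sqrt T$ they give $\frac1T\sum_{t<T}\mathbb{E}\|\nabla L_{\text{total}}(\theta^t)\|^2\le\frac{2(L_{\text{total}}(\theta^0)-L^*)}{\eta_0\sqrt T}+\frac{L\eta_0\sigma^2}{\sqrt T}$. This is the stated bound, with $1$ in place of $4$, for a uniformly drawn iterate. With $\eta_t=\eta_0/\sqrt t$ you get the $\eta_t$-weighted version with its $H_T$ factor. Your fallback is the right conclusion, but it corrects the proposition rather than proving it.
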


\begin{proof}
The proof follows standard non-convex SGD analysis \cite{kingma2014adam}. By smoothness of $L_{\text{total}}$:
\begin{equation}
L_{\text{total}}(\theta^{t+1}) \leq L_{\text{total}}(\theta^t) - \eta_t \langle \nabla L_{\text{total}}(\theta^t), g_t \rangle + \frac{L\eta_t^2}{2}\|g_t\|^2
\end{equation}
where $g_t$ is the stochastic gradient. Taking expectations and using $\mathbb{E}[g_t] = \nabla L_{\text{total}}(\theta^t)$ and $\mathbb{E}[\|g_t\|^2] \leq \|\nabla L_{\text{total}}(\theta^t)\|^2 + \sigma^2$:
\begin{equation}
\mathbb{E}[L_{\text{total}}(\theta^{t+1})] \leq \mathbb{E}[L_{\text{total}}(\theta^t)] - \eta_t(1 - L\eta_t/2)\mathbb{E}[\|\nabla L_{\text{total}}(\theta^t)\|^2] + \frac{L\eta_t^2\sigma^2}{2}
\end{equation}
Telescoping over $T$ iterations with $\eta_t = \eta_0/\sqrt{t}$ and $\eta_0 \leq 1/(2L)$ yields the stated bound.

\emph{Remark:} Under temperature annealing $\tau_t \to 0$, the Gumbel-Softmax bias term $B(\tau)^2 \to 0$, ensuring asymptotic unbiasedness.
\end{proof}

\begin{theorem}[Local vs Global Optimality]
\label{thm:local_global_main}
The two-timescale optimization converges to a local Nash equilibrium $(\theta_\pi^*, \theta_{\text{adapt}}^*)$ with quality bounds under local strong convexity; probabilistic restarts increase the chance of reaching global optima.
\end{theorem}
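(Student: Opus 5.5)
Theorem~\ref{thm:local_global_main} makes two claims, and I would prove them separately. The first is a local quality bound near the limit point from Theorem~\ref{thm:twotimescale}. The second is a probabilistic statement about restarts.

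\textbf{Local quality bound.} I would take convergence of $(\theta_\pi^{(t)},\theta_{\text{adapt}}^{(t)})$ to a point $(\theta_\pi^*,\theta_{\text{adapt}}^*)$ as given by Theorem~\ref{thm:twotimescale}. At that point $\nabla_{\theta_\pi}J=0$ and $\nabla_{\theta_{\text{adapt}}}L_{\text{total}}=0$.

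I would then add a local strong convexity hypothesis. On a ball $B_r(\theta_{\text{adapt}}^*)$, the function $L_{\text{total}}(\cdot;\pi^*)$ is $\mu$-strongly convex. Likewise, $-J(\cdot;\theta_{\text{adapt}}^*)$ is $\mu_\pi$-strongly convex on a ball around $\theta_\pi^*$.

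The $\lambda_{\text{reg}}\|\theta_{\text{adapt}}\|_2^2$ term already contributes curvature $2\lambda_{\text{reg}}$, so it is natural to take $\mu\ge 2\lambda_{\text{reg}}$ minus any negative curvature of $L_{\text{task}}$. Under this hypothesis the stationary point is the unique minimizer in the ball. The Polyak--{\L}ojasiewicz inequality then gives
\[
L_{\text{total}}(\theta)-L_{\text{total}}(\theta_{\text{adapt}}^*)\le \tfrac{1}{2\mu}\|\nabla L_{\text{total}}(\theta)\|^2 .
\]
The analogous inequality holds for $J$.

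I would combine this with Proposition~\ref{prop:adaptive_convergence}, which bounds $\mathbb{E}\|\nabla L_{\text{total}}\|^2$ at rate $O(1/\sqrt{T})$. This yields an explicit suboptimality bound of order $O(1/(\mu\sqrt{T}))$, plus the Gumbel bias $O(\tau^2)$ from Proposition~\ref{prop:gumbel_gradients}.

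To state the bound in terms of the final vocabulary, I would propagate it through the Lipschitz constant $L_w$ of Proposition~\ref{prop:merge_score_bounded}. I would then relate the greedy vocabulary built at $\theta_{\text{adapt}}^*$ to the optimum via Theorem~\ref{thm:approximation_explicit}. That gives a bound of the form $(1-1/e)F(\pi^*)-K\epsilon_q-K/\delta$, degraded by the local suboptimality term.

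\textbf{Restarts.} I would define the basin of attraction $\mathcal{B}^\star$ of a global minimizer under the mean ODE of the two-timescale scheme. Assume the initialization distribution assigns it mass $p=\mathbb{P}[\theta^{(0)}\in\mathcal{B}^\star]>0$. Each independent restart then lands in $\mathcal{B}^\star$ with probability at least $p$, so after $M$ restarts the failure probability is at most $(1-p)^M$.

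The next step is to show that a run started in $\mathcal{B}^\star$ actually converges to the global point. Here I would cite the ODE-tracking argument of \cite{borkar2009stochastic}: iterates track the limiting ODE, and the projection keeps them bounded.

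The last step is selection. Among the $M$ runs, I would keep the one with the best validation $L_{\text{total}}$. This succeeds with probability $1-(1-p)^M$, up to estimation error. To keep that error small I would use a Hoeffding bound, which applies because rewards are bounded under (A3).

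\textbf{Main obstacle.} The hard part is justifying that the basins have positive mass and that the iterates stay inside the strongly convex ball. The discrete merge structure makes $L_{\text{total}}$ only piecewise smooth as $\tau\to 0$, and the ball could be left during the annealing transient.

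My first approach would be to take the annealing slow enough that $\tau$ is bounded below until the iterates enter $B_r$. I would then use a stopping-time argument showing that the martingale noise, whose variance is bounded by Proposition~\ref{prop:gumbel_gradients}, exits $B_r$ only with probability $O(\sum_t\eta_t^2/r^2)$.

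If that fails, I would weaken the claim. The fallback is to state it conditional on the event that the iterates eventually remain in $B_r$, a standard ``local capture'' assumption, and present the restart probability bound under that condition.
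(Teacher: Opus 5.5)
Your proof has the same three-part skeleton as the paper's. You cite Theorem~\ref{thm:twotimescale} for the stationary limit, derive a Polyak--{\L}ojasiewicz inequality from local strong convexity, and bound the restart failure probability by $(1-p)^M$. The paper additionally uses $(1-p)^M\le e^{-Mp}$ to get $M=O(\log(1/\delta)/p_{\text{basin}})$.

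The real difference is where you evaluate the PL inequality. The paper evaluates it at the limit point and compares against the global minimizer: $L_{\text{total}}(\theta_{\text{adapt}}^*)-L_{\text{total}}(\theta_{\text{adapt}}^{\text{global}})\le\frac{1}{2\mu}\|\nabla L_{\text{total}}(\theta_{\text{adapt}}^*)\|^2=0$ whenever the global optimum lies in the strongly convex basin. Its ``quality bound'' is therefore a local-equals-global statement. You evaluate it at the iterates and combine it with Proposition~\ref{prop:adaptive_convergence}, which gives a finite-time gap $O(1/(\mu\sqrt{T}))+O(\tau^2)$ measured against the \emph{local} limit. That buys a rate the paper lacks. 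However, the local-versus-global content the theorem is named for appears in your write-up only through the passing remark that $\theta_{\text{adapt}}^*$ is the unique minimizer on the ball. State explicitly that any global minimizer lying in $B_r$ must therefore coincide with $\theta_{\text{adapt}}^*$.

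Your added strong concavity of $J$ in $\theta_\pi$ is a genuine improvement. Stationarity alone, which is all the paper uses, does not make the limit a local Nash equilibrium, since it could be a saddle. Your two curvature hypotheses do make it one.

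Three of your auxiliary steps would not go through as written, though none is needed for the statement.

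\textbf{Vocabulary-level bound.} The constant $L_w$ in Proposition~\ref{prop:merge_score_bounded} is Lipschitz in $(q_a,q_b)$, not in $\theta_{\text{adapt}}$. Moreover, the greedy $\arg\max$ of Algorithm~\ref{alg:final_vocab} is discontinuous in the scores, so a small suboptimality in $L_{\text{total}}$ does not give a controlled change in the vocabulary. Theorem~\ref{thm:approximation_explicit} also holds for greedy maximization of $w_{ab}$ at any parameter value, so ``degrading'' it by the local term is not a derivation. Drop this step.

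\textbf{Restart convergence.} Borkar's ODE tracking only gives almost-sure convergence to \emph{some} internally chain transitive set of the mean ODE. A run initialized in $\mathcal{B}^\star$ can be pushed out of the basin by early noise. You need a lock-in probability estimate here, which is exactly the stopping-time bound $O(\sum_t\eta_t^2/r^2)$ from your obstacle paragraph. The per-run success probability is then $p$ times a lock-in factor, not $p$. The paper sidesteps this by defining $p_{\text{basin}}$ directly as the per-run probability of reaching the global optimum. Since $1-(1-p')^M$ increases in $M$ for any $p'>0$, your conclusion survives.

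\textbf{Selection step.} Hoeffding needs a bounded task loss, not the bounded RL rewards of (A3). It also needs a margin separating the global value from the other local values.
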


\begin{proof}
\textbf{Part 1: Local Nash Equilibrium.} By Theorem~\ref{thm:twotimescale}, the limit points satisfy $\nabla_{\theta_\pi} J = 0$ and $\nabla_{\theta_{\text{adapt}}} L_{\text{total}} = 0$, constituting a local Nash equilibrium.

\textbf{Part 2: Quality Bounds.} Under $\mu$-strong convexity of $L_{\text{total}}$ in neighborhood $\mathcal{B}_r(\theta_{\text{adapt}}^*)$:
\begin{equation}
L_{\text{total}}(\theta_{\text{adapt}}^*) - L_{\text{total}}(\theta_{\text{adapt}}^{\text{global}}) \leq \frac{1}{2\mu} \|\nabla L_{\text{total}}(\theta_{\text{adapt}}^*)\|^2 = 0
\end{equation}
if the global optimum lies within the basin of attraction.

\textbf{Part 3: Probabilistic Restarts.} With $M$ independent runs, $\mathbb{P}[\text{find global}] = 1 - (1 - p_{\text{basin}})^M \geq 1 - e^{-M \cdot p_{\text{basin}}}$, achieving probability $\geq 1-\delta$ for $M = O(\log(1/\delta)/p_{\text{basin}})$ restarts.
\end{proof}

\subsection{Algorithm Summary}
\label{app:algorithms}

\begin{algorithm}[H]
\caption{QA-Token: Quality-Aware Tokenization Framework}
\label{alg:qa_token_main}
\begin{algorithmic}[1]
\STATE \textbf{Input:} Corpus $\mathcal{C}$, quality scores $Q$, vocabulary budget $K$
\STATE \textbf{Output:} Optimized vocabulary $V^*$
\STATE
\STATE \textbf{Stage 1: RL Policy Optimization}
\STATE Initialize policy $\pi_{\theta_\pi}$, adaptive parameters $\theta_{\text{adapt}}^{(0)}$
\FOR{episode $e = 1$ to $E$}
    \STATE $V \leftarrow \Sigma$ (base alphabet)
    \FOR{step $t = 1$ to $K$}
        \STATE Compute priority queue $PQ_t$ with scores $w_{ab}(\cdot; \theta_{\text{adapt}}^{(0)})$
        \STATE Select merge $(a,b) \sim \pi_{\theta_\pi}(\cdot|s_t)$ from $PQ_t$
        \STATE Execute merge: $V \leftarrow V \cup \{ab\}$ \hfill {\scriptsize // Add merged token}
        \STATE Compute reward $R_t$ using Eq. \ref{eq:overall_reward_structure}
    \ENDFOR
    \STATE Update $\pi_{\theta_\pi}$ via PPO using trajectory rewards
\ENDFOR
\STATE
\STATE \textbf{Stage 2: Adaptive Parameter Learning}
\FOR{iteration $i = 1$ to $I$}
    \STATE Sample mini-batch of merge candidates $\mathcal{B}$
    \STATE Compute logits $\ell_{ab}(\theta_{\text{adapt}})$ using Eq. \ref{eq:gumbel_logits_composite}
    \STATE Sample Gumbel noise and compute soft selection via Eq. \ref{eq:gumbel_softmax_main}
    \STATE Evaluate task loss $L_{\text{task}}$ on downstream objective
    \STATE Update $\theta_{\text{adapt}} \leftarrow \theta_{\text{adapt}} - \eta_i \nabla L_{\text{total}}$
\ENDFOR
\STATE
\STATE \textbf{Final Vocabulary Construction}
\STATE Build final vocabulary using greedy merges with $w_{ab}(\cdot; \theta_{\text{adapt}}^*)$
\STATE \textbf{Return} $V^*$
\end{algorithmic}
\end{algorithm}

\begin{algorithm}[H]
\caption{QA-Token Integration with Downstream Transformer}
\label{alg:A5_integration_transformer}
\begin{algorithmic}[1]
\STATE \textbf{Input:} Raw sequence $X$, trained QA-Token vocab $V^*$, Transformer model $M_\theta$
\STATE \textbf{Output:} Task predictions $\hat{Y}$
\STATE
\STATE \textbf{// Tokenization (no overhead vs. BPE)}
\STATE $T \leftarrow \text{Tokenize}(X, V^*)$ \hfill {\scriptsize // Standard greedy tokenization}
\STATE
\STATE \textbf{// Embedding and Encoding}
\STATE $E \leftarrow \text{TokenEmbed}(T) + \text{PosEmbed}(\text{positions})$
\FOR{layer $\ell = 1$ to $L$}
    \STATE $E \leftarrow \text{TransformerBlock}_\ell(E)$
\ENDFOR
\STATE
\STATE \textbf{// Task Head}
\STATE $\hat{Y} \leftarrow \text{TaskHead}(E)$ \hfill {\scriptsize // Classification, regression, or generation}
\STATE \textbf{Return} $\hat{Y}$
\end{algorithmic}
\end{algorithm}

\begin{algorithm}[H]
\caption{Meta-Learning Initialization for Adaptive Parameters}
\label{alg:meta_learn_adapt}
\begin{algorithmic}[1]
\REQUIRE Task distribution $\mathcal{P}(\mathcal{T})$, base initialization $\theta_{\text{adapt}}^{(0)}$, inner steps $K$, inner lr $\eta_{\text{in}}$, outer lr $\eta_{\text{out}}$
\WHILE{not converged}
  \STATE Sample batch of tasks $\{\mathcal{T}_i\} \sim \mathcal{P}(\mathcal{T})$
  \FOR{each task $\mathcal{T}_i$}
    \STATE Set $\theta_i \leftarrow \theta_{\text{adapt}}^{(0)}$
    \FOR{$k=1\dots K$}
      \STATE Compute $L_{\text{total}}^{(i)}(\theta_i)$ on $\mathcal{T}_i$ and update $\theta_i \leftarrow \theta_i - \eta_{\text{in}}\, \nabla_{\theta} L_{\text{total}}^{(i)}(\theta_i)$
    \ENDFOR
  \ENDFOR
  \STATE Update initialization: $\theta_{\text{adapt}}^{(0)} \leftarrow \theta_{\text{adapt}}^{(0)} - \eta_{\text{out}}\, \sum_i \nabla_{\theta_{\text{adapt}}^{(0)}} L_{\text{total}}^{(i)}(\theta_i)$
\ENDWHILE
\STATE \RETURN meta-initialization $\theta_{\text{adapt}}^{\star}$
\end{algorithmic}
\end{algorithm}

\subsection{RL Training Implementation}
\label{app:rl_details}

\subsubsection{State Representation}
The state $s_t$ provided to the RL agent at merge step $t$ includes:
\begin{itemize}
    \item \textbf{Global Features:} Current vocabulary size $|V_t|$; remaining merge steps $T-t$; aggregated token statistics (average length, mean/std of quality scores).
    \item \textbf{Candidate Pair Features (top-$K_{PQ}$ from priority queue):} For each pair $(a,b)$: frequencies $f(a), f(b), f(a,b)$; qualities $q_a, q_b$; lengths $|a|, |b|$; merge score $w_{ab}$.
    \item \textbf{Domain Context:} Market regime indicators (finance), platform ID (social media), or sequence quality (genomics).
\end{itemize}
The PPO agent uses an MLP with 2 hidden layers (256 units each, ReLU activations). The policy network outputs action probabilities over $K_{PQ}$ candidates; the value network outputs a single scalar.

\subsubsection{Exploration Strategy}
An $\epsilon$-greedy strategy is employed with $\epsilon$ annealed from $\epsilon_0 = 0.5$ to $\epsilon_{\text{final}} = 0.05$ over training episodes using exponential decay, balancing exploration and exploitation effectively across all domains.

\subsection{MDP Formulation and Details}
\label{app:mdp_details}

\begin{definition}[Tokenization MDP]
\label{def:tokenization_mdp}
The tokenization MDP is a tuple $\mathcal{M} = (\mathcal{S}, \mathcal{A}, \mathcal{P}, \mathcal{R}, \gamma, T)$ where:
\begin{enumerate}
    \item \textbf{State Space $\mathcal{S}$:} Each state $s_t \in \mathcal{S} \subset \mathbb{R}^d$ encodes:
    \begin{itemize}
        \item Current vocabulary $V_t$ and its statistics (size, token length distribution)
        \item Priority queue $PQ_t = \{(a_i, b_i, w_{a_ib_i})\}_{i=1}^{K_{PQ}}$ of top merge candidates
        \item Corpus statistics: frequency distributions, quality histograms
        \item Progress indicator: $t/T$ where $T$ is the merge budget
    \end{itemize}
    Formally, $s_t = [\phi(V_t), \phi(PQ_t), \phi(\mathcal{S}_t), t/T] \in \mathbb{R}^d$.
    
    \textbf{State Encoding Function $\phi$:} The encoding function $\phi: \mathcal{X} \to \mathbb{R}^{d_\mathcal{X}}$ maps variable-size structures to fixed-dimensional vectors:
    \begin{itemize}
        \item $\phi(V_t) = [|V_t|/|\Sigma|, \bar{|t|}, \sigma_{|t|}, \bar{q}_t, \sigma_{q_t}] \in \mathbb{R}^5$: vocabulary size ratio, mean/std of token lengths, mean/std of token qualities.
        \item $\phi(PQ_t) \in \mathbb{R}^{6 \cdot K_{PQ}}$: For top-$K_{PQ}$ candidates, concatenate $[w_{ab}, q_a, q_b, |a|, |b|, \log f(a,b)]$ per pair. Pad with zeros if fewer candidates exist.
        \item $\phi(\mathcal{S}_t) \in \mathbb{R}^{20}$: Quality histogram ($B_q=10$ bins over $[0,1]$) and log-frequency histogram ($B_f=10$ bins over observed range).
    \end{itemize}
    Total state dimension: $d = 5 + 6 \cdot K_{PQ} + 20 + 1$. With $K_{PQ}=50$, we have $d=326$. The MLP policy network processes this representation via two hidden layers (256, 128 units) with ReLU activations (see Appendix~\ref{app:rl_details}).
    
    \item \textbf{Action Space $\mathcal{A}_t$:} At time $t$:
    \begin{equation}
    \mathcal{A}_t = \{i : (a_i, b_i) \in PQ_t, i \leq K_{PQ}\}
    \end{equation}
    Each action $a_t \in \mathcal{A}_t$ selects a merge pair from the priority queue.
    
    \item \textbf{Transition Dynamics $\mathcal{P}$:} Deterministic transitions:
    \begin{equation}
    s_{t+1} = \mathcal{P}(s_t, a_t) = \text{UPDATE}(s_t, \text{MERGE}(a_{a_t}, b_{a_t}))
    \end{equation}
    where MERGE executes vocabulary update and UPDATE recomputes statistics.
    
    \item \textbf{Reward Function:} $\mathcal{R}(s_t, a_t) = R(a_{a_t}, b_{a_t}; \theta_{\text{adapt}}^{(0)})$
    
    \item \textbf{Discount Factor:} $\gamma = 1$ (undiscounted, finite horizon)
    
    \item \textbf{Horizon:} $T = K$ merge operations
\end{enumerate}
\end{definition}

\begin{proposition}[MDP Well-Formedness]
\label{prop:mdp_wellformed}
The tokenization MDP satisfies:
\begin{enumerate}
    \item Markov Property: $P(s_{t+1}|s_t, a_t, s_{t-1}, \ldots) = P(s_{t+1}|s_t, a_t)$
    \item Bounded State Space: $\|s_t\|_2 \leq C_s$
    \item Finite Action Space: $|\mathcal{A}_t| \leq K_{PQ}$
\end{enumerate}
\end{proposition}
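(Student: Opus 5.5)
The three properties are checked directly from the definition of $\mathcal{M}$ in \cref{def:tokenization_mdp}. Properties~1 and~3 follow almost by construction. Property~2 is where the assumptions (A1)--(A2) and the normalizations built into $\phi$ do the work.

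\textbf{Markov property.} We take the underlying state to be the full tuple consisting of the current vocabulary $V_t$, the current segmentation of the corpus $\mathcal{S}_t$ (with its attached quality scores), and the step index $t$. The feature vector $s_t=[\phi(V_t),\phi(PQ_t),\phi(\mathcal{S}_t),t/T]$ is a deterministic function of this tuple. Given this tuple and an action $a_t$, the operation $\text{MERGE}(a_{a_t},b_{a_t})$ rewrites every occurrence of the chosen pair in $\mathcal{S}_t$ into $ab$. The vocabulary becomes $V_{t+1}=V_t\cup\{ab\}$. Frequencies $f(\cdot)$ and $f(\cdot,\cdot)$ and token qualities $q_t$ are recomputed from $\mathcal{S}_{t+1}$ using the fixed aggregation rules of \cref{app:quality_metrics_full}, with $\theta_{\text{adapt}}^{(0)}$ held fixed in Stage~1. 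This yields $PQ_{t+1}$ via $w_{ab}(\cdot;\theta_{\text{adapt}}^{(0)})$. None of these operations reads any earlier history, so $\mathcal{P}$ is a deterministic map of $(s_t,a_t)$ and the conditional law collapses to a point mass independent of $s_{t-1},\dots$. I will state explicitly that the Markov property is claimed for this full underlying state, of which the encoding $\phi$ is a summary. That is the standard reading for deterministic combinatorial MDPs.

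\textbf{Finite action space.} By definition $\mathcal{A}_t=\{i:(a_i,b_i)\in PQ_t,\ i\le K_{PQ}\}$, so $|\mathcal{A}_t|\le K_{PQ}$ immediately.

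\textbf{Bounded state space.} This is the main step. I bound each block of $s_t\in\mathbb{R}^{d}$ coordinatewise and then use $\|s_t\|_2\le\sqrt{d}\,\max_i|s_{t,i}|$.
\begin{itemize}
\item \emph{Progress and vocabulary statistics.} We have $t/T\in[0,1]$ and $|V_t|/|\Sigma|\le 1+T/|\Sigma|$. Token lengths satisfy $|t|\le \max_k n_k=:n$, so their mean and standard deviation are at most $n$. Qualities lie in $[0,1]$ by \cref{prop:quality_bounded}, so their mean and standard deviation lie in $[0,1]$.
\item \emph{Priority-queue block.} By \cref{prop:merge_score_bounded}, $w_{ab}\le C_w$. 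Again $q_a,q_b\in[0,1]$ and $|a|,|b|\le n$. For the log-frequency, (A1) gives $\log f(a,b)\le\log C_f$. Any pair actually in $PQ_t$ has integer count $f(a,b)\ge1$, so $\log f(a,b)\ge0$. Zero padding only contributes zeros.
\item \emph{Corpus histograms.} These are normalized histograms with entries in $[0,1]$.
\end{itemize}
Collecting these gives an explicit
\[
C_s=\sqrt{d}\,\max\{1+T/|\Sigma|,\ n,\ C_w,\ \log C_f,\ 1\}.
\]

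The only delicate point I expect is the log-frequency coordinate. It needs both the upper bound from (A1) and a lower bound; if frequencies were normalized rather than counted, one would instead use $\log(f+\epsilon_f)$ or restrict to pairs with positive frequency. The Markov claim needs the minor care about the full underlying state noted above.
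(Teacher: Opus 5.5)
Your proposal is correct and takes the same route as the paper, which checks the three properties directly from the definition. The paper gets the Markov property from the state carrying complete information for the deterministic update, boundedness from $|V_t|\le|\Sigma|+T$ together with normalized frequencies, and $|\mathcal{A}_t|\le K_{PQ}$ from the construction of $PQ_t$. Your version adds the explicit coordinatewise bounds, and it is more careful than the paper in stating that the Markov claim holds for the full underlying state (vocabulary, segmented corpus, step index), of which $\phi$ is only a lossy summary.
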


\begin{proof}
(1) follows from state containing complete information for transitions. (2) holds as vocabulary size is bounded by $|\Sigma| + T$ and frequencies are normalized. (3) is by construction of the priority queue.
\end{proof}
$\square$

\subsection{Reward Normalization Details}
\label{app:reward_normalization}

Each raw reward component $R^{\text{raw}}_j(a,b)$ is normalized using adaptive running statistics. We maintain exponential moving averages (EMAs) for mean $\mu_{j,t}^{\text{run}}$ and variance $\text{Var}_{j,t}^{\text{run}}$:

\begin{equation}
\label{eq:ema_mean_update}
\mu_{j,t}^{\text{run}} = (1-\beta_{\text{norm}}) \mu_{j,t-1}^{\text{run}} + \beta_{\text{norm}} R^{\text{raw}}_j(a,b)
\end{equation}
\begin{equation}
\label{eq:ema_var_update}
\text{Var}_{j,t}^{\text{run}} = (1-\beta_{\text{norm}}) \text{Var}_{j,t-1}^{\text{run}} + \beta_{\text{norm}} (R^{\text{raw}}_j(a,b) - \mu_{j,t-1}^{\text{run}})(R^{\text{raw}}_j(a,b) - \mu_{j,t}^{\text{run}})
\end{equation}

where $\beta_{\text{norm}} \in [10^{-3}, 10^{-2}]$. The normalized component is:
\begin{equation}
\label{eq:reward_normalization_revised}
\hat{R}_j(a,b) = \frac{R^{\text{raw}}_j(a,b) - \mu_{j,t-1}^{\text{run}}}{\sigma_{j,t-1}^{\text{run}} + \epsilon_R}
\end{equation}
with $\epsilon_R = 10^{-8}$ for stability.

\subsection{Gumbel-Softmax Gradient Derivation and Temperature Annealing}
\label{app:gumbel_gradient}

\subsection{Temperature Annealing Schedule}

We employ an exponential annealing schedule for the temperature parameter:
\begin{equation}
\label{eq:temperature_schedule}
\tau(t) = \tau_{\text{init}} \cdot \exp(-\beta_{\text{anneal}} \cdot t/T_{\text{anneal}}),
\end{equation}
where $\tau_{\text{init}} = 1.0$, $\beta_{\text{anneal}} = 3.0$, and $T_{\text{anneal}}$ is the total number of optimization steps.

This schedule ensures:
\begin{itemize}
    \item \textbf{Early exploration:} High initial temperature allows exploration of diverse merge patterns
    \item \textbf{Gradual refinement:} Exponential decay provides smooth transition to discrete selections
    \item \textbf{Convergence:} Low final temperature approaches one-hot categorical sampling
\end{itemize}

\subsection{Gradient Computation}

The composite logits for candidate merge $(a,b)$ are:
\begin{equation}
\label{eq:gumbel_logits_composite}
\ell_{ab}(\theta_{\text{adapt}}) = w_{ab}(a,b; \alpha) + \sum_{j} \lambda_j R^{\text{raw}}_j(a,b),
\end{equation}
which are differentiable with respect to $\theta_{\text{adapt}}$ through both the merge score and reward weights.

The composite logits combine $w_{ab}$ (which incorporates frequency via PMI and quality via $\bar{q}_{ab}$) with raw reward components $R^{\text{raw}}_j$ that also capture quality ($R^{\text{raw}}_Q$) and information ($R^{\text{raw}}_I$). 

\textbf{Rationale for Intentional Overlap:} While there is deliberate overlap between these components (both encode quality and frequency signals), they serve \emph{distinct optimization purposes}:
\begin{itemize}[nosep]
    \item \textbf{$w_{ab}$ (merge score):} Optimized via the RL objective (cumulative discounted reward) during Stage~1, capturing \emph{corpus-level} quality-frequency tradeoffs that generalize across merge sequences.
    \item \textbf{$\sum_j \lambda_j R^{\text{raw}}_j$ (weighted rewards):} Optimized via the downstream task loss $L_{\text{task}}$ during Stage~2, enabling \emph{task-specific} reweighting of quality vs.\ information vs.\ complexity.
\end{itemize}
This decomposition allows the system to learn \emph{different} quality-frequency tradeoffs for policy learning (Stage~1) versus task-specific adaptation (Stage~2). The parameter $\alpha$ controls general token quality preferences learned from reward maximization, while $\lambda_j$ adjusts relative importance based on task-specific gradients. Ablation studies (Appendix~\ref{app:ablations_additional}, Table~\ref{tab:rl_ablation_detailed}) confirm that removing either component degrades downstream performance by 3--8\%, validating that both contributions are necessary despite their overlap.

The Gumbel-Softmax distribution provides a differentiable approximation:
\begin{equation}
\label{eq:gumbel_softmax_main}
y_i = \frac{\exp((\ell_i + g_i)/\tau)}{\sum_{j=1}^{|\mathcal{C}|} \exp((\ell_j + g_j)/\tau)}, \quad g_i \sim \text{Gumbel}(0,1)
\end{equation}

The gradient of the task loss is computed via Monte Carlo sampling:
\begin{equation}
\label{eq:task_gradient}
\nabla_{\theta_{\text{adapt}}} L_{\text{task}} = \mathbb{E}_{\mathbf{g}} \left[ \nabla_{\theta_{\text{adapt}}} L_{\text{task}}(\mathbf{y}(\boldsymbol{\ell}(\theta_{\text{adapt}}), \mathbf{g}, \tau)) \right]
\end{equation}
where $\mathbf{g}$ is sampled Gumbel noise.

\textbf{Gradient Flow:} The gradient flows through:
\begin{enumerate}
    \item \textbf{Task loss:} $L_{\text{task}}$ evaluates performance on downstream data
    \item \textbf{Soft tokenization:} Gumbel-Softmax provides differentiable token boundaries
    \item \textbf{Merge logits:} $\ell_{ab}$ depends on learnable $\theta_{\text{adapt}}$
    \item \textbf{Quality scores:} Through $\alpha$ and domain parameters $\beta_{\text{pos}}, \beta_{\text{vol}}$
    \item \textbf{Reward weights:} Through $\boldsymbol{\lambda}$ in the composite score
\end{enumerate}

\section{Hyperparameter Sensitivity Analysis}
\label{app:hyperparameter_sensitivity}

We conducted a comprehensive sensitivity analysis on key parameters of the QA-Token framework: the quality sensitivity exponent $\alpha$, the primary quality reward weight $\lambda_Q$, and the domain-specific volatility scaling exponent $\beta_{\text{vol}}$ for finance. For each parameter, we varied its value across a specified range while holding all other hyperparameters at their optimal values, as determined during the adaptive learning phase.

The results, summarized in Table~\ref{tab:hyperparam_sensitivity}, demonstrate that while performance is optimal at the learned parameter values, the framework is not unduly sensitive to minor perturbations. Performance degrades gracefully rather than catastrophically as parameters deviate from their optima, suggesting the model occupies a reasonably wide basin of attraction in the hyperparameter space.

\begin{table}[htbp]
  \caption{Hyperparameter Sensitivity Analysis. Performance on the primary metric is reported as key hyperparameters are varied around their learned optimal value (indicated by *). Values are means over $n=5$ runs.}
  \label{tab:hyperparam_sensitivity}
  \centering
  \sisetup{
    separate-uncertainty,
    detect-weight = true
  }
  \begin{tabular}{@{}ccc@{}}
    \toprule
    Parameter & Value & Performance Metric \\
    \midrule
    \multicolumn{3}{c}{\textbf{Genomics (QA-BPE-seq)} - Metric: Variant F1} \\
    \midrule
    $\alpha$ (Quality Sensitivity) & 0.3 & 0.869 \\
                                  & 0.5 & 0.879 \\
                                  & 0.72* & \bfseries 0.891 \\
                                  & 1.0 & 0.884 \\
                                  & 1.5 & 0.872 \\
    \midrule
    $\lambda_Q$ (Quality Reward Weight) & 0.15 & 0.879 \\
                                      & 0.25 & 0.886 \\
                                      & 0.35* & \bfseries 0.891 \\
                                      & 0.45 & 0.885 \\
                                      & 0.55 & 0.878 \\
    \midrule
    \multicolumn{3}{c}{\textbf{Finance (QAT-QF)} - Metric: Sharpe Ratio} \\
    \midrule
    $\alpha$ (Quality Sensitivity) & 0.25 & 1.61 \\
                                  & 0.50 & 1.68 \\
                                  & 0.95* & \bfseries 1.72 \\
                                  & 1.50 & 1.65 \\
                                  & 2.00 & 1.58 \\
    \midrule
    $\beta_{\text{vol}}$ (Volatility Scaling) & 0.10 & 1.63 \\
                                     & 0.30 & 1.69 \\
                                     & 0.50* & \bfseries 1.72 \\
                                     & 0.70 & 1.67 \\
                                     & 1.00 & 1.60 \\
    \bottomrule
  \end{tabular}
\end{table}

\section{Complete Experimental Results}
\label{app:experimental_observations}

This section provides comprehensive experimental results across all domains, including detailed analysis, foundation-scale evaluations, and ablation studies.

\subsection{Genomics Results: Detailed Analysis}

\textbf{Key Observations:} QA-BPE-seq achieves 4.0 percentage point F1 improvement in variant calling over DNABERT-k (0.891 vs.\ 0.851) with Hedges' $g=8.2$---a large effect size. Compared to standard BPE (0.824), the improvement is 6.7 percentage points. Taxonomic classification shows 3.1 percentage point gain over domain-standard k-mer tokenization. Sequence reconstruction improves by 16\%, indicating information preservation.

\textbf{Key Insights:}
\begin{enumerate}
    \item \textbf{Byte-level models fail catastrophically:} ByT5 and CANINE show 2.5× slower inference with 7-9\% lower accuracy, definitively establishing that vocabulary-based tokenization remains essential for genomic sequences.
    \item \textbf{Quality awareness is learnable:} The converged parameters ($\alpha=0.72 \pm 0.03$, $\beta_{\text{pos}}=0.014 \pm 0.002$) demonstrate that optimal quality sensitivity can be discovered through our adaptive learning framework.
    \item \textbf{Mechanism of improvement:} Analysis of generated vocabularies reveals that QA-BPE-seq creates tokens aligned with biological units (codons, motifs) while breaking at error-prone junctions—a behavior that emerges without explicit biological supervision.
\end{enumerate}

\subsection{Financial Foundation Model: Detailed Results Analysis}

QAT-QF demonstrates remarkable consistency across all financial tasks, with zero-shot improvements ranging from 7.3\% to 27.0

\textbf{Specific Observations:}
\begin{itemize}
    \item The model's superior performance on regime detection (+11.6\% F1) and tail risk estimation (+18.0\%) suggests that quality-aware tokenization captures market dynamics that frequency-based methods miss.
    \item Particularly noteworthy is the 27.0\% improvement in order flow imbalance prediction, a task highly sensitive to microstructure noise.
    \item These results validate our hypothesis that incorporating quality signals during tokenization enables foundation models to learn more robust representations of financial time series.
\end{itemize}

\subsection{Computational Costs}
\label{app:computational_costs}

\textbf{Training Time.}
\begin{itemize}
  \item Standard BPE: 5--10 minutes (5GB, CPU)
  \item QA-Token Stage 1 (RL): 30--36 GPU-hours (A100)
  \item QA-Token Stage 2 (Adaptive): 20--24 GPU-hours
\end{itemize}

\textbf{Memory Requirements.}
\begin{itemize}
  \item Priority Queue: $O(K_{PQ} \cdot d)$ (${\sim}10$\,MB for $K_{PQ}{=}200$)
  \item Quality Statistics: $O(|V|\cdot s)$ (${\sim}100$\,MB for 32K vocab)
  \item Pair Frequencies: $O(|V|^2)$ (${\sim}4$\,GB for 32K vocab)
  \item Peak: ${\sim}16$\,GB GPU
\end{itemize}

\textbf{Hierarchical Training via Quality-Stratified Sampling.}
For massive corpora where full vocabulary optimization is infeasible, we employ \emph{quality-variance importance sampling}: data points are sampled with probability proportional to $\text{Var}(q_i) + \epsilon_{\text{base}}$, prioritizing regions with heterogeneous quality where tokenization decisions have the greatest impact.

\begin{definition}[Notation for Hierarchical Training]
\label{def:hierarchical_notation}
Let $\mathcal{C}$ denote the full corpus and $\mathcal{S} \subseteq \mathcal{C}$ a subset with $|\mathcal{S}| = r \cdot |\mathcal{C}|$ for \emph{subset ratio} $r \in (0, 1]$. Define:
\begin{itemize}
  \item $\mathcal{L}(V; D) = \mathbb{E}_{x \sim D}[-\log P_{\text{LM}}(x | V)]$: expected language modeling loss on distribution $D$ using vocabulary $V$
  \item $V_{\mathcal{S}}^* = \arg\min_V \mathcal{L}(V; \mathcal{S})$: optimal vocabulary for subset $\mathcal{S}$
  \item $V_{\mathcal{C}}^* = \arg\min_V \mathcal{L}(V; \mathcal{C})$: optimal vocabulary for full corpus $\mathcal{C}$
\end{itemize}
\end{definition}

\begin{proposition}[Hierarchical Training Bound]
\label{prop:hierarchical}
Under the following assumptions:
\begin{enumerate}
  \item[(A1)] The loss $\mathcal{L}(V; \cdot)$ is $L$-Lipschitz in the data distribution (bounded sensitivity to distribution shift)
  \item[(A2)] Quality-variance importance sampling achieves effective sample size $n_{\text{eff}} = r \cdot |\mathcal{C}| / (1 + \text{CV}^2)$ where $\text{CV}$ is the coefficient of variation of importance weights
\end{enumerate}
Then the vocabulary $V_{\mathcal{S}}^*$ learned on the importance-sampled subset satisfies:
\begin{equation}
\mathbb{E}[\mathcal{L}(V_{\mathcal{S}}^*; \mathcal{C})] \le \mathcal{L}(V_{\mathcal{C}}^*; \mathcal{C}) + O\left(L \cdot \sqrt{\frac{1 + \text{CV}^2}{r \cdot |\mathcal{C}|}}\right).
\end{equation}
\end{proposition}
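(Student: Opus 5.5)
The plan is the standard empirical-risk-minimization argument: an oracle inequality that passes through the sampled subset, with the subset's deviation from the full corpus controlled by (A1) and the effective sample size from (A2).

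\textbf{Step 1: decomposition.} Write $\hat D_{\mathcal{S}}$ for the importance-reweighted empirical distribution of the subset $\mathcal{S}$ (each sampled point carries its weight), and set $\Delta(V) := \mathcal{L}(V;\mathcal{C}) - \mathcal{L}(V;\hat D_{\mathcal{S}})$. Then
\[
\mathcal{L}(V_{\mathcal{S}}^*;\mathcal{C}) - \mathcal{L}(V_{\mathcal{C}}^*;\mathcal{C})
= \Delta(V_{\mathcal{S}}^*) + \bigl[\mathcal{L}(V_{\mathcal{S}}^*;\hat D_{\mathcal{S}}) - \mathcal{L}(V_{\mathcal{C}}^*;\hat D_{\mathcal{S}})\bigr] - \Delta(V_{\mathcal{C}}^*).
\]
The bracketed term is $\le 0$ because $V_{\mathcal{S}}^*$ minimizes the subset loss. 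This requires reading $V_{\mathcal{S}}^*$ as the minimizer of the reweighted loss, which is the natural interpretation of "learned on the importance-sampled subset." The difference therefore reduces to $2\sup_{V}|\Delta(V)|$, or to $\Delta(V_{\mathcal{S}}^*)-\Delta(V_{\mathcal{C}}^*)$.

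\textbf{Step 2: from (A1) to a sampling deviation.} By (A1), $|\Delta(V)|\le L\cdot d(\hat D_{\mathcal{S}},\mathcal{C})$, where $d$ is the metric in which the Lipschitz assumption is stated. The bound is uniform in $V$, so the dependence of $V_{\mathcal{S}}^*$ on the sample causes no difficulty. It remains to show
\[
\mathbb{E}\bigl[d(\hat D_{\mathcal{S}},\mathcal{C})\bigr] = O\bigl(1/\sqrt{n_{\text{eff}}}\bigr).
\]

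\textbf{Step 3: the obstacle.} I would first read (A1) as Lipschitzness with respect to a test-function class of bounded complexity. For example, per-token losses bounded by a constant times $|V|$ tokens, so that $d$ reduces to deviations of averages of bounded functions.

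The self-normalized importance-sampling estimator has variance $\sigma^2(1+\mathrm{CV}^2)/(r|\mathcal{C}|)$ to leading order (the Kish effective sample size, cf.\ \cite{owen2013monte}). This is exactly (A2). Jensen's inequality then gives
\[
\mathbb{E}|\cdot| \le \sqrt{\mathrm{Var}} = O\!\left(\sqrt{(1+\mathrm{CV}^2)/(r|\mathcal{C}|)}\right).
\]

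For a genuine distributional metric such as total variation or Wasserstein, the rate would degrade with dimension. I would therefore commit to the finite-function-class reading and absorb constants, including any $\sqrt{\log|\text{class}|}$ factor, into the $O(\cdot)$.

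Combining the three steps yields the stated bound with constant $2L$.
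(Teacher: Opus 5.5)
Your proposal is correct and follows essentially the same route as the paper's proof sketch: (A1) turns the gap between subset loss and full-corpus loss into a distributional deviation, and (A2), through the effective-sample-size formula, gives that deviation the rate $O(1/\sqrt{n_{\text{eff}}})$. The paper only asserts that Lipschitzness ``ensures that optimization on $\mathcal{S}$ transfers to $\mathcal{C}$,'' whereas you make this step explicit via the ERM decomposition (constant $2L$) and state two readings the sketch relies on without saying so: $V_{\mathcal{S}}^*$ must minimize the importance-\emph{reweighted} subset loss (otherwise the oversampling bias does not vanish), and the metric in (A1) must be weak enough for the $1/\sqrt{n}$ rate to hold, e.g.\ a supremum over a low-complexity function class.
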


\begin{proof}[Proof Sketch]
The bound follows from standard importance sampling theory \cite{owen2013monte}. Under (A1), the difference $|\mathcal{L}(V; \mathcal{S}) - \mathcal{L}(V; \mathcal{C})|$ is controlled by the distributional divergence between $\mathcal{S}$ and $\mathcal{C}$. Importance sampling with weights $w_i \propto \text{Var}(q_i) + \epsilon_{\text{base}}$ reduces this divergence by oversampling high-variance regions where tokenization quality matters most. By the effective sample size formula for importance sampling, the estimation error scales as $O(1/\sqrt{n_{\text{eff}}})$, yielding the stated bound. The Lipschitz assumption (A1) ensures that optimization on $\mathcal{S}$ transfers to $\mathcal{C}$.
\end{proof}

\textbf{Massive-Scale Strategies (>100TB).}
\begin{enumerate}
  \item Quality-stratified sampling (0.1--1\%)
  \item Distributed PPO (8--32 GPUs)
  \item Online RL with replay for streams
  \item Memory-mapped frequency tables
\end{enumerate}

\textbf{Cost-Benefit.}
\begin{itemize}
  \item +5--30\% task performance
  \item -15--20\% token count (faster inference)
  \item One-time cost amortized across applications
\end{itemize}

\subsection{Foundation-Scale Results (Pathogen Detection, GUE)}
\label{app:foundation_full}
\begin{table}[htbp]
  \caption{Pathogen Detection benchmark (MCC).}
  \label{tab:pathogen_detection_full}
  \centering
  \resizebox{\textwidth}{!}{%
  \begin{tabular}{l c c c c c c}
    \toprule
    Task & DNABERT-2 & DNABERT-S & NT-2.5b-Multi & NT-2.5b-1000g & METAGENE-1 & METAGENE-1 (QA-Token) \\
    \midrule
    Pathogen-Detect (avg.) & 87.92 & 87.02 & 82.43 & 79.02 & 92.96 & \textbf{94.53} \\
    Pathogen-Detect-1 & 86.73 & 85.43 & 83.80 & 77.52 & 92.14 & \textbf{93.81} \\
    Pathogen-Detect-2 & 86.90 & 85.23 & 83.53 & 80.38 & 90.91 & \textbf{92.95} \\
    Pathogen-Detect-3 & 88.30 & 89.01 & 82.48 & 79.83 & 93.70 & \textbf{95.12} \\
    Pathogen-Detect-4 & 89.77 & 88.41 & 79.91 & 78.37 & 95.10 & \textbf{96.24} \\
    \bottomrule
  \end{tabular}}
\end{table}

\begin{table}[htbp]
  \caption{Genome Understanding Evaluation (GUE). All metrics are MCC except COVID which uses F1.}
  \label{tab:gue_full}
  \centering
  \resizebox{\textwidth}{!}{%
  \begin{tabular}{l c c c c c c c}
    \toprule
    Task & CNN & HyenaDNA & DNABERT & NT-2.5B-Multi & DNABERT-2 & METAGENE-1 & METAGENE-1 (QA-Token) \\
    \midrule
    TF-Mouse (AVG.) & 45.3 & 51.0 & 57.7 & 67.0 & 68.0 & \textbf{71.4} & \textbf{72.8} \\
    0 & 31.1 & 35.6 & 42.3 & \textbf{63.3} & 56.8 & 61.5 & 62.1 \\
    1 & 59.7 & 80.5 & 79.1 & 83.8 & \textbf{84.8} & 83.7 & 84.1 \\
    2 & 63.2 & 65.3 & 69.9 & 71.5 & \textbf{79.3} & 83.0 & \textbf{84.5} \\
    3 & 45.5 & 54.2 & 55.4 & 69.4 & 66.5 & 82.2 & \textbf{83.3} \\
    4 & 27.2 & 19.2 & 42.0 & 47.1 & \textbf{52.7} & 46.6 & 47.0 \\
    TF-HUMAN (AVG.) & 50.7 & 56.0 & 64.4 & 62.6 & \textbf{70.1} & 68.3 & 69.9 \\
    0 & 54.0 & 62.3 & 68.0 & 66.6 & \textbf{72.0} & 68.9 & 70.2 \\
    1 & 63.2 & 67.9 & 70.9 & 66.6 & \textbf{76.1} & 70.8 & 72.0 \\
    2 & 45.2 & 46.9 & 60.5 & 58.7 & \textbf{66.5} & 65.9 & 66.8 \\
    3 & 29.8 & 41.8 & 53.0 & 51.7 & \textbf{58.5} & 58.1 & 59.0 \\
    4 & 61.5 & 61.2 & 69.8 & 69.3 & 77.4 & 77.9 & \textbf{78.5} \\
    EMP (AVG.) & 37.6 & 44.9 & 49.5 & 58.1 & 56.0 & 66.0 & \textbf{67.5} \\
    H3 & 61.5 & 67.2 & 74.2 & 78.8 & 78.3 & 80.2 & \textbf{81.0} \\
    H3K14AC & 29.7 & 32.0 & 42.1 & 56.2 & 52.6 & 64.9 & \textbf{66.0} \\
    H3K36ME3 & 38.6 & 48.3 & 48.5 & 62.0 & 56.9 & 66.7 & \textbf{67.8} \\
    H3K4ME1 & 26.1 & 35.8 & 43.0 & \textbf{55.3} & 50.5 & 55.3 & \textbf{56.1} \\
    H3K4ME2 & 25.8 & 25.8 & 31.3 & 36.5 & 31.1 & 51.2 & \textbf{52.3} \\
    H3K4ME3 & 20.5 & 23.1 & 28.9 & 40.3 & 36.3 & 58.5 & \textbf{59.5} \\
    H3K79ME3 & 46.3 & 54.1 & 60.1 & 64.7 & 67.4 & 73.0 & \textbf{74.1} \\
    H3K9AC & 40.0 & 50.8 & 50.5 & 56.0 & 55.6 & 65.5 & \textbf{66.5} \\
    H4 & 62.3 & 73.7 & 78.3 & 81.7 & 80.7 & 82.7 & \textbf{83.5} \\
    H4AC & 25.5 & 38.4 & 38.6 & 49.1 & 50.4 & 61.7 & \textbf{62.8} \\
    PD (AVG.) & 77.1 & 35.0 & 84.6 & \textbf{88.1} & 84.2 & 82.3 & 85.5 \\
    ALL & 75.8 & 47.4 & 90.4 & \textbf{91.0} & 86.8 & 86.0 & 88.5 \\
    NO-TATA & 85.1 & 52.2 & 93.6 & 94.0 & 94.3 & 93.7 & \textbf{94.5} \\
    TATA & 70.3 & 5.3 & 69.8 & \textbf{79.4} & 71.6 & 67.4 & 73.5 \\
    CPD (AVG.) & 62.5 & 48.4 & \textbf{73.0} & 71.6 & 70.5 & 69.9 & 71.2 \\
    ALL & 58.1 & 37.0 & \textbf{70.9} & 70.3 & 69.4 & 66.4 & 68.0 \\
    NO-TATA & 60.1 & 35.4 & 69.8 & \textbf{71.6} & 68.0 & 68.3 & 69.5 \\
    TATA & 69.3 & 72.9 & \textbf{78.2} & 73.0 & 74.2 & 75.1 & 76.3 \\
    SSD & 76.8 & 72.7 & 84.1 & 89.3 & 85.0 & 87.8 & \textbf{89.5} \\
    COVID (F1) & 22.2 & 23.3 & 62.2 & 73.0 & 71.9 & 72.5 & \textbf{73.3} \\
    \midrule
    GLOBAL WIN \% & 0.0 & 0.0 & 7.1 & 21.4 & 25.0 & 46.4 & \textbf{57.1} \\
    \bottomrule
  \end{tabular}}
\end{table}

\subsection{Ablation Studies and Additional Experiments}
\label{app:ablations_additional}

\subsubsection{RL Algorithm Ablation}
\label{app:rl_ablation_full}
\begin{table}[htbp]
  \caption{Ablation across RL algorithms with training time (GPU-h), inference time (ms/seq), and vocabulary Jaccard similarity vs.\ PPO.}
  \label{tab:rl_ablation_detailed}
  \centering
  \resizebox{\textwidth}{!}{%
  \begin{tabular}{l l c c c c}
    \toprule
    Domain & Config (Metric) & Metric Value & Train Time (GPU-h) & Inference (ms/seq) & Vocab Jaccard \\
    \midrule
    Genomics & QA-Token (PPO) & \textbf{0.891} & 34.0 & 10.2 & 1.00 \\
    & QA-Token (GRPO) & 0.890 & 32.5 & 10.3 & 0.98 \\
    & QA-Token (VAPO) & 0.892 & 31.8 & 10.2 & 0.97 \\
    & QA-Token (DAPO) & 0.889 & 34.2 & 10.4 & 0.98 \\
    Finance & QA-Token (PPO) & \textbf{1.72} & 28.0 & 15.2 & 1.00 \\
    & QA-Token (GRPO) & 1.71 & 26.5 & 15.3 & 0.96 \\
    & QA-Token (VAPO) & 1.73 & 25.0 & 15.1 & 0.95 \\
    & QA-Token (DAPO) & 1.70 & 28.5 & 15.2 & 0.96 \\
    Social & QA-Token (PPO) & \textbf{74.5} & 30.0 & 12.5 & 1.00 \\
    & QA-Token (GRPO) & 74.2 & 29.0 & 12.6 & 0.97 \\
    & QA-Token (VAPO) & 74.6 & 28.0 & 12.5 & 0.98 \\
    & QA-Token (DAPO) & 74.3 & 31.0 & 12.7 & 0.97 \\
    \bottomrule
  \end{tabular}}
\end{table}

We assess the sensitivity of QA-Token to the choice of RL optimizer by replacing PPO with GRPO and VAPO (implementations following \cite{shao2024deepseekmath, yue2025vapo}). Across domains, downstream performance is stable and vocabulary similarity remains high (Jaccard $\geq$ 0.95), confirming modularity of the framework.

\subsection{Data Curation Baseline: BPE on Clean Data vs. QA-Token on Noisy Data}
\label{app:data_curation_baseline}

A natural question is whether simply filtering to high-quality data and using standard BPE could match QA-Token's performance. We evaluate this data curation baseline by training BPE on only the top 20\% highest-quality sequences (average Phred score $\geq 30$) and comparing against QA-Token trained on the full noisy corpus.

\begin{table}[htbp]
  \caption{Data Curation Baseline Comparison (Genomics Variant Calling). QA-Token on noisy data outperforms BPE on curated clean data.}
  \label{tab:data_curation_baseline}
  \centering
  \begin{tabular}{l c c c}
    \toprule
    Method & Training Data & Variant F1 & p-value \\
    \midrule
    BPE (full corpus) & 100\% (noisy) & $0.824 \pm 0.004$ & $< 0.001$ \\
    BPE (top 20\% clean) & 20\% (Q$\geq$30) & $0.847 \pm 0.005$ & $< 0.001$ \\
    \textbf{QA-Token} & 100\% (noisy) & $\mathbf{0.891 \pm 0.004}$ & --- \\
    \bottomrule
  \end{tabular}
\end{table}

\textbf{Key findings:}
\begin{itemize}
  \item Data curation (BPE on clean data) improves over BPE on full noisy data: $+2.8\%$ F1 ($0.847$ vs $0.824$).
  \item However, QA-Token on the \emph{full noisy corpus} outperforms BPE on clean data by $+5.2\%$ F1 ($0.891$ vs $0.847$, $p < 0.001$).
  \item This demonstrates that quality-aware tokenization extracts more value from noisy data than discarding it entirely.
\end{itemize}

\subsection{Genomics: Real-World Datasets (ONT, UHGG)}
\label{app:genomics_realworld}
\textbf{Datasets:} (i) GIAB HG002 long-read ONT data (high-error, third-generation); (ii) Unified Human Gut Genome (UHGG) collection (large-scale, low-error NGS).

\textbf{Results:} QA-BPE-seq consistently outperforms baselines across both regimes. ONT (high-error) results:
\begin{table}[htbp]
  \caption{ONT (GIAB HG002) results. Means with 95\% confidence intervals over $n=10$ runs.}
  \label{tab:genomics_ont}
  \centering
  \resizebox{\textwidth}{!}{%
  \begin{tabular}{l c c c c}
    \toprule
    Method & Variant F1 & Taxa Acc. F1 & Recon. Loss & Inf. Time (ms/seq) \\
    \midrule
    Standard BPE & 0.795 $\pm$ 0.006 & 0.812 $\pm$ 0.007 & 0.388 $\pm$ 0.012 & 11.5 $\pm$ 0.3 \\
    SentencePiece & 0.801 $\pm$ 0.005 & 0.825 $\pm$ 0.006 & 0.371 $\pm$ 0.011 & 11.6 $\pm$ 0.4 \\
    WordPiece & 0.798 $\pm$ 0.006 & 0.819 $\pm$ 0.007 & 0.379 $\pm$ 0.013 & 11.5 $\pm$ 0.3 \\
    DNABERT-k (6-mer) & 0.823 $\pm$ 0.004 & 0.846 $\pm$ 0.005 & 0.352 $\pm$ 0.010 & 11.2 $\pm$ 0.3 \\
    \textbf{QA-BPE-seq (100\%)} & \textbf{0.864 $\pm$ 0.005} & \textbf{0.881 $\pm$ 0.004} & \textbf{0.305 $\pm$ 0.009} & \textbf{11.8 $\pm$ 0.4} \\
    \emph{QA-BPE-seq (70\%)} & 0.830 $\pm$ 0.005 & 0.845 $\pm$ 0.004 & 0.345 $\pm$ 0.009 & 11.9 $\pm$ 0.4 \\
    \emph{QA-BPE-seq (50\%)} & 0.795 $\pm$ 0.006 & 0.810 $\pm$ 0.005 & 0.380 $\pm$ 0.010 & 12.0 $\pm$ 0.4 \\
    \emph{QA-BPE-seq (30\%)} & 0.750 $\pm$ 0.006 & 0.760 $\pm$ 0.005 & 0.420 $\pm$ 0.011 & 12.1 $\pm$ 0.5 \\
  \bottomrule
  \end{tabular}
  }
\end{table}

NGS (UHGG) results:
\begin{table}[htbp]
  \caption{UHGG (NGS) results. Means with 95\% confidence intervals over $n=10$ runs.}
  \label{tab:genomics_uhgg}
  \centering
  \resizebox{\textwidth}{!}{%
  \begin{tabular}{l c c c c}
    \toprule
    Method & Variant F1 & Taxa Acc. F1 & Recon. Loss & Inf. Time (ms/seq) \\
    \midrule
    Standard BPE & 0.852 $\pm$ 0.003 & 0.881 $\pm$ 0.004 & 0.295 $\pm$ 0.008 & 9.8 $\pm$ 0.2 \\
    SentencePiece & 0.860 $\pm$ 0.003 & 0.893 $\pm$ 0.004 & 0.280 $\pm$ 0.007 & 9.9 $\pm$ 0.2 \\
    WordPiece & 0.855 $\pm$ 0.004 & 0.887 $\pm$ 0.005 & 0.286 $\pm$ 0.009 & 9.8 $\pm$ 0.3 \\
    DNABERT-k (6-mer) & 0.875 $\pm$ 0.002 & 0.908 $\pm$ 0.003 & 0.264 $\pm$ 0.006 & 9.5 $\pm$ 0.2 \\
    \textbf{QA-BPE-seq (100\%)} & \textbf{0.915 $\pm$ 0.003} & \textbf{0.935 $\pm$ 0.003} & \textbf{0.221 $\pm$ 0.005} & \textbf{10.1 $\pm$ 0.3} \\
    \emph{QA-BPE-seq (70\%)} & 0.878 $\pm$ 0.004 & 0.898 $\pm$ 0.004 & 0.250 $\pm$ 0.007 & 10.2 $\pm$ 0.3 \\
    \emph{QA-BPE-seq (50\%)} & 0.842 $\pm$ 0.005 & 0.860 $\pm$ 0.005 & 0.276 $\pm$ 0.008 & 10.3 $\pm$ 0.3 \\
    \emph{QA-BPE-seq (30\%)} & 0.790 $\pm$ 0.006 & 0.805 $\pm$ 0.006 & 0.310 $\pm$ 0.009 & 10.5 $\pm$ 0.4 \\
  \bottomrule
  \end{tabular}
  }
\end{table}

\subsection{Finance: High-Frequency Equities (AAPL)}
\label{app:finance_aapl}
\textbf{Dataset and Setup:} High-frequency LOB data for AAPL from LOBSTER.

\textbf{Results:} QAT-QF scales to equities, improving predictive and trading metrics over baselines.
\begin{table}[htbp]
  \caption{AAPL high-frequency results. Means with 95\% confidence intervals over $n=10$ runs.}
  \label{tab:finance_aapl}
  \centering
  \resizebox{\textwidth}{!}{%
  \begin{tabular}{l c c c c c}
    \toprule
    Method & Ret. Pred. (\%) & Vol. RMSE & Regime Acc. (\%) & Sharpe & Inf. Time (ms/seq) \\
    \midrule
    Standard BPE & 63.1 $\pm$ 0.6 & 0.0125 $\pm$ 0.0004 & 75.8 $\pm$ 0.7 & 1.41 $\pm$ 0.06 & 14.8 $\pm$ 0.4 \\
    SAX & 61.5 $\pm$ 0.7 & 0.0121 $\pm$ 0.0005 & 77.0 $\pm$ 0.6 & 1.38 $\pm$ 0.07 & 14.2 $\pm$ 0.3 \\
    BOSS & 64.0 $\pm$ 0.5 & 0.0113 $\pm$ 0.0004 & 80.1 $\pm$ 0.5 & 1.53 $\pm$ 0.06 & 14.5 $\pm$ 0.4 \\
    \textbf{QAT-QF} & \textbf{69.8 $\pm$ 0.5} & \textbf{0.0085 $\pm$ 0.0003} & \textbf{87.9 $\pm$ 0.4} & \textbf{1.81 $\pm$ 0.08} & \textbf{15.0 $\pm$ 0.5} \\
    \bottomrule
  \end{tabular}
  }
\end{table}

\subsection{Finance: Rolling-Window Temporal Robustness (BTC/USD, Full Year 2023)}
\label{app:finance_rolling_window}

To demonstrate temporal robustness beyond a single quarter, we extend our BTC/USD evaluation across all four quarters of 2023 using a strict rolling-window protocol. For each quarter, the vocabulary and downstream models are trained only on data preceding that quarter.

\begin{table}[htbp]
  \caption{Rolling-window out-of-sample Sharpe ratios for BTC/USD across 2023. Each quarter uses models trained strictly on preceding data. Means with 95\% confidence intervals over $n=10$ runs.}
  \label{tab:finance_rolling_window}
  \centering
  \begin{tabular}{l c c c l}
    \toprule
    Quarter & QAT-QF Sharpe & BPE Sharpe & $\Delta$ (\%) & Market Context \\
    \midrule
    Q1 2023 & 1.72 $\pm$ 0.07 & 1.32 $\pm$ 0.05 & +30.3 & Recovery phase \\
    Q2 2023 & 1.58 $\pm$ 0.09 & 1.21 $\pm$ 0.06 & +30.6 & Consolidation \\
    Q3 2023 & 1.45 $\pm$ 0.08 & 1.15 $\pm$ 0.07 & +26.1 & High volatility \\
    Q4 2023 & 1.68 $\pm$ 0.10 & 1.29 $\pm$ 0.06 & +30.2 & Bull market \\
    \midrule
    \textbf{Average} & \textbf{1.61} & \textbf{1.24} & \textbf{+29.8} & --- \\
    \bottomrule
  \end{tabular}
\end{table}

\textbf{Key Observations:} (i) QAT-QF maintains consistent improvements (+26--31\%) across all market regimes. (ii) Q3 2023 exhibited elevated volatility (VIX-equivalent spike); QAT-QF gains persist (+26.1\%), demonstrating cross-regime robustness. (iii) The consistency across four quarters with varying market conditions validates generalization beyond a single test period.

\section{Domain-Specific Instantiations}
\label{sec:domain_instantiations_appendix}

We now detail the instantiation of the QA-Token framework for three distinct domains: genomic sequencing, social media text, and quantitative finance. Detailed pseudocode algorithms for each domain are provided in Section~\ref{app:detailed_algos}.

\subsection{Genomics (QA-BPE-seq)}
\label{sec:genomics}

\textbf{Context:} This instantiation targets the analysis of DNA or RNA sequencing reads, which are often affected by base-calling errors, for applications such as genetic variant calling, taxonomic classification, or sequence modeling.
\textbf{Atomic Elements \& Quality:} The base alphabet is $\Sigma = \{\text{A, C, G, T/U, N}\}$. The primary quality information for each atomic base $s_i$ comes from Phred scores $Q_{\text{phred},i}$. The error probability is $P_{\text{error}}(i) = 10^{-Q_{\text{phred},i}/10}$, leading to an atomic quality score $q_i = 1 - P_{\text{error}}(i)$. To model read end quality degradation, for a base at position $i$ (0-indexed) in a read of length $L$, the position-adjusted quality is:
\begin{equation}
\label{eq:genomic_pos_decay_appendix} 
q'_i = q_i \cdot \exp\left(-\beta_{\text{pos}} \cdot \frac{|i - (L-1)/2|}{(L-1)/2 + \epsilon_{len}}\right)
\end{equation}
where $\beta_{\text{pos}} \ge 0$ is a learnable parameter in $\theta_{\text{adapt}}$.
\textbf{Token Quality ($q_t$):} For a token $t=s_1...s_{|t|}$, we use the geometric mean of the position-adjusted atomic qualities to compute its aggregated scalar quality: $q_t = (\prod_{j=1}^{|t|} q'_{s_j})^{1/|t|}$. The geometric mean is sensitive to low-quality bases. This $q_t$ is used for the constituent qualities $q_a$ and $q_b$ in the merge score (Eq. \ref{eq:qa_merge_score_pmi}).
\textbf{Merge Score ($w_{ab}$):} The score is calculated using Equation \ref{eq:qa_merge_score_pmi}, with the geometric mean qualities $q_a, q_b$, the learnable parameter $\alpha \in \theta_{\text{adapt}}$, and $\psi(a,b)=1$.
\textbf{Reward Components ($R_{\text{genomic}}$):} The overall reward (Eq. \ref{eq:overall_reward_structure}) uses weights $\lambda_j \in \theta_{\text{adapt}}$. Specific raw components $R^{\text{raw}}$ include:
\begin{itemize}
    \item $R^{\text{raw}}_Q(a,b)$: Quality of the newly formed token $t_{ab}$. This is its geometric mean quality: $R^{\text{raw}}_Q(a,b) = q_{ab} = (\prod_{l=1}^{|a|+|b|} q'_{s_{ab,l}})^{1/(|a|+|b|)}$.
    \item $R^{\text{raw}}_I(a,b)$: Log-ratio of probabilities: $R^{\text{raw}}_I(a,b) = \log \frac{P(t_{ab})}{P(a)P(b) + \epsilon_p}$.
    \item $R^{\text{raw}}_C(a,b)$: Complexity penalty: $R^{\text{raw}}_C(a,b) = -|t_{ab}|$.
    \item $R^{\text{raw}}_{bio}$ (Optional): A domain-specific reward based on overlap with known genomic features (e.g., genes, regulatory elements from databases like dbSNP \cite{sherry2001dbsnp}).
\end{itemize}
Raw components are normalized using the adaptive EMA method (Eq. \ref{eq:reward_normalization_revised}).
\textbf{Adaptive Parameters ($\theta_{\text{adapt}}$):} Includes $\alpha$, $\beta_{\text{pos}}$, reward weights $\lambda_j$, and potentially parameters for soft frequency/quality gating.
\textbf{Algorithm:} The two-stage learning process (Section \ref{app:sequential_learning}) is applied. An RL policy is optimized (Algorithm \ref{alg:stage1_rl}), and then the adaptive parameters $\theta_{adapt}$ are learned (Algorithm \ref{alg:stage2_adaptive}) by optimizing a downstream task objective.

\subsection{Quantitative Finance (QAT-QF)}
\label{sec:finance_appendix}

\textbf{Context:} This instantiation focuses on analyzing noisy, non-stationary high-frequency financial data for tasks like forecasting price movements or developing trading strategies.
\textbf{Atomic Elements \& Quality:} Atomic elements $s_i$ are discretized events from high-frequency data (e.g., fixed-length segments of LOB events). Each atomic element $s_i$ is assigned a scalar quality score $q_i = \sum_k w_k q_{k,i}$, where $q_{k,i}$ are normalized quality components (e.g., $\qsnr, \qliq$) and $w_k$ are learnable weights in $\theta_{\text{adapt}}$.
\textbf{Token Quality ($q_t$):} For a token $t$ composed of atomic elements $\{s_i\}_{i \in t}$, the aggregated scalar quality is the arithmetic mean: $q_t = \frac{1}{|t|} \sum_{i \in t} q_i$. This is used for $q_a, q_b$ in the merge score.
\textbf{Merge Score ($w_{ab}$):} Calculated using Equation \ref{eq:qa_merge_score_pmi}, with $q_a, q_b$, learnable $\alpha \in \theta_{\text{adapt}}$, and $\psi(a,b)=1$.
\textbf{Market Regimes:} An identified regime indicator can condition the RL policy and reward components.
\textbf{Reward Components ($R_{\text{finance}}$):} Raw components $R^{\text{raw}}$ are normalized using the adaptive EMA method.
\begin{itemize}
    \item $R^{\text{raw}}_Q(a,b)$: Length-weighted average quality: $R^{\text{raw}}_Q(a,b) = \frac{|a| q_a + |b| q_b}{|a| + |b|}$.
    \item $R^{\text{raw}}_I(a,b)$: Information reward blended across regimes: $R^{\text{raw}}_I(a,b) = \gamma_{\text{regime}} \cdot I_{\text{normal}}(a,b) + (1-\gamma_{\text{regime}}) \cdot I_{\text{stress}}(a,b)$, where $I_{\text{regime}} = \log \frac{P(t_{ab} | \text{regime})}{P(a|\text{regime})P(b|\text{regime}) + \epsilon_p}$. The blending factor $\gamma_{\text{regime}}$ is a learnable parameter in $\theta_{\text{adapt}}$.
   \item $R^{\text{raw}}_P(a,b)$: Predictive Power (Mutual Information with future returns):
      \begin{equation} \label{eq:reward_rp_finance_revised_main_appendix} 
      R^{\text{raw}}_P(a,b) = \frac{\MI(t_{ab}, \text{Disc}(R_{\tau}))}{\text{NormFactor}_{MI} + \epsilon_{MI}}
      \end{equation}
      $\text{Disc}(R_{\tau})$ is discretized future return. $\text{NormFactor}_{MI}$ is an adaptive normalization factor.
    \item $R^{\text{raw}}_C(a,b)$: Complexity penalty with volatility scaling:
        \begin{equation} \label{eq:reward_rc_finance_vol_scaled_main_appendix} 
            R^{\text{raw}}_C(a,b) = - \left(|t_{ab}| \cdot \log(|V_k|+1) \cdot \text{VolScale}\right)
        \end{equation}
      where $\text{VolScale}$ depends on a learnable parameter $\beta_{\text{vol}} \in \theta_{\text{adapt}}$.
\end{itemize}
\textbf{Adaptive Parameters ($\theta_{\text{adapt}}$):} Includes $\alpha$, quality component weights $w_k$, $\beta_{\text{vol}}$, $\gamma_{\text{regime}}$, and reward weights $\lambda_j$.
\textbf{Algorithm:} The two-stage learning process is applied as in the genomics domain.

\subsection{Social Media Text (QA-BPE-nlp)}
\label{sec:social_media_appendix}

\textbf{Context:} This instantiation addresses the challenges of processing noisy user-generated text for tasks such as sentiment analysis or NER.
\textbf{Atomic Elements \& Quality:} The base alphabet consists of characters. Quality for a token $t$ is modeled using a multi-dimensional vector $\vect{q}_t = (\qorth(t), \qsem(t), \dots)$ detailed in Appendix \ref{app:social_quality_full}. The aggregated scalar quality is $q_t = \sum_{j} w_j \vect{q}_{t,j}$, where $w_j \ge 0$ are learnable weights in $\theta_{\text{adapt}}$.
\textbf{Token Quality ($q_t$):} The aggregated score $q_t$ is used for $q_a, q_b$ in the merge score.
\textbf{Merge Score ($w_{ab}$):} Calculated using Equation \ref{eq:qa_merge_score_pmi} with $q_a, q_b$, learnable $\alpha \in \theta_{\text{adapt}}$, and a semantic compatibility factor $\psi(a,b)$:
\begin{equation} \label{eq:semantic_compat_psi_appendix} 
\psi(a,b) = \exp(\beta_{sem} \cdot \text{cosine}(\vect{v}_a, \vect{v}_b))
\end{equation}
where $\vect{v}_a, \vect{v}_b$ are pre-trained embeddings and $\beta_{sem} \ge 0$ is a learnable parameter in $\theta_{\text{adapt}}$.
\textbf{Noise Models:} Probabilistic models $P(t'|t)$ capturing likely variations inform the noise robustness reward $R_N$.
\textbf{Reward Components ($R_{\text{social}}$):} Raw components are normalized before being weighted by $\lambda_j$.
\begin{itemize}
    \item $R^{\text{raw}}_Q(a,b)$: Blend of compositional and direct quality: $R^{\text{raw}}_Q(a,b) = \omega \frac{|a| q_a + |b| q_b}{|a| + |b|} + (1-\omega) q_{ab}$, with learnable blending weight $\omega \in [0,1]$.
    \item $R^{\text{raw}}_S(a,b)$: Semantic Coherence: $\text{PMI}(a,b) \cdot \text{cosine\_similarity}(\vect{v}_a, \vect{v}_b)$.
    \item $R^{\text{raw}}_N(a,b)$: Noise Robustness: $R_{\text{noise}}(t_{ab}) - \frac{|a| R_{\text{noise}}(a) + |b| R_{\text{noise}}(b)}{|a|+|b|}$, based on the noise model.
    \item $R^{\text{raw}}_C(a,b)$: Complexity penalty: $R^{\text{raw}}_C(a,b) = - |t_{ab}|$.
    \item $R^{\text{raw}}_V(a,b)$: Vocabulary Efficiency: $\frac{\log (1+f(t_{ab}))}{|t_{ab}|}$.
\end{itemize}
\textbf{Adaptive Parameters ($\theta_{\text{adapt}}$):} Includes $\alpha, \beta_{sem}$, quality dimension weights $w_j$, reward weights $\lambda_j$, and the blending weight $\omega$.
\textbf{Algorithm:} The two-stage learning process is applied as in the other domains.

\subsection{Financial Experimental Methodology Details}
\label{app:finance_methodology}
All trading simulations and return prediction evaluations for the quantitative finance domain (Section \ref{sec:finance_experiment_results}) were conducted with rigorous attention to backtesting best practices to ensure the validity of results and avoid common pitfalls.
\begin{itemize}
    \item \textbf{Walk-Forward Validation:} A strict walk-forward validation scheme was employed. The dataset was divided into chronological segments. For each segment $k$, the model (including the QA-Token vocabulary construction and downstream predictive/trading model) was trained on data up to the start of segment $k$, validated on segment $k-1$ (or a dedicated validation portion of the training data), and then tested out-of-sample only on segment $k$. The training window was then rolled forward to include segment $k$ for training before testing on segment $k+1$. This process ensures that the model is always tested on data not seen during its training or hyperparameter tuning phases for that specific test period.
    \item \textbf{Lookahead Bias Prevention:} Extreme care was taken to prevent any form of lookahead bias. All features, quality scores, token definitions, and trading decisions at any time $t$ were based strictly on information available up to and including time $t-1$. Future return labels ($R_{t+\tau}$) used for training predictive models or as part of the $R_P$ reward component were sourced from periods strictly after the information used for input features and token construction.
    \item \textbf{Test Set and Data Splitting:} The overall dataset (BTC/USD LOB data, Q1 2023) was split chronologically: 70\% for the initial training pool, 15\% for validation (used for hyperparameter tuning of downstream models and early stopping), and the final 15\% (approximately 2 weeks of 1-minute data) as the ultimate out-of-sample test set for reporting final performance metrics like Sharpe Ratio and prediction accuracy. This test set was held out and used only once after all model development and tuning.
    \item \textbf{Transaction Costs:} A realistic transaction cost of 5 basis points (0.05\%) per trade was applied to simulate market friction. This cost was deducted for both buying and selling actions in the trading simulations.
    \item \textbf{PPO Trading Agent Details:} The PPO-based trading agent used a 2-layer MLP policy network and a separate 2-layer MLP value network, each with 128 hidden units and ReLU activation functions. The input to these networks consisted of a sequence of recent token embeddings (generated by QAT-QF or baseline tokenizers from the LOB data) and the agent's current market position (long, short, or flat). The agent's action space was discrete (buy, sell, hold). The reward function for the PPO agent was the realized profit and loss (PnL) from its trades over a short horizon, adjusted for transaction costs. Standard PPO hyperparameters were used, including a clipping parameter $\epsilon=0.2$, GAE $\lambda=0.95$, and an entropy bonus for exploration. The PPO agent was re-trained periodically within the walk-forward scheme.
    \item \textbf{Details for $R^{\text{raw}}_P$ Reward (Eq. \ref{eq:reward_rp_finance_revised_main_appendix}):} The parameter $M_{MI}$ (window for $\text{NormFactor}_{MI}$) was set to 1000 merge steps in our experiments. The future return $R_{\tau}$ was for $\tau=5$ minutes ahead and discretized into 3 bins (negative, neutral, positive) based on empirical quantiles from the training data.
\end{itemize}

\subsection{Detailed Reward Components} 
\label{app:reward_details_appendix}
The general structure of the reward $R(a,b)$ for merging tokens $a$ and $b$ into $t_{merged}=a||b$ is:
$R(a,b) = \sum_{j} \lambda_j \hat{R}_j(a,b)$, where $\hat{R}_j$ are adaptively normalized components (see Section \ref{sec:reward_function}). The weights $\lambda_j \ge 0$ (parameterized via $\boldsymbol{\beta}_{\lambda_j}$ and softmax) are part of $\theta_{adapt}$.

\subsection{Common Components}
\begin{itemize}
    \item $R^{\text{raw}}_Q(a,b)$: Raw Quality reward. This component incentivizes merges that result in high-quality tokens. A common formulation for the raw component is the length-weighted arithmetic mean of the qualities of the constituent tokens $a$ and $b$:
        \begin{equation}
           R^{\text{raw}}_Q(a,b) = \frac{|a| q_a + |b| q_b}{|a| + |b|}
        \end{equation}
       where $q_a, q_b$ are the quality scores of tokens $a,b$ respectively, and $|a|, |b|$ are their lengths.
       For Social Media, a blended approach might be used for $R^{\text{raw}}_Q(a,b)$:
        \begin{equation}
        \label{eq:reward_rq_social_blend_app}
           R^{\text{raw}}_Q(a,b) = \omega \left(\frac{|a| Q_{agg}(a) + |b| Q_{agg}(b)}{|a| + |b|}\right) + (1-\omega) Q_{agg}(a||b)
        \end{equation}
       where $Q_{agg}(t)$ is the aggregate quality score for token $t$ (from Section \ref{app:social_quality_full}) and $\omega \in [0,1]$ is a learnable blending weight in $\theta_{adapt}$.
    \item $R^{\text{raw}}_I(a,b)$: Raw Information gain. This rewards merges that are statistically significant. A common formulation:
        \begin{equation}
           R^{\text{raw}}_I(a,b) = \log \frac{f(t_{merged})}{f(a)f(b) + \epsilon_f}
        \end{equation}
       where $f(\cdot)$ denotes frequency and $\epsilon_f > 0$ (e.g., $10^{-8}$) is for stability.
       For Finance, this can be blended based on market regime: $R^{\text{raw}}_I(a,b) = \gamma_{\text{regime}} I_{\text{normal}} + (1-\gamma_{\text{regime}}) I_{\text{stress}}$, where $I_{\text{regime}} = \log \frac{f(t_{merged}|M=\text{regime})}{f(a|M=\text{regime})f(b|M=\text{regime}) + \epsilon_f}$. $\gamma_{\text{regime}} \in [0,1]$ is a learnable parameter in $\theta_{adapt}$.
    \item $R^{\text{raw}}_C(a,b)$: Raw Complexity penalty. This penalizes overly complex vocabularies and is typically negative. A common formulation:
    \begin{equation} 
       R^{\text{raw}}_C(a,b) = - \text{len}(t_{merged}) \cdot \log(|V_t|+1) \cdot [\text{ScalingFactor}] 
    \end{equation}
    For Finance, the $\text{ScalingFactor}$ can incorporate market volatility using $\beta_{vol} \in \theta_{adapt}$ as per Equation \ref{eq:reward_rc_finance_vol_scaled_main_appendix}.
\end{itemize}

\subsection{Domain-Specific Components}
\begin{itemize}
    \item \textbf{Genomics:} $R^{\text{raw}}_{bio}(a,b) = \text{Score}_{\text{Overlap}}(t_{merged}, \text{KnownBiologicalFeatures})$. A positive reward if $t_{merged}$ significantly overlaps with known biological features (e.g., genes from GENCODE \cite{harrow2012gencode}, variants from dbSNP \cite{sherry2001dbsnp}). The overlap score was calculated as the Jaccard index between the character span of the merged token $t_{merged}$ and the character span of known genomic features. A higher Jaccard index, indicating greater overlap, results in a higher reward.
    \item \textbf{Finance:}
        \begin{itemize}
            \item $R^{\text{raw}}_P(a,b)$: Predictive Power:
                \begin{equation} \label{eq:app_reward_rp_finance_revised}
                 R^{\text{raw}}_P(a,b) = \frac{\text{MI}(t_{\text{merged}}; \text{Disc}(R_{\tau}))}{\text{NormFactor}_{MI} + \epsilon_{MI}}
                \end{equation}
            Uses Mutual Information (MI) $\text{MI}(X; Y) = \sum_{x \in X, y \in Y} p(x, y) \log \frac{p(x, y)}{p(x) p(y)}$. $R_{\tau}$ is the discretized future return (e.g., 3 bins for $\tau=5$ min based on empirical quantiles from the training data). $\text{NormFactor}_{MI}$ is the adaptively calculated 95th percentile of MI values from candidate pairs over the last $M_{MI}$ (e.g., 1000) merge steps within the current RL episode. $\epsilon_{MI} > 0$ (e.g., $10^{-8}$). While this adaptive normalization of MI introduces a degree of non-stationarity to the $R_P$ reward component within an RL episode, it was found that standard PPO training handled this adequately. The responsiveness of the reward to the informativeness of newly forming tokens was deemed beneficial, and the $M_{MI}$ window provides some smoothing. Alternatives using a fixed normalization factor (e.g., derived from an initial global scan of MI values) were found to be less responsive to the changing characteristics of tokens as the vocabulary evolved during the RL episode.
        \end{itemize}
    \item \textbf{Social Media:}
        \begin{itemize}
            \item $R^{\text{raw}}_S(a,b)$: Semantic Coherence: $\text{PMI}(a,b) \cdot \text{cosine\_similarity}(\vect{v}_a, \vect{v}_b)$. Pre-trained embeddings $\vect{v}_a, \vect{v}_b$ (e.g., fastText \cite{bojanowski2017enriching}).
            \item $R^{\text{raw}}_N(a,b)$: Noise Robustness: \begin{equation} \label{eq:reward_rn_social_app_rerun}
            \left( R_{\text{noise}}(t_{\text{merged}}) - \frac{|a| R_{\text{noise}}(a) + |b| R_{\text{noise}}(b)}{|a|+|b|} \right),
            \end{equation}
            where $R_{noise}(t) = 1 - \mathbb{E}_{t' \sim P(\cdot|t)}[\text{normalized\_edit\_distance}(t, t')]$ based on noise model $P(t'|t)$ (Appendix \ref{app:social_noise_rerun}).
            \item $R^{\text{raw}}_V(a,b)$: Vocabulary Efficiency: $\frac{\log (1+f(t_{\text{merged}}))}{|t_{\text{merged}}|}$.
        \end{itemize}
\end{itemize}

\subsection{Further Details on Social Media Noise Models}
\label{app:social_noise_rerun}
Formalizing linguistic noise for social media text involves defining probabilistic transformations $P(t'|t)$ from a canonical form $t$ to an observed variant $t'$ \cite{han2013lexical}. These models inform the noise robustness measure $R_{\text{noise}}(t)$ (defined in Appendix \ref{app:reward_details_appendix}, Eq. \ref{eq:reward_rn_social_app_rerun}). $P(t'|t)$ was constructed based on heuristic rules derived from commonly observed error patterns in social media text and principles outlined in existing literature on noisy text processing. The specific noise types modeled include:
\begin{itemize}
    \item \textbf{Character-Level Noise:}
        \begin{itemize}
            \item \textbf{Repetition:} Probability of a character $c$ being realized as $c^n$ (a sequence of $n$ identical characters). For $n \ge 1$, this can be modeled using a geometric-like distribution. If $p_{stop}$ is the probability of not repeating an additional time:
            $P(c \to c^n) = (1 - p_{stop})^{n-1} \cdot p_{stop}$. The parameter $p_{stop}$ was set empirically to $0.5$, allowing for moderate repetitions common in social media (e.g., "soooo goood").
            \item \textbf{Substitution:} $P(c_i \to c_j) = M_{\text{sub}}[c_i, c_j]$, where $M_{\text{sub}}$ is a confusion matrix. $M_{\text{sub}}$ was constructed heuristically, assigning higher probabilities to substitutions between characters that are adjacent on a standard QWERTY keyboard layout and to common phonetic misspellings (e.g., 'c' vs 'k'). Off-diagonal probabilities were generally small.
            \item \textbf{Omission (Deletion):} $P(c \to \epsilon) = p_{\text{del}}(c)$ is the character-specific deletion probability. This was set to a small uniform value (e.g., $p_{\text{del}}(c) = 0.01$) for all characters, reflecting occasional accidental omissions.
        \end{itemize}
    \item \textbf{Word-Level Noise:}
        \begin{itemize}
            \item \textbf{Abbreviation:} $P(w \to \text{abbr}(w)) = f_{\text{abbr}}(w \to \text{abbr}(w))$. This probability was derived from a compiled dictionary of common internet slang and abbreviations sourced from publicly available online linguistic resources. For words in this dictionary, $f_{\text{abbr}}$ was set to a moderate value (e.g., 0.3), and zero otherwise.
            \item \textbf{Phonetic Substitution:} $P(w_1 \to w_2) \propto \exp(\lambda_{\text{phon}} \cdot \text{phon\_sim}(w_1, w_2))$. The phonetic similarity $\text{phon\_sim}(w_1, w_2)$ was computed using the Double Metaphone algorithm. The scaling factor $\lambda_{\text{phon}}$ was set to $1.0$.
        \end{itemize}
    \item \textbf{Discourse-Level Noise (examples):} For the experiments reported in this paper, the noise modeling primarily focused on character-level and word-level phenomena, as these are highly prevalent and tractable to model. Explicit modeling of discourse-level noise, such as code-switching or complex punctuation patterns, was considered beyond the scope of the current noise component $R_N$, though it represents an interesting avenue for future work.
\end{itemize}
These probabilistic models are used to define $P(t'|t)$, which is then used to compute the expected distance in the noise robustness measure $R_{\text{noise}}(t) = 1 - \mathbb{E}_{t' \sim P(\cdot|t)}[\text{dist}_{\text{norm}}(t, t')]$. The normalized distance metric $\text{dist}_{\text{norm}}(t, t')$ used was the Levenshtein distance divided by the maximum length of the two strings $t$ and $t'$.

\clearpage
\subsection{Domain-Specific Algorithms}
\label{app:detailed_algos}

This section provides detailed pseudocode for the QA-Token framework as instantiated for Quantitative Finance, Genomics, and Social Media. These algorithms complement the domain instantiations described in Section~\ref{sec:domain_instantiations_appendix}, illustrating the core mechanics within each domain.

\clearpage
\subsubsection{Quantitative Finance (QAT-QF)}

\begin{algorithm}[H]
\caption{Quality-Aware Tokenization Merge Score and Reward Calculation (QAT-TOKEN - Finance)}
\label{alg:A5_finance_mergestep}
\begin{algorithmic}[1]
\REQUIRE Current vocabulary $V_t$, corpus statistics (frequencies $f(\cdot)$), current adaptive parameters $\theta_{adapt} = \{\alpha, \beta_{vol}, \gamma_{\text{regime}}, f_{min}, \delta_{\text{gate}}, w_k \text{ (param by } \boldsymbol{\beta}_w)\}$, reward weights $\lambda_Q, \lambda_I, \lambda_P, \lambda_C$.
\ENSURE For each candidate merge pair $(a,b)$: quality-aware merge score $w_{ab}$, total immediate reward $R(a,b)$.

\STATE Identify candidate merge pairs $C_t$ from corpus (e.g., from priority queue $PQ_t$).
\FOR{each adjacent token pair $(a, b) \in C_t$}
    \STATE Let $t_{merged} \leftarrow a||b$.
    \STATE Retrieve/compute frequencies $f(a)$, $f(b)$, and $f(a,b)$.
    \STATE Retrieve/compute average qualities $q_a, q_b$ (using $Q[i]$ from Section \ref{app:finance_quality}, aggregated for tokens $a, b$, and weights $w_k = \text{softmax}(\boldsymbol{\beta}_w)_k$).
    \STATE \textbf{Quality-Aware Merge Score ($w_{ab}$):}
           $w_{ab} \leftarrow \frac{f(a, b)}{f(a) \cdot f(b) + \epsilon_f} \cdot \left( \left(\frac{q_a + q_b}{2} + \epsilon_Q \right)^\alpha \right) \cdot \psi(a,b)$ \hfill {\scriptsize // $\psi(a,b)=1$ for finance}
    \STATE \textbf{Frequency Gating (Optional):} \hfill {\scriptsize // Frequency gating not used in final experiments}
           $\tilde{f}(a,b) \leftarrow f(a,b)$.

    \STATE $R^{\text{raw}}_Q(a,b) \leftarrow \frac{|a|\cdot q_a + |b|\cdot q_b}{|a| + |b|}$.
    \STATE Estimate $I_{normal}, I_{stress}$ based on regime-conditioned $\tilde{f}(a,b)$.
           $R^{\text{raw}}_I(a,b) \leftarrow \gamma_{\text{regime}} \cdot I_{normal} + (1 - \gamma_{\text{regime}}) \cdot I_{stress}$.
    \STATE $MI_{val} \leftarrow \text{MI}(t_{merged}; \text{Disc}(R_{\tau}))$.
            $R^{\text{raw}}_P(a,b) \leftarrow \frac{MI_{val}}{\text{NormFactor}_{MI} + \epsilon_{MI}}$ (NormFactor$_{MI}$ from Section \ref{sec:finance_appendix}).
    \STATE $\sigma_{curr}, \sigma_{hist} \leftarrow \text{GetVolatility}()$; 
            $VolScaling \leftarrow (1 + \max(0, (\sigma_{curr} - \sigma_{hist})/(\sigma_{hist} + \epsilon_{\text{vol}})))^{\beta_{vol}}$
    \STATE $R^{\text{raw}}_C(a,b) \leftarrow - |t_{merged}| \cdot \log (|V_t|+1) \cdot VolScaling$
    \STATE Normalize raw rewards: $\hat{R}_j(a,b) \leftarrow \text{AdaptiveNormalize}(R^{\text{raw}}_j(a,b))$ using Eqs. \ref{eq:reward_normalization_revised}, \ref{eq:ema_mean_update}, and \ref{eq:ema_var_update}.
    \STATE \textbf{Total Immediate Reward ($R(a,b)$):} $R(a, b) \leftarrow \sum_j \lambda_j \hat{R}_j(a,b)$.
    \STATE Store $w_{ab}$, $R(a,b)$, and other features for $(a,b)$ for policy input or selection.
\ENDFOR
\end{algorithmic}
\end{algorithm}

\begin{algorithm}[H]
\caption{Adaptive Parameter Learning for QA-TOKEN (Finance)}
\label{alg:A5_finance_adapt_learn}
\begin{algorithmic}[1]
\REQUIRE Training dataset $\mathcal{D}_{\text{train}}$;
         Downstream task loss function $L_{\text{task}}(\cdot, \cdot)$; Model params $\Theta_{\text{model}}$;
         Initial adaptive parameters $\theta_{adapt}$; Learning rate $\eta_{\theta}$; Epochs $E_{adapt}$; Gumbel-Softmax $\tau_g$.

\ENSURE Optimized adaptive parameters $\theta_{adapt}^*$.

\STATE Initialize $\theta_{adapt}$.
\FOR{each adaptation epoch $e=1, \dots, E_{adapt}$}
    \FOR{each mini-batch $B = \{(S_{\text{seq},i}, Y_{\text{target},i})\}$ from $\mathcal{D}_{\text{train}}$}
        \STATE $\mathcal{S}'_{batch} \leftarrow \textsc{SoftTokenizeGumbel}(B, \theta_{adapt}, \tau_g)$ \hfill {\scriptsize // Eq.~\ref{eq:gumbel_logits_composite}}
        \STATE $L_{\text{batch\_task}} \leftarrow L_{\text{task}}(\mathcal{S}'_{batch}, \{Y_{\text{target},i}\}, \Theta_{\text{model}})$
        \IF{regularization $L_{\text{reg}}(\theta_{adapt})$ is used}
            \STATE $L_{\text{total\_batch}} \leftarrow L_{\text{batch\_task}} + L_{\text{reg}}(\theta_{adapt})$
        \ELSE
            \STATE $L_{\text{total\_batch}} \leftarrow L_{\text{batch\_task}}$
        \ENDIF
        \STATE Compute gradients $\nabla_{\theta_{adapt}} L_{\text{total\_batch}}$. \hfill {\scriptsize // Uses Gumbel-Softmax trick}
        \STATE Update $\theta_{adapt} \leftarrow \theta_{adapt} - \eta_{\theta} \nabla_{\theta_{adapt}} L_{\text{total\_batch}}$.
        \STATE Apply constraints to $\theta_{adapt}$ (e.g. $\alpha \ge 0$, softmax for weights).
    \ENDFOR
    \STATE Anneal $\tau_g$.
\ENDFOR
\STATE \RETURN $\theta_{adapt}^* \leftarrow \theta_{adapt}$.
\end{algorithmic}
\end{algorithm}

\clearpage
\subsubsection{Genomics (QA-BPE-seq)}

\begin{algorithm}[H]
\caption{Reward Calculation for a Merge (Genomics)}
\label{alg:A5_genomics_reward}
\begin{algorithmic}[1]
\REQUIRE Tokens $a, b$ with qualities $q_a, q_b$; frequencies $f(\cdot)$; reward weights $\lambda_j$ from $\theta_{adapt}$. For genomics, $q_a, q_b$ represent geometric mean qualities of constituent tokens.
\ENSURE Raw rewards $R^{\text{raw}}_j(a, b)$ for merging $a$ and $b$.
\STATE $t_{merged} \leftarrow a||b$
\STATE $R^{\text{raw}}_Q(a, b) \leftarrow (\prod_{l=1}^{|t_{merged}|} q'_{s_{merged,l}})^{1/|t_{merged}|}$. \hfill {\scriptsize // Geometric mean quality} 
\STATE $R^{\text{raw}}_I(a, b) \leftarrow \log \frac{f(t_{merged})}{f(a) \cdot f(b) + \epsilon_f}$.
\STATE $R^{\text{raw}}_C(a, b) \leftarrow - \text{len}(t_{merged})$.
\IF{Biological Reward is used}
    \STATE $OverlapScore \leftarrow \text{ComputeOverlapScore}(t_{merged}, \text{KnownBiologicalFeatures})$.
    \STATE $R^{\text{raw}}_{bio}(a,b) \leftarrow OverlapScore$.
\ENDIF
\STATE \RETURN All relevant $R^{\text{raw}}_j(a, b)$. (Normalized rewards $\hat{R}_j$ computed later using Eq. \ref{eq:reward_normalization_revised}).
\end{algorithmic}
\end{algorithm}
The size of the RL agent's action space, $K_{PQ}$ (the number of top pairs from the priority queue considered at each step), was set to $K_{PQ}=50$. This value was chosen based on preliminary experiments indicating it offered a good trade-off between exposing the RL agent to a diverse set of high-potential merges and maintaining a manageable action space size for efficient policy learning. Values explored in the range $[20, 100]$ showed that performance was relatively robust for $K_{PQ} \in [40, 60]$, with smaller values risking premature pruning of potentially beneficial long-term merges and larger values not yielding significant gains while increasing computational cost per policy step. The chosen value of 50 balanced these considerations effectively across domains.

\begin{itemize}
    \item \textbf{RL (PPO specifics) - Stage 1:}
        \begin{itemize}
            \item Policy/Value MLP Architecture: 2-3 hidden layers, each with 128-512 units. Activation functions: ReLU or Tanh.
            \item PPO $\epsilon_{\text{clip}}$ (clipping parameter): $[0.1, 0.3]$, typically $0.2$.
            \item GAE $\lambda_{\text{GAE}}$ (Generalized Advantage Estimation lambda): $[0.9, 0.99]$, typically $0.95$.
            \item Discount factor $\gamma_{RL}$: $[0.95, 1.0]$, often $0.99$ for non-terminating tasks or long horizons.
            \item Optimizer: Adam \cite{kingma2014adam}. Learning rates $\eta_{\pi}$ (policy), $\eta_{v}$ (value): $[1 \times 10^{-5}, 5 \times 10^{-4}]$.
            \item Entropy bonus coefficient $c_S$ (or $c_2$): $[0.0, 0.05]$, typically $0.01$.
            \item Value function loss coefficient $c_{VF}$ (or $c_1$): $[0.25, 1.0]$, typically $0.5$.
            \item Batch size (number of transitions per update): $[128, 4096]$ or more, depending on data/memory.
            \item PPO epochs per update (passes over collected data): $[3, 20]$, typically $4-10$.
            \item Number of actors / parallel environments: $1$ to $N_{cores}$ or $N_{GPUs}$.
        \end{itemize}
    \item \textbf{Adaptive Reward Normalization (Section \ref{sec:reward_function}):}
        \begin{itemize}
            \item EMA momentum $\beta_{\text{norm}}$: $[10^{-3}, 10^{-1}]$, typically $10^{-2}$.
            \item $\epsilon_R$ (stability constant): Typically $10^{-8}$.
        \end{itemize}
    \item \textbf{Reward Weights ($\boldsymbol{\beta}_{\lambda_j}$ leading to $\lambda_j$):} Initial values for $\boldsymbol{\beta}_{\lambda_j}$ in $\theta_{\text{adapt}}^{(0)}$ for Stage 1 can be zero or small random numbers (resulting in uniform or near-uniform $\lambda_j$). These are then optimized in Stage 2.
    \item \textbf{Adaptive Learning Parameters ($\theta_{\text{adapt}}$ from Algorithm \ref{alg:stage2_adaptive}) - Stage 2:}
        \begin{itemize}
            \item Optimizer: Adam. Learning rate $\eta_\theta \in [1 \times 10^{-6}, 1 \times 10^{-4}]$.
            \item Gumbel-Softmax temperature $\tau$: Annealed from an initial high value (e.g., $1.0 - 5.0$) down to a small positive value (e.g., $0.1 - 0.5$) over training. Schedule: e.g., exponential decay $\tau_t = \max(\tau_{final}, \tau_0 \cdot d^t)$.
            \item Logit composite function (Eq. \ref{eq:gumbel_logits_composite}): $\text{Norm}_{\ell}$ is typically identity or batch normalization if logits vary widely.
        \end{itemize}
    \item \textbf{Domain-Specific Adaptive Parameters and Quality Metric Settings:}
        \begin{itemize}
            \item \textbf{Genomics Specific:}
                \begin{itemize}
                    \item $\beta_{\text{pos}}$ (positional quality decay): Learned. Initial range explored $[0.001, 0.1]$.
                    \item $\epsilon_{len}$ (Eq. \ref{eq:genomic_pos_decay_appendix}): $10^{-6}$.
                \end{itemize}
            \item \textbf{Social Media Specific:}
                \begin{itemize}
                    \item $\boldsymbol{\beta}_{w_j}$ (for $Q_{agg}$ weights $w_j$): Learned.
                    \item $\beta_{sem}$ (semantic compatibility, Eq. \ref{eq:semantic_compat_psi_appendix}): Learned. Initial range $[0.1, 5.0]$.
                    \item $\omega$ (blending weight for $R^{\text{raw}}_Q$, Eq. \ref{eq:reward_rq_social_blend_app}): Learned. Parameterized via sigmoid of an unconstrained variable.
                    \item Note: The direct downstream loss component $R_D$ was not used in the RL reward for the final reported Social Media NLP experiments (Section \ref{sec:social_media_appendix}).
                \end{itemize}
            \item \textbf{Finance Specific:}
                \begin{itemize}
                    \item $\boldsymbol{\beta}_{w_k}$ (for $Q[i]$ weights $w_k$): Learned.
                    \item $\beta_{vol}$ (volatility scaling in $R_C$): Learned. Initial range $[0.0, 2.0]$.
                    \item $\gamma_{\text{regime}}$ (regime blending for $R_I$): Learned. Parameterized via sigmoid of an unconstrained variable.
                    \item $M_{MI}$ (window for $\text{NormFactor}_{MI}$): e.g., 1000 steps.
                    \item Note: Soft frequency gating was disabled in the final configuration for Quantitative Finance experiments (Section \ref{sec:finance_experiment_results}).
                \end{itemize}
        \end{itemize}
    \item \textbf{General QA-Token Parameters:}
        \begin{itemize}
            \item $\epsilon_f, \epsilon_Q$ (Eq. \ref{eq:qa_merge_score_pmi}): $10^{-8}$.
            \item $\alpha$ (quality sensitivity in $w_{ab}$): Learned. Initial range $[0.0, 5.0]$.
        \end{itemize}
    \item \textbf{Vocabulary Settings:}
        \begin{itemize}
            \item Target vocabulary size $V_{\text{target}}$: Typically $[16000, 64000]$.
        \end{itemize}
\end{itemize}
\subsubsection{Converged Adaptive Parameters}
Table \ref{tab:converged_params_illustrative} provides mean converged values ($\pm$ standard deviation over three experimental runs) for key adaptive parameters in $\theta_{adapt}$ for each domain. The adaptive learning process tunes these parameters to optimize downstream task performance, leading to domain-specific configurations.
\begin{table}[H]
  \caption{Converged Adaptive Parameters ($\pm$ Std Dev).}
  \label{tab:converged_params_illustrative}
  \centering
  \begin{threeparttable}
  \footnotesize
  \begin{tabular}{lccc}
    \toprule
    Parameter & Genomics & Finance & Social Media \\
    \midrule
    $\alpha$ (Quality Sensitivity) & $0.72 \pm 0.03$ & $0.95 \pm 0.03$ & $1.15 \pm 0.05$ \\
    $\lambda_Q$ (Quality Reward Weight) & $0.35 \pm 0.03$ & $0.30 \pm 0.02$ & $0.33 \pm 0.03$ \\
    $\lambda_I$ (Information Reward Weight) & $0.25 \pm 0.02$ & $0.20 \pm 0.02$ & $0.22 \pm 0.02$ \\
    $\lambda_C$ (Complexity Reward Weight) & $0.15 \pm 0.01$ & $0.10 \pm 0.01$ & $0.12 \pm 0.01$ \\
    $\beta_{\text{pos}}$ (Genomics Positional Decay) & $0.014 \pm 0.002$ & N/A & N/A \\
    $\beta_{\text{vol}}$ (Finance Volatility Scaling) & N/A & $0.50 \pm 0.05$ & N/A \\
    $\gamma_{\text{regime}}$ (Finance Regime Blending) & N/A & $0.60 \pm 0.04$ & N/A \\
    $w_{\text{orth}}$ (NLP Orthographic Weight) & N/A & N/A & $0.32 \pm 0.03$ \\
    $w_{\text{sem}}$ (NLP Semantic Weight) & N/A & N/A & $0.28 \pm 0.02$ \\
    $w_{\text{liq}}$ (Finance Liquidity Weight) & N/A & $0.45 \pm 0.04$ & N/A \\
    $\omega_{\text{social}}$ (NLP Quality Blend) & N/A & N/A & $0.55 \pm 0.05$ \\
    \bottomrule
  \end{tabular}
  \end{threeparttable}
\end{table}

\newpage
\subsection{Social Media Ablation Results}
Ablation studies in Table \ref{tab:social_ablation_appendix} are designed to confirm the individual effects of QA-BPE-nlp's quality-aware components. We distinguish the impacts of: (1) the multi-dimensional quality rewards (row 'w/o Quality'), (2) semantic coherence considerations (row 'w/o Semantic'), (3) noise robustness features (row 'w/o Noise'), and (4) adaptive parameter learning (row 'w/o Adaptive Params'). Analysis of the learned weights $w_j$ for the quality dimensions (as detailed with values in Appendix \ref{app:social_quality_full}) indicates varying importance across dimensions (e.g., orthogonality $\qorth$ and semantics $\qsem$ frequently receive higher weights across runs) and reward components $\lambda_i$, adapting to the specific task and dataset characteristics.

\begin{table}[htbp]
  \caption{Ablation Study for QA-BPE-nlp on TweetEval Sentiment. Values are means with 95\% confidence intervals over $n=10$ runs.}
  \label{tab:social_ablation_appendix}
  \centering
  \small
  \begin{tabular}{lcc}
    \toprule
    Configuration & TweetEval Score & Rel. Change (\%) \\
    \midrule
    \textbf{QA-BPE-nlp (Full)} & \textbf{74.5 $\pm$ 0.3} & \textbf{-} \\
    w/o RL Framework (Greedy $w_{ab}$) & 72.1 $\pm$ 0.4 & -3.2 \\
    w/o Quality ($\RQ=0$) & 71.5 $\pm$ 0.5 & -4.0 \\
    w/o Semantic ($\RS=0$) & 72.8 $\pm$ 0.3 & -2.3 \\
    w/o Noise ($\RN=0$) & 73.2 $\pm$ 0.4 & -1.7 \\
    w/o Vocab Eff ($\RV=0$) & 73.9 $\pm$ 0.3 & -0.8 \\
    w/o Adaptive Params ($\alpha, w_j$ fixed) & 71.8 $\pm$ 0.5 & -3.6 \\
    QualTok-nlp (Ablation Baseline) & 71.9 $\pm$ 0.4 & -3.5 \\
    \bottomrule
  \end{tabular}
\end{table}

\clearpage

\section{Dataset, Baseline, and Evaluation Details}
\label{app:experimental_setup_details}
This section supplements dataset descriptions, baseline methods, and evaluation metrics discussed in the main paper, providing further details necessary for understanding and reproducing the experimental results reported in Section \ref{sec:experiments}.
\subsection{Datasets and Reproducible Evaluation}
\label{app:datasets}
This subsection details the specific datasets, their versions, and relevant preprocessing steps or configurations used for the experiments reported in Section \ref{sec:experiments}. All datasets are publicly available or available under licenses for academic research.
\begin{itemize}
    \item \textbf{Genomics (QA-BPE-seq Experiments):}
        \begin{itemize}
            \item \textbf{Simulated Human Genomic Reads for Variant Calling, Reconstruction, and Ablations:}
                Paired-end sequencing reads (150bp) were generated at 30x coverage using the ART simulator (version 2.5.8, using the \texttt{art\_illumina} tool) \cite{huang2012art}. The simulation was based on the GRCh38 human reference genome (patch 13) and used the built-in HiSeq 2500 error profile (\texttt{-ss HS25}). To rigorously assess robustness in high-noise scenarios, as described in Section \ref{sec:genomics}, the default base error rates (both substitution and indel rates) of this profile were artificially doubled compared to the standard HiSeq 2500 profile. Key ART parameters included: \texttt{-p -l 150 -f 30 -m 400 -s 10}. A corpus of approximately 5GB of these synthetic reads was generated and used for training tokenizers, downstream model evaluations, and the ablation studies reported in Section \ref{sec:genomics}.
                \textit{Access:} The ART simulator is open-source and available at \url{https://www.niehs.nih.gov/research/resources/software/art/}. The GRCh38 reference genome can be obtained from public repositories such as NCBI GenBank or Ensembl.

            \item \textbf{Genome in a Bottle (GIAB) Truth Set for Variant Calling Evaluation:}
                Variant calling performance was benchmarked against the HG002 truth set (v4.2.1, GRCh38) \cite{zook2016extensive}.
                \textit{Access:} GIAB truth sets are publicly available from the NIST FTP site.

            \item \textbf{CAMI II Metagenome Benchmark for Taxonomic Classification:}
                Taxonomic classification accuracy was evaluated using the "Toy Human Microbiome Project" (short reads, Assembly Aug2019) dataset from the Second CAMI Challenge \cite{sczyrba2017critical}. This benchmark provides datasets with known community compositions and corresponding sequencing reads for performance assessment.
                \textit{Access:} CAMI II datasets are available through the official CAMI challenge website: \url{https://data.cami-challenge.org/participate}.
        \end{itemize}

    \item \textbf{Quantitative Finance (QAT-QF Experiments):}
        \begin{itemize}
            \item \textbf{Cryptocurrency Limit Order Book (LOB) Data:}
                High-frequency Limit Order Book (LOB) data for the BTC/USD trading pair was sourced from LOBSTER (\url{https://lobsterdata.com/}) \cite{huang2011lobster}, an academic data service. The experiments used reconstructed LOB snapshots at 10 levels for the first quarter of 2023 (Q1 2023). As detailed in Section \ref{sec:finance_experiment_results}, this dataset was split chronologically into 70\% for training, 15\% for validation, and 15\% for out-of-sample testing. Atomic elements for tokenization were defined as sequences of 5 consecutive LOB events, featurized as described in Appendix \ref{sec:finance_appendix}.
                \textit{Access:} LOBSTER provides sample data publicly, while full datasets are available under academic or commercial licenses.
        \end{itemize}

    \item \textbf{Social Media Text (QA-BPE-nlp Experiments):}
        \begin{itemize}
            \item \textbf{TweetEval Benchmark:}
                The TweetEval benchmark \cite{barbieri2020tweeteval} was employed for evaluating QA-BPE-nlp across a diverse set of tweet classification tasks. TweetEval provides a unified framework with standardized data splits (train, validation, test) and evaluation metrics for seven heterogeneous tasks, which are:
                \begin{itemize}
                    \item Emotion Recognition (SemEval-2018 Task 1 \cite{mohammad2018semeval})
                    \item Emoji Prediction (SemEval-2018 Task 2 \cite{barbieri2018semeval})
                    \item Irony Detection (SemEval-2018 Task 3 \cite{van2018semeval})
                    \item Hate Speech Detection (SemEval-2019 Task 5 \cite{basile-etal-2019-semeval})
                    \item Offensive Language Identification (SemEval-2019 Task 6 \cite{zampieri2019semeval})
                    \item Sentiment Analysis (SemEval-2017 Task 4 \cite{rosenthal2017semeval})
                    \item Stance Detection (SemEval-2016 Task 6 \cite{mohammad2016semeval})
                \end{itemize}
                As described in Section \ref{app:social_results}, experiments involved fine-tuning a pre-trained BERTweet-base model \cite{nguyen2020bertweet} on these tasks using different tokenization strategies.
                \textit{Access:} The TweetEval benchmark, including data access scripts and details for each constituent dataset, is available on GitHub: \url{https://github.com/cardiffnlp/tweeteval}. Access to the underlying tweet content typically requires hydration of tweet IDs and adherence to Twitter's Terms of Service and the respective dataset licenses.
        \end{itemize}
\end{itemize}
\subsection{Dataset and Release Plan}
\label{app:dataset_release_plan}
To enable foundation-model training on previously unusable noisy corpora, we will release:
\begin{itemize}
  \item \textbf{Tokenizer artifacts:} Final QA-Token vocabularies, merge tables, and $\theta_{\text{adapt}}$ for each domain (genomics, finance, social media) at multiple vocabulary sizes.
  \item \textbf{Foundation-model-ready corpora manifests:} Scripts and manifests to reconstruct large noisy pretraining corpora (including filtering and de-duplication), plus sampler configurations matching our 2B-subset tokenizer training protocol.
  \item \textbf{Evaluation suites:} Reproducible pipelines for genomics (variant calling, metagenomics), finance (prediction, volatility, regime, trading), and social media (TweetEval), along with the RL ablation harness.
  \item \textbf{Documentation and governance:} Licenses, data usage considerations, and guidelines for responsible use in high-impact applications (e.g., financial decision-making and clinical genomics).
\end{itemize}
All code and artifacts will be released under permissive academic licenses to maximize reproducibility and adoption.

\subsection{QA-Foundation: Noisy Pretraining Corpora Proposal}
\label{app:qa_foundation_dataset}
We propose QA-Foundation, a curated suite of extremely large, noisy corpora specifically designed to enable foundation-scale pretraining with explicit quality annotations and governance:
\begin{itemize}
  \item Genomics: multi-petabase metagenomic reads (SRA) with canonicalized metadata, Phred-quality distributions, duplication maps, contamination flags, and per-read provenance hashes. Quality channels include per-base Phred, platform, run, trimming logs, adapter contamination.
  \item Finance: multi-asset high-frequency LOB streams (equities, futures, crypto) with synchronized calendars, microstructure indicators (spreads, depth, order-imbalance), regime tags, and exchange-specific anomaly flags.
  \item Social/Web text: multi-platform user-generated text with timestamps, platform labels, de-identified stable author hashes, normalization annotations (hashtags, mentions, URLs), and noise transformations (variant clusters, repetition, keyboard-distance confusion matrices).
\end{itemize}
Each domain provides standardized schemas, quality channels, and sampling manifests to reproduce tokenizer training at multiple scales (e.g., 0.1\%, 1\%, 5\%) and to support fair comparisons. Scripts produce manifests, deduplication indices (MinHash/LSH), and quality audit reports. Governance includes explicit licenses, intended-use statements, and red-team risk assessments. We will release:
\begin{itemize}
  \item Tokenizer-ready shards with checksums and integrity manifests
  \item Quality channel extractors (open-source) and validation suites
  \item Reproducible samplers that match our 2B-base subset protocol for genomics and analogous budgets for other domains
\end{itemize}

\subsection{Baseline Methods}
\label{app:baseline_implementations}
\label{app:baselines_details_experiments_2}
The following baseline tokenization methods were implemented and configured for rigorous comparison against the proposed QA-Token variants, as presented in Section \ref{sec:experiments}.

\begin{itemize}
    \item \textbf{Standard Byte Pair Encoding (BPE)} \cite{sennrich2016neural}: The conventional frequency-based merging algorithm. For genomics and social media experiments, this was implemented using the HuggingFace `tokenizers` library (version 0.15.0), specifically configured with $tokenizers.models.BPE(unk\_token="[UNK]", min\_frequency=2)$, unless stated otherwise. For quantitative finance experiments, a comparable standard BPE implementation was used.
    \item \textbf{SentencePiece} \cite{kudo2018sentencepiece}: An unsupervised text tokenizer and detokenizer. For genomics and social media experiments, SentencePiece (version 0.1.99) was used in its byte-level BPE mode, operating directly on raw text.
    \item \textbf{WordPiece} \cite{wu2016google}: The subword tokenization algorithm famously used in BERT. It iteratively builds a vocabulary by merging pairs that maximize the likelihood of the training data under a unigram language model assumption.
    \item \textbf{DNABERT k-mer} \cite{ji2021dnabert}: For experiments in the genomics domain, fixed k-mer tokenization was employed as a strong baseline, specifically using 6-mers. This aligns with common practice in models like DNABERT.
    \item \textbf{Symbolic Aggregate approXimation (SAX)} \cite{lin2003symbolic}: A well-established symbolic representation method for time series data, applied in quantitative finance experiments. The mid-price series was discretized using a Piecewise Aggregate Approximation (PAA) window size of 16 and an alphabet size of 8.
    \item \textbf{Bag-of-SFA-Symbols (BOSS)} \cite{schafer2015boss}: A time series classification algorithm that uses Symbolic Fourier Approximation (SFA) to generate symbolic words (tokens). This was used as a baseline in the quantitative finance domain, applied to the mid-price series.
    \item \textbf{QualTok (Ablation Baseline)}: As described in Section \ref{sec:experiments}, QualTok serves as an ablation baseline for QA-Token. It employs a simplified quality-aware merge score, $w_{ab} \propto \frac{f(a, b)}{f(a)f(b) + \epsilon_f} \cdot \left(\frac{q_a+q_b}{2} + \epsilon_Q \right)^\alpha$, but critically omits the reinforcement learning policy optimization for merge sequences and the full adaptive learning loop for complex $\theta_{\text{adapt}}$ parameters beyond tuning $\alpha$. Merge operations are typically performed greedily based on this score.
\end{itemize}
For all baseline methods, we select essential hyperparameters, such as the target vocabulary size (which typically corresponds to a predefined number of merge operations, e.g., 16,000 or 32,000, as specified per domain in Section \ref{sec:experiments}), based on common practices in the literature \cite{sennrich2016neural, kudo2018sentencepiece, wu2016google, devlin2019bert, brown2020language, ji2021dnabert}, specific recommendations from the original implementations of these methods, or by identifying the best-performing configuration on a held-out validation set from a systematic sweep of reasonable values to ensure robust comparisons. 

\newpage
\subsection{Evaluation Metrics}
\label{app:eval_metrics_details}
The performance of QA-Token and baseline methods was assessed using the following domain-specific metrics, corresponding to the results presented in Section \ref{sec:experiments}.

\begin{itemize}
    \item \textbf{Genomics:}
        \begin{itemize}
            \item \textbf{Variant Calling:} Performance was measured by F1-score, precision, and recall against the GIAB truth sets. These metrics were computed using the `hap.py` tool (version 0.3.14), available at \url{https://github.com/Illumina/hap.py}.
            \item \textbf{Taxonomic Classification (Metagenomics):} For the CAMI II benchmark, performance was primarily assessed using classification accuracy (specifically, the F1-score for overall classification performance, as reported in Table \ref{tab:genomics_results}).
            \item \textbf{Sequence Reconstruction Loss:} The quality of token representations was also evaluated by training Transformer-based autoencoder models and measuring the reconstruction loss (e.g., cross-entropy for discrete tokens) on a held-out test set.
        \end{itemize}
        \textbf{Variant Calling Model Architecture:} The variant calling evaluation uses a Transformer encoder that takes token embeddings as input features. The model outputs per-position variant probabilities (SNV, insertion, deletion, reference). Training uses cross-entropy loss against GIAB HG002 labels. This approach evaluates how well tokenization preserves variant-informative sequence features in the learned representations, with evaluation performed using the \texttt{hap.py} benchmarking tool (v0.3.14).
    \item \textbf{Quantitative Finance:}
        \begin{itemize}
            \item \textbf{Return Prediction Accuracy:} The percentage of correctly predicted signs for future (e.g., 5-minute ahead) mid-price returns.
            \item \textbf{Volatility Forecasting RMSE:} The Root Mean Squared Error between the predicted 5-minute volatility and the realized volatility (computed from higher-frequency data).
            \item \textbf{Market Regime Identification Accuracy:} The accuracy achieved in classifying time periods into discrete market states (e.g., two states identified by a GARCH-HMM).
            \item \textbf{Trading Performance:} The primary metric was the annualized Sharpe Ratio \cite{sharpe1994sharpe} achieved by a PPO-based trading agent operating on the tokenized data. A transaction cost of 5 basis points per trade was incorporated. Additional performance metrics, such as Maximum Drawdown (MDD) and Calmar Ratio, were also monitored (see Appendix \ref{app:finance_methodology} for further details).
        \end{itemize}
    \item \textbf{Social Media Text:}
        \begin{itemize}
            \item Performance on the seven TweetEval benchmark tasks was measured using the official evaluation metric specified by the benchmark organizers for each respective task \cite{barbieri2020tweeteval}. These metrics are:
                \begin{itemize}
                    \item Emoji Prediction: Accuracy (Acc)
                    \item Emotion Recognition: Macro F1-score (F1 M)
                    \item Hate Speech Detection: Macro F1-score (F1 M)
                    \item Irony Detection: Accuracy (Acc)
                    \item Offensive Language Identification: Macro F1-score (F1 M)
                    \item Sentiment Analysis: Macro Recall (Rec M)
                    \item Stance Detection: Average F1-score across topics (F1 Avg)
                \end{itemize}
        \end{itemize}
\end{itemize}
All reported experimental results in Section \ref{sec:experiments} represent the mean and 95\% confidence interval over $n=10$ independent runs to ensure robustness and allow for assessment of variability.

\subsection{Code Availability}
\label{app:code_availability}

We will release the QA-Token framework on GitHub under an MIT license. The repository includes source code, configuration files, pre-trained models, and reproducibility scripts for all experiments.
\clearpage
\subsection{Approximating QA-Token: Towards Computationally Efficient Quality-Awareness}
\label{subsec:future_work_lightweight_qatoken}

The learning framework of QA-Token has high computational costs due to both RL and adaptive learning stages. Future work will explore computationally lighter approximations. A starting point is our ablation baseline, QualTok, which uses a greedy merge strategy based on the quality-aware score $w_{ab}$ (Equation \ref{eq:qa_merge_score_pmi}) without explicit RL policy optimization, bypassing the costs of Stage 1 RL.

Further cost reduction can be achieved by:
\begin{enumerate}
    \item \textbf{Streamlined Adaptive Parameter Learning for Greedy Merges:} Instead of full RL, we can focus on adaptively learning a refined set of parameters $\theta_{\text{adapt}}^*$ (e.g., $\alpha$, quality weights $w_j$, simplified reward weights $\lambda_j$) that directly optimize the greedy $w_{ab}$-guided tokenization for downstream tasks. This retains the core quality-aware adaptability while significantly reducing complexity compared to learning an RL policy. The Gumbel-Softmax based learning (Stage 2) would optimize $\theta_{\text{adapt}}$ for these greedy merges, possibly using simplified composite logits.
    \item \textbf{Policy Distillation:} If the RL policy $\pi_{\theta_\pi}^*$ captures complex merge dependencies, the computational overhead at deployment can be mitigated. A compact "student" model (e.g., a smaller neural network or decision tree) can be trained via policy distillation \cite{hinton2015distilling, rusu2016policy} to mimic the decisions of a larger, pre-trained "teacher" RL agent, offering faster vocabulary construction.
    \item \textbf{Surrogate-Assisted Adaptive Learning:} The optimization of $\theta_{\text{adapt}}$ (Stage 2) can be accelerated by using cheaper-to-evaluate surrogate models \cite{jones1998efficient} to approximate the downstream task loss $L_{\text{task}}$, reducing the need for frequent, costly end-to-end evaluations with the full downstream model.
    \item \textbf{Transfer and Meta-Learning for $\theta_{\text{adapt}}$:} Leveraging learned $\theta_{\text{adapt}}$ parameters from one task or dataset as initializations for others (as in Algorithm \ref{alg:meta_learn_adapt}) can substantially reduce the training burden for new applications.
\end{enumerate}

\clearpage
\subsection{Extended TweetEval Benchmarking Methodology}
\label{app:tweeteval_full_results}
\label{app:social_results}
This section describes the comprehensive TweetEval benchmarking methodology. Results are reported in Table~\ref{tab:tweeteval_full_planned}.

\textbf{Datasets and Evaluation Framework:} TweetEval \cite{barbieri2020tweeteval} provides a unified framework for evaluating models on seven heterogeneous tweet classification tasks, each with fixed training, validation, and test splits. This allows for standardized comparison across different approaches. The seven tasks are: Emotion Recognition \cite{mohammad2018semeval} (4 labels: anger, joy, sadness, optimism), Emoji Prediction \cite{barbieri2018semeval} (20 emoji labels), Irony Detection \cite{van2018semeval} (2 labels: irony, not irony), Hate Speech Detection \cite{basile-etal-2019-semeval} (2 labels: hateful, not hateful), Offensive Language Identification \cite{zampieri2019semeval} (2 labels: offensive, not offensive), Sentiment Analysis \cite{rosenthal2017semeval} (3 labels: positive, neutral, negative), and Stance Detection \cite{mohammad2016semeval} (3 labels: favour, neutral, against, across five topics).
For each task, we report performance using the unified evaluation metrics specified by the TweetEval benchmark.
Table \ref{tab:tweeteval_full_planned} provides the baseline comparison framework. The official metric for each task as defined by TweetEval (also see \textit{https://github.com/cardiffnlp/tweeteval} for details) is reported.

\begin{table}[H]
  \caption{TweetEval Baseline Comparison Framework.}
  \label{tab:tweeteval_full_planned}
  \centering
  \begin{threeparttable}
  \scriptsize
  \begin{tabular}{lcccccccc}
    \toprule
    Model & Emoji & Emotion & Hate & Irony & Offensive & Sentiment & Stance & ALL(TE) \\
    \midrule
    BERTweet & 33.4 & 79.3 & \textbf{56.4} & \textbf{82.1} & 79.5 & 73.4 & 71.2 & \textbf{67.9} \\
    TimeLMs-2021 & \textbf{34.0} & \textbf{80.2} & 55.1 & 64.5 & \textbf{82.2} & \textbf{73.7} & \textbf{72.9} & 66.2 \\
    RoBERTa-Retrained & 31.4 & 78.5 & 52.3 & 61.7 & 80.5 & 72.8 & 69.3 & 65.2 \\
    RoBERTa-Base & 30.9 & 76.1 & 46.6 & 59.7 & 79.5 & 71.3 & 68.0 & 61.3 \\
    RoBERTa-Twitter & 29.3 & 72.0 & 49.9 & 65.4 & 77.1 & 69.1 & 66.7 & 61.4 \\
    FastText & 25.8 & 65.2 & 50.6 & 63.1 & 73.4 & 62.9 & 65.4 & 58.1 \\
    LSTM & 24.7 & 66.0 & 52.6 & 62.8 & 71.7 & 58.3 & 59.4 & 56.5 \\
    SVM & 29.3 & 64.7 & 36.7 & 61.7 & 52.3 & 62.9 & 67.3 & 53.5 \\
    \midrule
    \textbf{QA-BPE-nlp + BERTweet} & \textbf{34.2} & \textbf{81.5} & \textbf{58.8} & \textbf{82.9} & \textbf{83.0} & \textbf{75.1} & \textbf{73.5} & \textbf{70.0} \\
    \bottomrule
  \end{tabular}
  \end{threeparttable}
\end{table}

\end{document}